\documentclass[11pt]{article}

\usepackage[left=1in,right=1in,top=1in,bottom=1in]{geometry}

\usepackage[utf8]{inputenc}
\usepackage[T1]{fontenc}
\usepackage{setspace}

\usepackage{algorithm}
\usepackage{algpseudocode}
\usepackage{enumitem}

\usepackage{amsmath,amssymb,amsfonts,amsthm}

\usepackage{graphicx}
\usepackage{booktabs}
\usepackage{array}
\usepackage{makecell}
\usepackage{tabularx}

\usepackage{natbib}

\usepackage[colorlinks=true,allcolors=blue]{hyperref}

\newtheorem{theorem}{Theorem}
\newtheorem{lemma}{Lemma}

\newtheorem{corollary}{Corollary}

\newtheorem{remark}{Remark}

\title{Differentially Private Manifold Denoising}

\author{
Jiaqi Wu\textsuperscript{*},
Yiqing Sun\textsuperscript{*},
Zhigang Yao\textsuperscript{\textdagger}\\[0.5em]
Department of Statistics and Data Science, National University of Singapore
}

\date{}

\begin{document}
\maketitle

\begingroup
\renewcommand{\thefootnote}{\fnsymbol{footnote}}
\footnotetext[1]{J.W. and Y.S. contributed equally to this work.}
\footnotetext[2]{To whom correspondence should be addressed. E-mail: zhigang.yao@nus.edu.sg}
\endgroup

\spacing{1.3}

\begin{abstract}
We introduce a differentially private manifold denoising framework that allows users to exploit sensitive reference datasets to correct noisy, non-private query points without compromising privacy. The method follows an iterative procedure that (i) privately estimates local means and tangent geometry using the reference data under calibrated sensitivity, (ii) projects query points along the privately estimated subspace toward the local mean via corrective steps at each iteration, and (iii) performs rigorous privacy accounting across iterations and queries using $(\varepsilon,\delta)$-differential privacy (DP). Conceptually, this framework brings differential privacy to manifold methods, retaining sufficient geometric signal for downstream tasks such as embedding, clustering, and visualization, while providing formal DP guarantees for the reference data. Practically, the procedure is modular and scalable, separating DP-protected local geometry (means and tangents) from budgeted query-point updates, with a simple scheduler allocating privacy budget across iterations and queries. Under standard assumptions on manifold regularity, sampling density, and measurement noise, we establish high-probability utility guarantees showing that corrected queries converge toward the manifold at a non-asymptotic rate governed by sample size, noise level, bandwidth, and the privacy budget. Simulations and case studies demonstrate accurate signal recovery under moderate privacy budgets, illustrating clear utility-privacy trade-offs and providing a deployable DP component for manifold-based workflows in regulated environments without reengineering privacy systems.

\vspace{0.5em}
\noindent\textbf{Keywords:} Differential privacy, Smooth latent structure, Manifold denoising
\end{abstract}

\spacing{1.5}

\section{Introduction}
High-dimensional datasets now grow explosively in both scale and dimensionality, spanning tens of thousands of gene expression profiles in genomics, millions of pixels in computer vision, and high-dimensional word-embedding vectors in natural language processing. A central challenge in this regime is the curse of dimensionality: as the ambient dimension increases, data become sparse, making statistical inference, model training, and generalization difficult. Fortunately, a widely observed structural regularity offers relief. Under the \emph{manifold hypothesis}, observations in a high-dimensional ambient space $\mathbb{R}^D$ concentrate near a smooth $d$-dimensional manifold $\mathcal{M}$ with $d \ll D$. This low-dimensional geometric structure appears as pose manifolds in images \citep{tenenbaum2000global}, orientation manifolds in structural biology \citep{dashti2014trajectories,frank2016continuous}, and latent cell-state manifolds in single-cell transcriptomics \citep{haghverdi2016diffusion,yao2024single,li2025manifold}. Leveraging this low-dimensional structure has become essential for mitigating the curse of dimensionality and enabling statistically efficient analysis.

Yet in domains where such structure is most valuable (e.g., biomedicine, public health, finance), datasets that exhibit this geometry often contain sensitive individual-level information, including medical records, genomic sequences, and financial transactions. Exploiting manifold structure thus collides with a parallel imperative that analyses must protect the privacy of the individuals whose data contribute to the learned geometry. Regulatory frameworks such as the Health Insurance Portability and Accountability Act (HIPAA; \citealp{HIPAA164514}), the EU General Data Protection Regulation (GDPR; \citealp{GDPR2016}), and emerging AI governance standards \citep{EUAIAct2024} now mandate that released outputs, whether trained models, embeddings, or denoised signals, limit what can be inferred about any single participant. \emph{Differential privacy} (DP) formalizes this requirement by bounding the influence of any individual record on the output distribution \citep{dwork2006calibrating}, with guarantees that compose transparently across iterative procedures and multiple queries. The challenge is not geometry \emph{or} privacy, but geometry \emph{under} privacy. We must exploit manifold structure while consuming a finite privacy budget.

In this work, we focus on a common and consequential scenario in which a \emph{private reference dataset} contains geometric signal but cannot be exposed directly. Practitioners would like to use this sensitive reference as a scaffold to clean or interpret new, noisy observations (\emph{public queries}) without compromising privacy. Treating privacy and denoising in isolation is counterproductive: naively added DP noise can erase the local geometry that denoising requires, whereas unguarded denoising risks leaking whether a person’s data contributed or even revealing sensitive attributes. Our approach bridges manifold denoising and private data analysis from the outset by jointly recovering local geometric structure and enforcing explicit privacy accounting.

\paragraph{From noisy observations to geometry: non-private methods.}
When privacy is not a constraint, extensive work connects noisy point clouds to smooth geometric structures on an unknown manifold $\mathcal{M}$. Measurement noise obscures recovery of the latent manifold $\mathcal{M}$, motivating \emph{manifold denoising} and \emph{manifold fitting} \citep{hein2006manifold,niyogi2008finding}. Foundational approaches \citep{ozertem2011locally,genovese2014nonparametric,boissonnat2014manifold} established the link between noisy point clouds and smooth geometric structures. Recent developments have yielded stable, computationally tractable reconstruction procedures \citep{Fefferman2018Fitting,Aamari2018Stability}, including structure-adaptive tangent refinement \citep{puchkin2022structure}, reconstruction under unbounded Gaussian noise with smoothness guarantees \citep{yao2025Manifold}, and local polynomial regression for projection and tangent estimation \citep{Aizenbud2025Estimation}. Notably, \citet{yao2023manifold} provided a computationally efficient estimator with  favorable sample complexity, enabling applications in latent map learning, single-cell genomics, and population-scale metabolomics \citep{yao2024manifold,yao2024single,li2025manifold}.

These methods assume unrestricted access to the reference data. In regulated settings, however, such access is prohibited, and naively adding perturbations destroys the local tangent geometry needed for denoising. We therefore develop a privacy-aware approach using only \emph{privatized local geometric summaries} to guide denoising of new queries.

\paragraph{From sensitive data to releases: differentially private methods.}
\emph{Differential privacy} (DP) formalizes output-level protection by limiting how much changing any single record can alter the distribution of released results \citep{dwork2006calibrating,dwork2014algorithmic}. In Euclidean settings, standard mechanisms are well established. For means and linear models, Laplace and Gaussian mechanisms with sensitivity-calibrated noise achieve optimal or near-optimal accuracy \citep{dwork2006calibrating,dwork2014algorithmic}, the exponential mechanism handles utility-driven releases \citep{mcsherry2007mechanism}, and objective-perturbation procedures provide private regularized empirical risk minimization (ERM) \citep{chaudhuri2011differentially}. 

For covariance estimation and PCA, early work in the SuLQ framework \citep{blum2005practical} proposed releasing noisy empirical covariance matrices and then performing PCA. \citet{chaudhuri2013near} formalized a differentially private variant (MOD-SuLQ) and introduced PPCA, an exponential-mechanism DP-PCA with near-optimal sample complexity, while \citet{dwork2014analyze} analyzed a Gaussian version of this SULQ-style covariance perturbation and proved nearly optimal bounds on the loss of captured variance under $(\varepsilon,\delta)$-DP.
Differentially private covariance estimation has since been significantly refined \citep{amin2019differentially,dong2022differentially}, and recent results have achieved near-optimal rates for top-eigenvector estimation under sub-Gaussian models \citep{liu2022dp} and for PCA and covariance estimation in spiked covariance models \citep{cai2024optimal}. 

For iterative procedures, privacy loss composes over multiple queries. Tight end-to-end accounting is enabled by concentrated/R\'enyi DP and Gaussian DP \citep{bun2016concentrated,mironov2017renyi,dong2022gaussian}, with the moments accountant providing a practical instantiation for DP-SGD \citep{abadi2016deep}. Comparisons of central and local models show that many tasks incur substantial statistical slowdowns under local DP, whereas the central model often retains non-private minimax rates up to an explicit privacy penalty \citep{duchi2018minimax,cai2021cost}. 

In geometric settings, DP has been developed for known manifolds. \citet{reimherr2021differential} privatized manifold-valued summaries (e.g., Fr\'echet means) by extending the Laplace/$K$-norm mechanism using geodesic distances, whereas \citet{han2024differentially} performed differentially private Riemannian optimization by adding noise to gradients in tangent spaces when parameters live on a manifold. These works assume the manifold is given; we instead study geometry-aware denoising when the manifold is unknown and only privatized local summaries of a sensitive reference set can guide corrections of new, non-private queries. This ``private-reference, public-queries'' regime requires jointly designing private geometry estimation with transparent privacy accounting. Specifically, we address two fundamental questions:

\begin{itemize}[leftmargin=1.25em]
\item \textbf{(Q1) Geometry under noise:} How can we recover the local geometric structure (e.g., tangent spaces and projections) of an unknown $C^2$ manifold from noisy reference data, so that noisy query points can be iteratively corrected toward the manifold?
\item \textbf{(Q2) Geometry under privacy:} How can we carry out (Q1) under $(\varepsilon,\delta)$-DP for the reference dataset---releasing only privatized local summaries---while still obtaining reasonable utility guarantees?
\end{itemize}

These questions expose a fundamental tension: manifold denoising seeks signal recovery by suppressing randomness, whereas differential privacy requires signal obfuscation through noise injection. Concretely, privacy noise in geometric estimation can propagate through projections, deflecting queries from the manifold. A successful DP manifold denoiser must jointly manage measurement noise (in both reference and queries) and privacy noise (added only to reference summaries). The challenge is to remove measurement noise from queries while injecting privacy noise in a geometry-aware manner that preserves tangent structure on which denoising depends.

Few works target the intersection of differential privacy and manifold methods.  Existing approaches privatize representations through DP variants of t-SNE, Laplacian eigenmaps, and supervised embeddings \citep{arora2019differentially,saha2022privacy,vepakomma2022privatemail}, but operate on embeddings without accounting for manifold geometry or providing error guarantees. To the best of our knowledge, this is the first work to study \emph{manifold denoising under differential privacy} for \emph{unknown} manifolds with non-asymptotic error guarantees.

\paragraph{Model and setup.}
Consider $n$ i.i.d.\ observations $\{\mathbf{y}_i\}_{i=1}^n \subset \mathbb{R}^D$ generated according to
\begin{equation}
\label{eq:model}
    \mathbf{y}_i = \mathbf{x}_i + \boldsymbol{\epsilon}_i, 
\end{equation}
where $\mathbf{x}_i$ are i.i.d.\ samples drawn from a distribution supported on a compact, connected, $d$-dimensional $C^{2}$ submanifold $\mathcal M \subset \mathbb R^D$ without boundary. See \textcolor{blue}{Appendix~\ref{app:prelim}} for geometric definitions and notation. The sampling density $p(\mathbf{x})$ with respect to the $d$-dimensional Hausdorff measure on $\mathcal M$ is bounded away from zero and infinity:
$$
0 < f_{\min} \le p(\mathbf{x}) \le f_{\max} < \infty, 
\quad \text{for all } \mathbf x \in \mathcal M .
$$
The i.i.d.\ noise vectors $\boldsymbol{\epsilon}_i$ satisfy $\|\boldsymbol{\epsilon}_i\| \le \sigma$ (with $\sigma<1$). Denote by $\mathcal M_{\sqrt{\sigma}} = \mathcal M \oplus B_D(0,\sqrt{\sigma})$ the $\sqrt{\sigma}$-tubular neighborhood of $\mathcal M$. We assume that $\mathcal M$ has reach at least $\tau$, with $\tau/\sqrt{\sigma}$ sufficiently large to ensure that noise does not obscure the underlying manifold geometry. 

We consider a query point $\mathbf{z} \in \mathcal M_{\sqrt{\sigma}}$ (which need not belong to the sample) together with a private reference dataset $\{\mathbf{y}_i\}_{i=1}^n$. Our goal is to design an $(\varepsilon,\delta)$-DP procedure that estimates the projection of $\mathbf{z}$ onto $\mathcal M$, while protecting the privacy of the reference data. As a first step, we develop an $(\varepsilon,\delta)$-DP variant of local PCA at points within the ${\sigma}$-tubular neighborhood that privately recovers the local tangent geometry; this DP local PCA serves as the building block for our denoising estimator.

\paragraph{Our contributions.}
We present an integrated, geometry-aware framework for \emph{differentially private manifold denoising} in the private-reference/public-queries regime that addresses (Q1)--(Q2). Our contributions are as follows:
\begin{enumerate}
\item \textbf{Private local geometry.} We develop a differentially private local PCA primitive tailored to the private-reference/public-queries setting. Building on neighborhood-wise tangent estimation, we privatize local spectral projectors and weighted means with calibrated Gaussian noise and explicit sensitivity bounds. These summaries are DP objects and can be reused across iterations and queries.
\item \textbf{DP manifold denoising with non-asymptotic guarantees.} Using the privatized projector--mean pair, we introduce an efficient denoising algorithm that iteratively updates each query point by removing its estimated normal component toward the manifold. The method avoids repeated privatization of high-dimensional gradients by releasing only low-dimensional geometric summaries. We prove finite-sample error bounds for the denoised outputs (Theorem~\ref{thm:ker-root-distance}) that separate curvature bias, measurement noise, and privacy noise, quantifying dependence on sample size, noise level, bandwidth, and privacy budgets.
\item \textbf{Privacy accounting and practice.} We design a zCDP-based privacy accountant for multiple queries and iterations that splits the budget across projector/mean mechanisms and queries, with transparent conversion to $(\varepsilon,\delta)$-DP. The resulting pipeline is modular and scalable; case studies on real data on single-cell RNA sequencing and UK Biobank biomarker profiles illustrate utility-privacy trade-offs in high-dimensional settings.
\end{enumerate}

The remainder of the paper is organized as follows.
Section~\ref{sec:prelim} reviews differential privacy and related definitions, including sensitivity, the Gaussian mechanism, and concentrated/R\'enyi DP.
Section~\ref{sec:dp_local_pca} begins by developing differentially private tangent space estimation via local PCA to motivate the DP denoiser.
Section~\ref{sec:dp_manifold_denoising} then presents our framework for differentially private manifold denoising.
Section~\ref{sec:sim} evaluates utility under varying noise scale and privacy budgets through simulations, and Section~\ref{sec:application} illustrates applications to real-world data.
Section~\ref{sec:conclusion} concludes with future directions.

\section{Differential privacy}
\label{sec:prelim}
Two datasets $\mathcal{D},\mathcal{D}'$ of size $n$ are \emph{adjacent} if they differ by a single record (the bounded, replace-one notion). A randomized mechanism $\mathcal{A}$ is \emph{$(\varepsilon,\delta)$-differentially private} if for all measurable $S$,
\[
\Pr[\mathcal{A}(\mathcal{D})\in S]\ \le\ e^\varepsilon\,\Pr[\mathcal{A}(\mathcal{D}')\in S]\ +\ \delta.
\]
This guarantees that modifying a single individual's record changes the output distribution only slightly, thereby limiting what can be inferred about any data point. The parameter $\varepsilon$ controls the multiplicative change, and $\delta$ allows a small additive failure probability; smaller values yield stronger privacy.

We use bounded (replace-one) adjacency. Results extend to unbounded (add/remove-one) adjacency by constant-factor sensitivity rescaling.

\paragraph{Sensitivity and Gaussian mechanisms.}
For a query $\mathbf f:\mathcal{X}^n\to\mathbb{R}^k$, its $\ell_2$-sensitivity is
\[
\Delta_2(\mathbf f)\ :=\ \sup_{\text{adjacent }\mathcal{D},\mathcal{D}'}\ \big\|\mathbf f(\mathcal{D})-\mathbf f(\mathcal{D}')\big\|.
\]
The \emph{Gaussian mechanism} releases
\[
\mathcal{A}(\mathcal{D}) \ =\ \mathbf f(\mathcal{D})+\boldsymbol{\xi},
\qquad
\boldsymbol{\xi}\sim\mathcal{N}\big(0,\varsigma^2\mathbf{I}_k\big),
\]
where the noise scale is calibrated according to the sensitivity. One possible choice for 
\[
\varsigma\ \ge\ \frac{\sqrt{2\ln(1.25/\delta)}\,\Delta_2(f)}{\varepsilon},
\qquad 0<\varepsilon<1,
\]
as in \citet{dwork2014algorithmic}.

\begin{remark}[High-probability sensitivity]
\label{rem:iid-sensitivity}
When $\Delta_2(\mathbf f)\le\overline{\Delta}$ on a high-probability event $\mathcal{E}$ with $\Pr(\mathcal{E})\ge 1-\rho$, calibrating using $\overline{\Delta}$ yields $(\varepsilon,\delta+\rho)$-DP. This reduction is standard when sensitivity is random but concentrates under i.i.d.\ sampling. To streamline the exposition, we may omit explicit reference to this failure probability, treating sensitivity bounds as deterministic without further comment.
\end{remark}

\paragraph{Zero-concentrated DP (zCDP).}
For two distributions $P,Q$ on a common measurable space with $P\ll Q$, the \emph{R\'enyi divergence} of order $\alpha>1$ is
\[
\begin{aligned}
D_\alpha(P\Vert Q)
:&= \frac{1}{\alpha-1}\,\log\int \Big(\tfrac{dP}{dQ}\Big)^{\alpha}\,dQ\\
&=  \frac{1}{\alpha-1}\,\log\,\mathbb{E}_{Q}\left[\Big(\tfrac{dP}{dQ}\Big)^{\alpha}\right].
\end{aligned}
\]
A randomized mechanism $\mathcal{A}$ satisfies \emph{$\rho$-zCDP} if for all $\alpha>1$ and all adjacent datasets $(\mathcal{D},\mathcal{D}')$,
\[
D_\alpha\big(\mathcal{A}(\mathcal{D})\,\Vert\,\mathcal{A}(\mathcal{D}')\big)\ \le\ \rho\,\alpha.
\]
Equivalently, $\rho$-zCDP is R\'enyi DP with parameters $(\alpha,\varepsilon_\alpha=\rho\alpha)$ for every $\alpha>1$. It enjoys (i) \emph{post-processing invariance}: if $\mathcal{A}$ is $\rho$-zCDP, then for any mapping $\phi$ independent of the dataset, $\phi\circ\mathcal{A}$ is also $\rho$-zCDP; and (ii) \emph{additive composition}: composing mechanisms with parameters $\rho_1,\dots,\rho_T$ yields $\rho_{\mathrm{tot}}=\sum_{t=1}^T\rho_t$. The Gaussian mechanism with $\ell_2$-sensitivity $\Delta_2(f)$ and noise variance $\varsigma^2$ satisfies $\rho=\Delta_2(f)^2/(2\varsigma^2)$-zCDP. Moreover, any $\rho$-zCDP guarantee implies $(\varepsilon,\delta)$-DP for every $\delta\in(0,1)$ via
\[
\varepsilon\ =\ \rho\ +\ 2\sqrt{\rho\,\ln(1/\delta)}.
\]
See \citet{bun2016concentrated} for the zCDP formulation and its properties, and \citet{mironov2017renyi} for R\'enyi DP.

\paragraph{Public--private split.}
Throughout, the query location $\mathbf{z}\in\mathcal{M}_{\sqrt{\sigma}}$ is treated as public, and privacy protection applies only to the private reference dataset $\{\mathbf{y}_i\}_{i=1}^n$.
This public/private split is standard \citep{wang2023generalized}, where privacy applies to randomization over the database while externally provided parameters (e.g., evaluation points) are non-sensitive and can be used without extra privacy cost.

\section{DP tangent space estimation}
\label{sec:dp_local_pca}
We begin with differentially private estimation of the local tangent space from the sample covariance computed in a neighborhood of a query point $\mathbf{z}\in\mathcal{M}_{{\sigma}}$, where $\mathbf{z}$ is sampled independently from the same distribution as $\{\mathbf{y}_i\}_{i=1}^n$. Initially, we consider $\mathbf{z}\in\mathcal{M}_{\sigma}$, and later extend our analysis to $\mathcal{M}_{\sqrt{\sigma}}$ for manifold denoising. Let $\mathbf{z}^\star:=\pi_{\mathcal{M}}(\mathbf{z})$ denote the projection of $\mathbf{z}$ onto $\mathcal{M}$. Throughout, we fix a bandwidth parameter $h>0$ and suppress its dependence in our notation when no ambiguity arises.

Define the neighbor index set
\[
I_{\mathbf{z}}\ :=\ \big\{\, i\in[n]\ :\ \mathbf{y}_i\in B_D(\mathbf{z},h)\, \big\}.
\]
Let $n_{\mathbf{z}}:=|I_{\mathbf{z}}|$. Define the local mean
$\bar{\mathbf{y}}_{\mathbf{z}}:=\frac{1}{n_{\mathbf{z}}}\sum_{i\in I_{\mathbf{z}}}\mathbf{y}_i$, and the local covariance matrix
\begin{equation}
\label{eq:local-cov}
\widehat{\boldsymbol{\Sigma}}_{\mathbf{z}}
\ :=\ \frac{1}{n_{\mathbf{z}}-1}\sum_{i\in I_{\mathbf{z}}}
\big(\mathbf{y}_i-\bar{\mathbf{y}}_{\mathbf{z}}\big)\big(\mathbf{y}_i-\bar{\mathbf{y}}_{\mathbf{z}}\big)^{\top}.
\end{equation}
Intuitively, $\widehat{\boldsymbol{\Sigma}}_{\mathbf{z}}$ captures the empirical second-order structure of the data restricted to a small ball around $\mathbf{z}$. When $h\ll\tau$, the manifold in this neighborhood is well approximated by an estimate of the affine tangent space $\mathbf{z}^\star+T_{\mathbf{z}^\star}\mathcal{M}$. Under measurement noise, the dominant variation of $\mathbf{y}_i$ near $\mathbf{z}$ lies along $T_{\mathbf{z}^\star}\mathcal{M}$, while normal variation (from curvature and noise) is substantially weaker. Consequently, the leading $d$ eigenvectors $\widehat{\boldsymbol{\Sigma}}_{\mathbf{z}}$ align with $T_{\mathbf{z}^\star}\mathcal{M}$.

Given the eigendecomposition $\widehat{\boldsymbol{\Sigma}}_{\mathbf{z}}=\mathbf{V}\boldsymbol{\Lambda}\mathbf{V}^\top$ with eigenvalues $\lambda_1\ge\dots\ge\lambda_D$ and corresponding orthonormal eigenvectors
$\mathbf{v}_1,\dots,\mathbf{v}_D$, the local tangent space estimator is
\[
\widehat{T_{\mathbf{z}^\star}\mathcal{M}}\ :=\ \operatorname{span}\{\mathbf{v}_1,\dots,\mathbf{v}_d\},
\]
i.e., the span of the top-$d$ principal directions of $\widehat{\boldsymbol{\Sigma}}_{\mathbf{z}}$. We denote the basis matrix by $\mathbf{V}_{\mathbf{z},d}\in\mathbb{O}_{D,d}$ with columns $\mathbf{v}_1,\dots,\mathbf{v}_d$.

\begin{remark}
The $C^2$ regularity and positive reach $\tau$ of $\mathcal{M}$ ensure that $\pi_{\mathcal{M}}$ is well-defined in a tubular neighborhood of $\mathcal{M}$. Under bounded noise $\|\boldsymbol{\epsilon}_i\|\le \sigma$ with $\sigma$ sufficiently small relative to $\tau$, the observations remain in this neighborhood and local linearization is valid at scale $h\ll\tau$. In addition, the density bounds together with admissible bandwidths imply that the neighborhood count
$n_{\mathbf{z}}$ concentrates around its expectation,  so $\widehat{\boldsymbol{\Sigma}}_{\mathbf{z}}$ is well behaved. These conditions control the non-private local PCA error and separate it from the additional perturbation introduced by differential privacy.
\end{remark}

\paragraph{Sensitivity of the local projector.}
To obtain a differentially private local PCA estimator, we privatize the rank-$d$ orthogonal projector onto the estimated tangent space. In line with Section \ref{sec:prelim}, we quantify how the non-private projector changes under the bounded adjacency relation in the following lemma. Its proof is provided in \textcolor{blue}{Appendix~\ref{app:proofs}}.

\begin{lemma}
\label{lem:sense-proj}
Assume the model in \eqref{eq:model}. Suppose $\mathbf{z}\in\mathcal{M}_{{\sigma}}$  is sampled independently from the same distribution as $\{\mathbf{y}_i\}_{i=1}^n$
and bandwidth $h$ satisfies
\[
(\frac{\log n}{n})^{\frac 1d}\vee {\sigma} \ \lesssim\ h\ \ll\ 1\wedge \tau.
\]
Let $\widehat{\mathbf{P}}_{\mathbf{z}}=\mathbf{V}_{\mathbf{z},d}\mathbf{V}_{\mathbf{z},d}^{\top}$ be the rank-$d$ projector onto the top-$d$ eigenvectors of $\widehat{\boldsymbol{\Sigma}}_{\mathbf{z}}$. For each $i\in[n]$, let $\widehat{\mathbf{P}}_{\mathbf{z}}^{\,(i)}$ denote the corresponding projector computed from a replace-one neighboring dataset (obtained by replacing the $i$th record). Then with probability at least $1-n^{-(1+c)}$,
\begin{equation}
\label{eq:sense-proj}
\max_{i\in[n]}\ \big\|\widehat{\mathbf{P}}_{\mathbf{z}}-\widehat{\mathbf{P}}_{\mathbf{z}}^{\,(i)}\big\|_{\mathrm{F}}
\ \le\ C\,\frac{1}{n h^{d}}.
\end{equation}
\end{lemma}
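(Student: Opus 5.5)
The plan is a standard Davis--Kahan perturbation argument. I will show that replacing one record changes $\widehat{\boldsymbol{\Sigma}}_{\mathbf{z}}$ by $O(h^2/(nh^d))$ in Frobenius norm, and that the eigengap between $\lambda_d$ and $\lambda_{d+1}$ is of order $h^2$ with high probability. The two bounds combine to give the $1/(nh^d)$ rate.

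\textbf{Step 1 (neighborhood count).} Since $h\gg\sigma$ and $\mathbf{z}\in\mathcal{M}_\sigma$, the ball $B_D(\mathbf{z},h)$ contains the noisy images of all latent points in $B_D(\mathbf{z}^\star,h/2)\cap\mathcal{M}$. Using $f_{\min}$ and the reach condition $h\ll\tau$ to control the volume of manifold balls, the probability $p_{\mathbf{z}}=\Pr(\mathbf{y}_i\in B_D(\mathbf{z},h))$ satisfies $p_{\mathbf{z}}\asymp h^d$. The condition $h\gtrsim(\log n/n)^{1/d}$ gives $np_{\mathbf{z}}\gtrsim\log n$. A Chernoff bound then yields $c_1nh^d\le n_{\mathbf{z}}\le c_2nh^d$ with probability at least $1-n^{-(1+c)}$. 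Replacing one record changes $n_{\mathbf{z}}$ by at most one, so the same bounds hold, up to constants, for every neighboring dataset simultaneously.

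\textbf{Step 2 (covariance sensitivity).} For the dataset with record $i$ replaced, write $\widehat{\boldsymbol{\Sigma}}^{(i)}_{\mathbf{z}}$. All points entering either covariance lie in $B_D(\mathbf{z},h)$, so $\|\mathbf{y}_j-\bar{\mathbf{y}}_{\mathbf{z}}\|\le 2h$. I will split into cases according to whether the old and new $i$th records lie in the ball:
\begin{itemize}
\item \emph{Neither in the ball:} there is no change.
\item \emph{One in, one out:} the count changes by one, the mean shifts by at most $2h/n_{\mathbf{z}}$, and each rank-one term is bounded by $4h^2$.
\item \emph{Both in:} a single rank-one term is swapped and the mean shifts.
\end{itemize}
Expanding the difference of normalized sums gives $\|\widehat{\boldsymbol{\Sigma}}_{\mathbf{z}}-\widehat{\boldsymbol{\Sigma}}^{(i)}_{\mathbf{z}}\|_{\mathrm F}\le C h^2/n_{\mathbf{z}}\lesssim h^2/(nh^d)$ in every case.

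\textbf{Step 3 (eigengap; the main obstacle).} I expect this step to be the main difficulty. I need $\lambda_d(\widehat{\boldsymbol{\Sigma}}_{\mathbf{z}})-\lambda_{d+1}(\widehat{\boldsymbol{\Sigma}}_{\mathbf{z}})\ge c h^2$ with probability at least $1-n^{-(1+c)}$. I will compare with the population local covariance $\boldsymbol{\Sigma}_{\mathbf{z}}$, the covariance of $\mathbf{y}$ conditional on $\mathbf{y}\in B_D(\mathbf{z},h)$.
\begin{itemize}
\item \emph{Tangential directions.} Locally $\mathcal{M}$ is the graph of a $C^2$ map over $T_{\mathbf{z}^\star}\mathcal{M}$ with second-order deviation $O(h^2/\tau)$. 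Since the density ratio $f_{\min}/f_{\max}$ is bounded, the restriction of $\boldsymbol{\Sigma}_{\mathbf{z}}$ to $T_{\mathbf{z}^\star}\mathcal{M}$ has eigenvalues at least $c h^2$, as for a near-uniform distribution on a $d$-ball of radius comparable to $h$.
\item \emph{Normal directions.} The normal variance is $O(h^4/\tau^2+\sigma^2+\sigma h^2/\tau)$, and the tangent--normal cross terms are $O(h^3/\tau+\sigma h)$.
\item \emph{Population gap.} The gap condition $\sigma\lesssim h$ is not by itself enough to make $\sigma^2$ small relative to $h^2$; I need $\sigma\le c_0 h$ with $c_0$ small. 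I will read the hypothesis $\sigma\lesssim h$ as allowing this choice of constant, and use $h\ll\tau$ for the curvature terms. Weyl's inequality then gives a population gap of order $h^2$.
\item \emph{Empirical gap.} Matrix Bernstein, conditional on $n_{\mathbf{z}}$ with summands bounded by $4h^2$, gives $\|\widehat{\boldsymbol{\Sigma}}_{\mathbf{z}}-\boldsymbol{\Sigma}_{\mathbf{z}}\|\lesssim h^2\sqrt{\log n/(nh^d)}$. This is $o(h^2)$, or at most a small constant times $h^2$ after enlarging the constant in the lower bandwidth condition. The empirical gap is therefore at least $ch^2/2$.
\end{itemize}

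\textbf{Step 4 (conclusion).} The perturbation in Step 2 is $o(h^2)$, so the gap survives for $\widehat{\boldsymbol{\Sigma}}^{(i)}_{\mathbf{z}}$ as well. The Davis--Kahan $\sin\Theta$ theorem, in the Yu--Wang--Samworth form, gives
\[
\big\|\widehat{\mathbf{P}}_{\mathbf{z}}-\widehat{\mathbf{P}}^{(i)}_{\mathbf{z}}\big\|_{\mathrm F}
\le\frac{2\sqrt2\,\|\widehat{\boldsymbol{\Sigma}}_{\mathbf{z}}-\widehat{\boldsymbol{\Sigma}}^{(i)}_{\mathbf{z}}\|_{\mathrm F}}{\lambda_d-\lambda_{d+1}}\lesssim\frac{h^2/(nh^d)}{h^2}=\frac{1}{nh^d}.
\]
The events from Steps 1 and 3 do not depend on $i$, since Steps 2 and 4 are deterministic given them. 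The maximum over $i\in[n]$ therefore holds on their intersection, whose probability is at least $1-n^{-(1+c)}$ after adjusting constants.
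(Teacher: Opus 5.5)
Your proposal is correct and follows essentially the paper's route: a deterministic $O(h^2)$ per-record perturbation of the local covariance, a high-probability eigengap, and the Yu--Wang--Samworth form of Davis--Kahan. The differences are cosmetic. The paper normalizes by $n$, so its covariance sensitivity $6h^2/n$ is fully deterministic and its gap is $\gtrsim h^{d+2}$ (cited via Aamari--Levrard); you normalize by $n_{\mathbf{z}}-1$, use the Chernoff count in the sensitivity step, and derive a gap $\gtrsim h^2$ yourself, where you should use the intrinsic-dimension form of matrix Bernstein so the failure probability carries no $\log D$ factor.
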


Lemma~\ref{lem:sense-proj} bounds the sensitivity of the local tangent projector. Combined with the Gaussian mechanism, this yields a DP release of $\widehat{\mathbf{P}}_{\mathbf{z}}$ and hence a DP estimator of $T_{\mathbf{z}^\star}\mathcal{M}$.

\begin{algorithm}[htbp]
\caption{\textsc{DP-Projector} at query $\mathbf{z}$}
\label{alg:dp-projector}
\begin{algorithmic}[1]
\Require Data $\mathcal{Y} = \{\mathbf{y}_i\}_{i=1}^n$, query $\mathbf{z}$, dimension $d$, bandwidth $h$, privacy $(\varepsilon,\delta)$  
\State Compute local covariance $\widehat{\boldsymbol{\Sigma}}_{\mathbf{z}}$ on neighborhood $I_{\mathbf{z}}$
\State Compute top-$d$ projector $\widehat{\mathbf{P}}_{\mathbf{z}} = \mathbf{V}_{\mathbf{z},d}\,\mathbf{V}_{\mathbf{z},d}^\top$
\State Add symmetric Gaussian noise $\mathbf W$ with $W_{jk}=W_{kj}\sim \mathcal{N}(0,\varsigma^2)$, independently for $1\le j\le k\le D$, where
\[
\varsigma = \frac{\sqrt{2\ln(c_1/\delta)}}{\varepsilon\cdot n h^d}
\]
\State Compute top-$d$ eigenvectors $\widetilde{\mathbf{V}}_{\mathbf{z},d}$ of $ \widehat{\mathbf{P}}_{\mathbf{z}} +\mathbf W$
\State \textbf{Return} $\widetilde{\mathbf{P}}_{\mathbf{z}}=\widetilde{\mathbf{V}}_{\mathbf{z},d}\widetilde{\mathbf{V}}_{\mathbf{z},d}^{\top}$
\end{algorithmic}
\end{algorithm}

By Lemma~\ref{lem:sense-proj}, adding Gaussian noise with scale $\varsigma$ calibrated to the sensitivity bound ensures $(\varepsilon,\delta)$-DP. Similar Gaussian perturbations on spectral projectors are also used in DP-PCA \citep{cai2024optimal}.

Although $\widehat{\mathbf{P}}_{\mathbf{z}} +\mathbf W$ is no longer an orthogonal projector, extracting the top-$d$ eigenvectors is a data-independent transformation. By post-processing invariance, this restores rank-$d$ structure while preserving privacy and ensuring the output remains a valid Grassmannian element.

Early DP-PCA methods either employ the exponential mechanism or perturb the covariance matrix before PCA \citep{chaudhuri2013near,dwork2014analyze}. Since our denoiser only needs the local tangent projector, we privatize $\widehat{\mathbf{P}}_{\mathbf{z}}$ directly and restore it to rank $d$ via post-processing. This aligns the DP mechanism with the geometric summary driving the denoiser, avoiding reliance on a proxy covariance matrix.

We next quantify the accuracy of the DP tangent estimator produced by Algorithm~\ref{alg:dp-projector}.

\begin{theorem}
\label{thm:dp-tangent-error}
Assume the model in \eqref{eq:model}. Suppose $\mathbf{z}\in\mathcal{M}_{{\sigma}}$  is sampled independently from the same distribution as $\{\mathbf{y}_i\}_{i=1}^n$ and bandwidth $h$ satisfies
\[
(\frac{\log n}{n})^{\frac 1d}\vee {\sigma} \ \lesssim\ h\ \ll\ 1\wedge \tau.
\]
Let $\mathbf{V}^\star_{\mathbf z}\in\mathbb O_{D,d}$ be an orthonormal basis for the tangent space
$T_{\mathbf z^\star}\mathcal M$ and let $\widetilde{\mathbf V}_{\mathbf z,d}$ be the corresponding DP estimate obtained from Algorithm~\ref{alg:dp-projector}. Define the principal-angle distance
\[
\begin{aligned}
\operatorname{dist}_\angle\big(T_{\mathbf{z}^\star}\mathcal{M},\,\widehat{T_{\mathbf{z}^\star}\mathcal{M}}\big)
\ :=\ \|\sin\Theta\big(\mathbf{V}^\star_{\mathbf z},\,\widetilde{\mathbf V}_{\mathbf z,d}\big)\|
\ =\ \big\|\Pi_{\mathbf{z}}-\widetilde{\mathbf{P}}_{\mathbf{z}}\big\|, 
\end{aligned}
\]
where $\Pi_{\mathbf{z}}=\mathbf{V}^\star_{\mathbf z}\mathbf{V}^{\star\top}_{\mathbf z}$ projects onto $T_{\mathbf{z}^\star}\mathcal{M}$. Then with probability at least $1-n^{-(1+c)}-e^{-cD}$,
\begin{equation}
\label{eq:dp-tangent-bound}
\begin{aligned}
\operatorname{dist}_\angle\big(T_{\mathbf{z}^\star}\mathcal{M},\,\widehat{T_{\mathbf{z}^\star}\mathcal{M}}\big)
\ \lesssim\
\underbrace{\frac{h}{\tau}}_{\text{curvature}}
\ +\
\underbrace{\frac{\sigma}{h}}_{\text{noise}}
\ +\
\underbrace{\frac{\sqrt{D}}{n\,\varepsilon}\,h^{-d}\,\sqrt{2\ln\frac{c_1}{\delta}}}_{\text{privacy}}.
\end{aligned}
\end{equation}
\end{theorem}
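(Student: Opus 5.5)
The plan is to split the error by the triangle inequality into a non-private part and a privacy part:
\[
\|\Pi_{\mathbf z}-\widetilde{\mathbf P}_{\mathbf z}\|\ \le\ \|\Pi_{\mathbf z}-\widehat{\mathbf P}_{\mathbf z}\|\ +\ \|\widehat{\mathbf P}_{\mathbf z}-\widetilde{\mathbf P}_{\mathbf z}\|.
\]
The first term is the non-private local PCA error. It should give the curvature term $h/\tau$ and the noise term $\sigma/h$. The second term comes from the Gaussian perturbation $\mathbf W$ in Algorithm~\ref{alg:dp-projector}, and it should give the privacy term.

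For the non-private term, I will compare $\widehat{\boldsymbol\Sigma}_{\mathbf z}$ with a population local covariance $\boldsymbol\Sigma_{\mathbf z}$. The neighbors $\mathbf y_i$ with $i\in I_{\mathbf z}$ are written as $\mathbf x_i+\boldsymbol\epsilon_i$, where $\mathbf x_i$ lies on $\mathcal M$ within distance $h+2\sigma$ of $\mathbf z^\star$.

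\begin{enumerate}
\item Reach $\tau$ gives a local graph representation over $T_{\mathbf z^\star}\mathcal M$. The normal displacement of $\mathbf x_i-\mathbf z^\star$ is then $O(h^2/\tau)$.
\item Since the density is bounded in $[f_{\min},f_{\max}]$, the tangential block of $\boldsymbol\Sigma_{\mathbf z}$ has eigenvalues of order $h^2$.
\item The normal and cross blocks are $O(h^3/\tau+\sigma h+\sigma^2)$.
\end{enumerate}

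On the event of Lemma~\ref{lem:sense-proj} we have $n_{\mathbf z}\asymp nh^d$, using $h\gtrsim(\log n/n)^{1/d}$. Matrix Bernstein for bounded summands of norm $O(h^2)$ then gives $\|\widehat{\boldsymbol\Sigma}_{\mathbf z}-\boldsymbol\Sigma_{\mathbf z}\|\lesssim h^2\sqrt{\log n/(nh^d)}$, which is dominated by $h^2$.

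Next I apply Davis--Kahan with eigengap $\asymp h^2$, which holds because $h\gtrsim\sigma$ and $h\ll\tau$. Only the off-diagonal perturbation $O(h^3/\tau+\sigma h)$ enters $\sin\Theta$. Dividing by $h^2$ gives $\|\Pi_{\mathbf z}-\widehat{\mathbf P}_{\mathbf z}\|\lesssim h/\tau+\sigma/h$. The sampling fluctuation is absorbed, or treated as lower order under the bandwidth condition.

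For the privacy term, $\widehat{\mathbf P}_{\mathbf z}$ is an exact rank-$d$ projector, so its eigengap between the $d$th and $(d+1)$th eigenvalues equals $1$. The matrix $\mathbf W$ is symmetric with i.i.d.\ $\mathcal N(0,\varsigma^2)$ upper-triangular entries. Standard Wigner-type concentration gives $\|\mathbf W\|\le C\varsigma\sqrt D$ with probability at least $1-e^{-cD}$. If $\|\mathbf W\|\le 1/2$, Davis--Kahan (or Weyl plus the $\sin\Theta$ theorem) gives
\[
\|\widehat{\mathbf P}_{\mathbf z}-\widetilde{\mathbf P}_{\mathbf z}\|\le 2\|\mathbf W\|\lesssim\sqrt D\,\varsigma=\frac{\sqrt D}{n\varepsilon}h^{-d}\sqrt{2\ln(c_1/\delta)}.
\]
If instead $\|\mathbf W\|>1/2$, the right-hand side of \eqref{eq:dp-tangent-bound} already exceeds a constant. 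Since projector distances are at most $1$, the bound holds trivially in that case.

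Finally, I take a union bound over the event of Lemma~\ref{lem:sense-proj}, which also supplies the $n_{\mathbf z}$ concentration, and the $\mathbf W$ norm event. This yields the probability $1-n^{-(1+c)}-e^{-cD}$.

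I expect the main obstacle to be the non-private step. It needs a careful block decomposition of $\boldsymbol\Sigma_{\mathbf z}$ that keeps the cross term at order $h^3/\tau+\sigma h$, not $h^2$, uniformly when $\mathbf z$ is off-manifold within $\sigma$. It must also ensure that the ball $B_D(\mathbf z,h)$ has a roughly isotropic tangential footprint, so that the eigengap stays of order $h^2$. I would handle this by using the ball around $\mathbf z^\star$ with radius $h\pm2\sigma$ as a sandwich.
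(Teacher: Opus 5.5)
Your proposal follows the same route as the paper. Both use the triangle-inequality split into $\|\Pi_{\mathbf z}-\widehat{\mathbf P}_{\mathbf z}\|+\|\widehat{\mathbf P}_{\mathbf z}-\widetilde{\mathbf P}_{\mathbf z}\|$, followed by the non-private local PCA bound $h/\tau+\sigma/h$. The paper does not re-derive that bound; it imports it from the proof of Proposition~5.1 in \citet{Aamari2018Stability}. Both then apply Davis--Kahan with unit gap to the spectral truncation of $\widehat{\mathbf P}_{\mathbf z}+\mathbf W$, use $\|\mathbf W\|\lesssim\sqrt D\,\varsigma$ on an event of probability $1-e^{-cD}$, and finish with a union bound.

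Two minor remarks. Your case split on $\|\mathbf W\|\le 1/2$ is correct but not needed: the variant of \citet{yu2015useful} only requires the gap of the unperturbed projector. Also, $n_{\mathbf z}\gtrsim nh^d$ comes from a separate density/Chernoff argument, not from the event of Lemma~\ref{lem:sense-proj}.

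The one step to fix is your re-derivation of the non-private term. The sampling fluctuation is \emph{not} lower order under the stated bandwidth condition. At the smallest admissible bandwidth $h\asymp(\log n/n)^{1/d}$, the ratio $\sqrt{\log n/(nh^d)}$ is of constant order. So if $\|\widehat{\boldsymbol\Sigma}_{\mathbf z}-\boldsymbol\Sigma_{\mathbf z}\|/h^2$ entered the numerator of $\sin\Theta$, you would get $h/\tau+\sigma/h+O(1)$ rather than the claimed bound. That happens, for instance, if you compare the eigenspaces of $\widehat{\boldsymbol\Sigma}_{\mathbf z}$ and $\boldsymbol\Sigma_{\mathbf z}$ directly.

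The remedy is to apply the block $\sin\Theta$ argument to the empirical matrix itself, relative to $T_{\mathbf z^\star}\mathcal M\oplus N_{\mathbf z^\star}\mathcal M$. Every centered neighbor $\mathbf y_i-\bar{\mathbf y}_{\mathbf z}$ has norm $O(h)$ and normal component $O(h^2/\tau+\sigma)$. Hence the empirical cross block is deterministically $O(h^3/\tau+\sigma h)$ and the empirical normal block is $O((h^2/\tau+\sigma)^2)$, with no concentration needed.

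Matrix Bernstein is then used only to keep $\lambda_d$ of the empirical tangential block $\gtrsim h^2$. That requires the fluctuation to be at most a small constant multiple of $h^2$, not small relative to $h/\tau+\sigma/h$. This is how the cited result is used in the paper: its auxiliary bound controls $\|\widehat{\boldsymbol\Sigma}^{(s)}_{\mathbf z}-\boldsymbol\Sigma^\star_{\mathbf z}\|$ only up to a constant fraction (about $\tfrac14$) of $\lambda_d(\boldsymbol\Sigma^\star_{\mathbf z})$.
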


The proof of Theorem~\ref{thm:dp-tangent-error} is deferred to \textcolor{blue}{Appendix~\ref{app:proofs}}. The error bound in \eqref{eq:dp-tangent-bound} decomposes into three parts. First, regarding curvature bias, in a $C^2$ manifold with reach $\tau$, the manifold is locally a quadratic graph; approximating it by a plane over radius $h$ induces bias of order $h/\tau$. Second, measurement noise perturbs eigenspaces by the noise-to-signal ratio in the local neighborhood, giving $O(\sigma/h)$. Finally, by Lemma~\ref{lem:sense-proj}, sensitivity satisfies $\Delta_2 \lesssim 1/(n h^d)$. Calibrating with $\varsigma \ge \sqrt{2\ln(c_1/\delta)}\,\Delta_2/\varepsilon$injects privacy noise of size $O(\sqrt{D}\,\varsigma)$ after eigentruncation.

In the next section, we use these DP local projectors to construct a denoising operator and study how tangent-space error propagates to manifold-denoising error.

\section{DP manifold denoising}
\label{sec:dp_manifold_denoising}
We now transition from local geometry to manifold denoising of query points
$\{\mathbf{z}^{(q)}\}_{q=1}^m\subset\mathcal{M}_{\sqrt{\sigma}}$. Our approach adapts the ``bias field'' viewpoint introduced by \citet{yao2025Manifold}: for each ambient point $\mathbf{x}$, construct a vector
\[
\mathbf f(\mathbf{x}) \approx \Pi^{\perp}_{\pi_{\mathcal{M}}(\mathbf{x})}\big(\mathbf{x}-\pi_{\mathcal{M}}(\mathbf{x})\big)
\]
approximating the normal component of displacement from $\mathbf{x}$ to $\mathcal{M}$. In \citet{yao2025Manifold}, $\pi_{\mathcal{M}}(\mathbf{x})$ is estimated by a kernel-weighted local mean, while $\Pi^{\perp}_{\pi_{\mathcal{M}}(\mathbf{x})}$ is the rank-$(D-d)$ approximation to averaged normal-space projectors from local PCA, yielding $O(\sigma)$ Hausdorff distance of $\mathcal{M}$.

We modify this formulation for differential privacy and computational efficiency. First, we work with tangent projectors $\widehat{\mathbf{P}}_{\mathbf{y}_i}$ (rank-$d$ projectors onto local PCA subspaces at each reference points) and approximate the local normal projector at $\mathbf{x}$ by $\mathbf{I}-\widehat{\mathbf{P}}^{\mathrm{w}}_{\mathbf{x}}$, where $\widehat{\mathbf{P}}_{\mathbf{x}}^{\mathrm{w}}$ is the weighted mean of local tangent projectors.
This only requires the top-$d$ eigenvectors at each neighborhood (as in Section~\ref{sec:dp_local_pca}),
rather than full $D-d$ normal bases, enabling the reuse of the same local PCA primitives throughout the procedure.

Second, we forego iterative gradient descent, which would require privatizing gradients at each step and tracking the sensitivity of a high-dimensional vector field. Instead, we use the fixed-point update
\[
\mathbf{x}\ \leftarrow\ \mathbf{x}-\widetilde{\mathbf f}(\mathbf{x}),
\]
where $\widetilde{\mathbf f}$ is a privatized bias field from DP local means and projectors, with privacy enforced at these release points.

\paragraph{Weighted residual.}
For a query location $\mathbf{x}$ and bandwidth $h$, we use compactly supported weights \citep{yao2025Manifold}:
\[
\widetilde{\alpha}_i(\mathbf{x})\ :=\ \Big(1-\tfrac{\|\mathbf{x}-\mathbf{y}_i\|^2}{h^2}\Big)_+^{\beta}\quad(\beta\ge2),
\quad
\alpha_i(\mathbf{x})\ :=\ \frac{\widetilde{\alpha}_i(\mathbf{x})}{\sum_{j=1}^n \widetilde{\alpha}_j(\mathbf{x})}.
\]
These weights confine the analysis to a local neighborhood within the ball $B_D(\mathbf{x},h)$.
The associated (non-private) local mean is
\[
\bar{\mathbf{b}}_{\mathbf{x}}\ :=\ \sum_{i=1}^n \alpha_i(\mathbf{x})\,\mathbf{y}_i,
\]
and we privatize it via an isotropic Gaussian perturbation
\[
\widetilde{\mathbf{b}}_{\mathbf{x}}\ =\ \bar{\mathbf{b}}_{\mathbf{x}}\ +\ \boldsymbol{\xi}_{\mathrm{m}}(\mathbf{x}),
\qquad
\boldsymbol{\xi}_{\mathrm{m}}(\mathbf{x})\sim\mathcal{N}(0,\varsigma_{\mathrm{m}}^2\mathbf{I}_D),
\]
with $\varsigma_{\mathrm m}$ set by the mean-privacy budget.

While prior work averages normal-space projectors, we aggregate tangent projectors
\[
\widehat{\mathbf{P}}_{\mathbf{y}_i}\ =\ \mathbf{V}_{\mathbf{y}_i,d}\mathbf{V}_{\mathbf{y}_i,d}^{\top},
\]
where $\mathbf{V}_{\mathbf{y}_i,d}$ spans the top-$d$ PCA directions at $\mathbf{y}_i$ (computed once at the reference points, using the same bandwidth $h$ as in the kernel weights). This aggregation yields the neighborhood projector
\[
\widehat{\mathbf{P}}^{\mathrm{w}}_{\mathbf{x}}\ :=\ \sum_{i=1}^n \alpha_i(\mathbf{x})\,\widehat{\mathbf{P}}_{\mathbf{y}_i},
\]
which serves as an estimator for the projector onto $T_{\pi_{\mathcal{M}}(\mathbf{x})}\mathcal{M}$.

We privatize via Gaussian noise and spectral truncation (analogous to Algorithm~\ref{alg:dp-projector}, Steps 3--5). For simplicity, denote this as
\[
\widetilde{\mathbf{P}}^{\mathrm{w}}_{\mathbf{x}}\ =\ \textsc{DP-Projector}\big(\widehat{\mathbf{P}}^{\mathrm{w}}_{\mathbf{x}};\varsigma_{\mathrm P}\big),
\]
with $\varsigma_{\mathrm P}$ set by the projector-privacy budget. We then define the privatized normal projector as $\widetilde{\Psi}_{\mathbf{x}}^{\mathrm{w}} = \mathbf{I}-\widetilde{\mathbf{P}}^{\mathrm{w}}_{\mathbf{x}}$.

Finally, the DP residual is constructed as
\[
\widetilde{\mathbf f}(\mathbf{x})\ :=\ \widetilde{\Psi}^{\mathrm{w}}_{\mathbf{x}}\big(\mathbf{x}-\widetilde{\mathbf{b}}_{\mathbf{x}}\big).
\]
In the non-private case, this vector aligns with the normal component of $\mathbf{x}-\pi_{\mathcal{M}}(\mathbf{x})$ up to an error of $O(h^2/\tau+\sigma)$. Consequently, subtracting $\widetilde{\mathbf f}(\mathbf{x})$ updates $\mathbf{x}$ toward $\mathcal{M}$.

\paragraph{Sensitivity of the weighted projector and mean.}

To privatize the residual field $\mathbf f(\mathbf{x})$, we must quantify how the weighted projector $\widehat{\mathbf{P}}^{\mathrm{w}}_{\mathbf{x}}$ and the weighted mean $\bar{\mathbf{b}}_{\mathbf{x}}$ change under the bounded adjacency relation. The following lemma establishes sensitivity bounds for both geometric summaries.

\begin{lemma}
\label{lem:sense-weighted}
Assume the model in \eqref{eq:model}. Suppose $\mathbf{x}\in\mathcal{M}_{\sqrt{\sigma}}$ and bandwidth $h$ satisfies
\[
(\frac{\log n}{n})^{\frac 1d}\vee \sqrt{\sigma} \ \lesssim\ h\ \ll\ 1\wedge \tau.
\]
For each $i\in[n]$, let $\widehat{\mathbf{P}}^{\mathrm{w},(i)}_{\mathbf{x}}$ and $\bar{\mathbf{b}}_{\mathbf{x}}^{(i)}$ denote the corresponding quantities computed from a replace-one neighboring dataset. Then with probability at least $1-n^{-c}$,
\begin{equation}
\label{eq:sense-weighted-proj}
\max_{i\in[n]}\ \big\|\widehat{\mathbf{P}}^{\mathrm{w}}_{\mathbf{x}}-\widehat{\mathbf{P}}^{\mathrm{w},(i)}_{\mathbf{x}}\big\|_{\mathrm{F}}
\ \le\ C\,\frac{1}{n h^{d}},
\end{equation}
and
\begin{equation}
\label{eq:sense-weighted-mean}
\max_{i\in[n]}\ \big\|\bar{\mathbf{b}}_{\mathbf{x}}-\bar{\mathbf{b}}_{\mathbf{x}}^{(i)}\big\|
\ \le\ C\,\frac{1}{n h^{d-1}}.
\end{equation}
\end{lemma}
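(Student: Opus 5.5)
The plan is to work on a single high-probability event $\mathcal{E}$ on which all local sample counts are of order $nh^d$. On $\mathcal{E}$ we will bound the change in the normalized weights and in the summands when one record $\mathbf{y}_i$ is replaced by $\mathbf{y}_i'$.

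\textbf{Step 1 (denominator).} Let $S(\mathbf{x})=\sum_j\widetilde{\alpha}_j(\mathbf{x})$. Since $\widetilde\alpha_j\ge (3/4)^\beta$ whenever $\|\mathbf{x}-\mathbf{y}_j\|\le h/2$, a Bernstein/Chernoff bound gives $S(\mathbf{x})\gtrsim nh^d$ with probability $1-n^{-c}$. The ingredients are the density bound $f_{\min}$, the fact that $\mathbf{x}\in\mathcal{M}_{\sqrt\sigma}$ with $\sqrt\sigma\lesssim h$ (so $B_D(\mathbf{x},h/2)$ still captures a manifold patch of volume $\gtrsim h^d$ after accounting for noise $\sigma\le\sqrt\sigma$), and the condition $h\gtrsim(\log n/n)^{1/d}$. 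Likewise $n_{\mathbf{x}}:=|\{j:\|\mathbf{y}_j-\mathbf{x}\|<h\}|\lesssim nh^d$. Replacing one record changes $S$ by at most $1$, so $S^{(i)}\gtrsim nh^d$ as well.

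\textbf{Step 2 (weighted mean).} Write $\bar{\mathbf b}_{\mathbf x}-\mathbf{x}=\sum_j\alpha_j(\mathbf{y}_j-\mathbf{x})$. This centering at $\mathbf{x}$ is the key point, because each nonzero term has $\|\mathbf{y}_j-\mathbf{x}\|<h$. With $\widetilde\alpha_j$ unchanged for $j\ne i$, we have
\[
\bar{\mathbf b}_{\mathbf x}-\bar{\mathbf b}^{(i)}_{\mathbf x}
=\frac{\widetilde\alpha_i(\mathbf{y}_i-\mathbf{x})-\widetilde\alpha_i'(\mathbf{y}_i'-\mathbf{x})}{S}
+\Big(\frac1S-\frac1{S^{(i)}}\Big)\sum_{j\ne i}\widetilde\alpha_j(\mathbf{y}_j-\mathbf{x}).
\]
The first term is at most $2h/S\lesssim h/(nh^d)$. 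For the second, $|1/S-1/S^{(i)}|\le 1/(SS^{(i)})$ and $\|\sum_{j\ne i}\widetilde\alpha_j(\mathbf{y}_j-\mathbf{x})\|\le h\,S^{(i)}$, so it is at most $h/S\lesssim h/(nh^d)$. Together these give \eqref{eq:sense-weighted-mean}, since $h/(nh^d)=1/(nh^{d-1})$.

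\textbf{Step 3 (weighted projector).} This step is the main obstacle. Replacing $\mathbf{y}_i$ affects $\widehat{\mathbf P}^{\mathrm w}_{\mathbf x}$ in two ways. First, it changes the weights and the summand $\widehat{\mathbf P}_{\mathbf{y}_i}$ directly. This is handled exactly as in Step 2: because $\|\widehat{\mathbf P}_{\mathbf y_j}\|_F=\sqrt d$, the contribution is $\lesssim 1/S\lesssim 1/(nh^d)$. Second, it also changes every local projector $\widehat{\mathbf P}_{\mathbf y_j}$ with $\|\mathbf y_j-\mathbf y_i\|<h$, since $\mathbf{y}_i$ enters their neighborhoods $I_{\mathbf{y}_j}$. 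For this indirect effect we invoke Lemma~\ref{lem:sense-proj} at each reference point $\mathbf{y}_j\in\mathcal{M}_\sigma$. That lemma bounds $\|\widehat{\mathbf P}_{\mathbf y_j}-\widehat{\mathbf P}^{(i)}_{\mathbf y_j}\|_F\le C/(nh^d)$ with probability $1-n^{-(1+c)}$. Its hypothesis that the evaluation point is independent of the data fails here; we would handle this with a leave-one-out argument, applying the lemma to $\{\mathbf y_k\}_{k\ne j}$ at the point $\mathbf y_j$, which changes $n$ by one. A union bound over $j\in[n]$ then costs a factor $n$, leaving probability $1-n^{-c}$. Since the weights sum to one,
\[
\Big\|\sum_j\alpha^{(i)}_j\big(\widehat{\mathbf P}_{\mathbf y_j}-\widehat{\mathbf P}^{(i)}_{\mathbf y_j}\big)\Big\|_F\le\max_j\|\widehat{\mathbf P}_{\mathbf y_j}-\widehat{\mathbf P}^{(i)}_{\mathbf y_j}\|_F\lesssim\frac1{nh^d}.
\]
Adding the direct and indirect contributions via the triangle inequality yields \eqref{eq:sense-weighted-proj}.

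The delicate points are the uniformity over the random evaluation points $\mathbf{y}_j$, including those near $\mathbf y_i'$, and ensuring that the replaced point does not destroy the eigengap used inside Lemma~\ref{lem:sense-proj}. We would first try to deduce both from the union-bounded version of Lemma~\ref{lem:sense-proj}, since a single point perturbs a covariance built from $\asymp nh^d$ points by only $O(h^2/(nh^d))$, which is small relative to an eigengap of order $h^2$.
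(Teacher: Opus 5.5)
Your proposal is correct and follows essentially the same route as the paper. Both arguments lower-bound the normalizers $S,S^{(i)}\gtrsim nh^d$ and center the weighted mean at $\mathbf{x}$. Both then split the projector change into a weight/direct-summand part plus a weighted average of local-projector changes, controlled by the $6h^2/n$ covariance perturbation, a union-bounded $h^{d+2}$ eigengap, and Davis--Kahan. Your leave-one-out remark addresses a dependence issue that the paper glosses over when it applies Lemma~\ref{lemma:local_PCA}(ii) directly at $\mathbf{y}_j$.

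One small slip: in your Step~2 display the term $\widetilde\alpha_i'(\mathbf{y}_i'-\mathbf{x})$ should be divided by $S^{(i)}$ rather than $S$ (equivalently, keep $1/S$ and add it to the second sum). This does not change the $h/S$ bound.
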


The proof is in \textcolor{blue}{Appendix~\ref{app:proofs}}. Combined with the Gaussian mechanism, these bounds enable DP releases of $\widehat{\mathbf{P}}^{\mathrm{w}}_{\mathbf{x}}$ and $\bar{\mathbf{b}}_{\mathbf{x}}$, the key components of $\widetilde{\mathbf f}(\mathbf{x})$. Notably, the resulting sensitivities are well-controlled as a consequence of locality: only the effective neighborhood influences the weighted summaries, so a single-record replacement perturbs aggregates at scale $1/(n h^d)$ (up to the additional $h$-scaling in the mean bound).

Following \citet{yao2025Manifold}, we interpret manifold denoising as finding a nearby zero of this residual field. For fixed $c>1$, define
\[
\mathcal{Z}(\mathbf{z})\ =\ \Big\{\mathbf{x}\in B_D(\mathbf{z},c\sqrt{\sigma})\colon \widetilde{\mathbf f}(\mathbf{x})=0\Big\}.
\]
The denoised output is any point in $\mathcal{Z}(\mathbf{z})$ reached by Algorithm~\ref{alg:dp-denoise}. By post-processing, returning a single zero of $\tilde f$ satisfies $(\varepsilon,\delta)$-DP.

The following result guarantees that any such zero lies close to the true manifold $\mathcal{M}$, explicitly quantifying the contributions of curvature, ambient noise, and privacy noise.

\begin{theorem}\label{thm:ker-root-distance}
Assume the model in \eqref{eq:model}. Fix $\mathbf{z}\in\mathcal{M}_{\sqrt{\sigma}}$ and bandwidth $h$ satisfies
\[
(\frac{\log n}{n})^{\frac 1d}\vee \sqrt{\sigma} \ \lesssim\ h\ \ll\ 1\wedge \tau.
\]
For all $\mathbf x\in\mathcal{Z}(\mathbf{z}),$ with probability at least $1-n^{-c}-e^{-cD}$,
\[
\begin{aligned}
d(\mathbf{x},\mathcal{M})
&\ \lesssim\ \frac{h^2}{\tau}\;+\;\sigma\;+\;\sqrt{D}\,\varsigma_{\mathrm{P}}\,h\;+\;\sqrt{D}\,\varsigma_{\mathrm{m}}\\&\ \lesssim\ \frac{h^2}{\tau}\;+\;\sigma\;+\;\frac{\sqrt{D}}{n\,\varepsilon}\,h^{1-d}\,\sqrt{2\ln\frac{c_1}{\delta}}.
\end{aligned}
\]

\end{theorem}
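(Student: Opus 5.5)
The approach is to show that at a zero $\mathbf{x}$ of $\widetilde{\mathbf f}$, the normal component of $\mathbf{x}-\pi_{\mathcal M}(\mathbf{x})$ is forced to be small, and that this normal component is essentially $d(\mathbf{x},\mathcal M)$. Write $\mathbf{x}^\star=\pi_{\mathcal M}(\mathbf{x})$ and let $\Pi^\perp=\mathbf I-\Pi_{\mathbf{x}^\star}$ be the true normal projector. Since $\mathbf{x}-\mathbf{x}^\star\in N_{\mathbf{x}^\star}\mathcal M$, we have $d(\mathbf{x},\mathcal M)=\|\Pi^\perp(\mathbf{x}-\mathbf{x}^\star)\|$. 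Starting from $0=\widetilde\Psi^{\mathrm w}_{\mathbf x}(\mathbf x-\widetilde{\mathbf b}_{\mathbf x})$, I will decompose
\[
\Pi^\perp(\mathbf x-\mathbf x^\star)=\big(\Pi^\perp-\widetilde\Psi^{\mathrm w}_{\mathbf x}\big)(\mathbf x-\widetilde{\mathbf b}_{\mathbf x})+\Pi^\perp(\widetilde{\mathbf b}_{\mathbf x}-\mathbf x^\star),
\]
so that
\[
d(\mathbf x,\mathcal M)\le \|\widetilde{\mathbf P}^{\mathrm w}_{\mathbf x}-\Pi_{\mathbf x^\star}\|\,\|\mathbf x-\widetilde{\mathbf b}_{\mathbf x}\|+\|\Pi^\perp(\bar{\mathbf b}_{\mathbf x}-\mathbf x^\star)\|+\|\boldsymbol\xi_{\mathrm m}(\mathbf x)\|.
\]
The proof then consists of bounding the three factors.

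\emph{Step 1 (mean).} The weights are supported on $B_D(\mathbf x,h)$, and $\mathbf x\in B_D(\mathbf z,c\sqrt\sigma)\subset\mathcal M_{C\sqrt\sigma}$ with $\sqrt\sigma\lesssim h$. Hence every contributing $\mathbf y_i$ lies within $h+\sqrt\sigma\lesssim h$ of $\mathbf x^\star$, giving $\|\mathbf x-\bar{\mathbf b}_{\mathbf x}\|\lesssim h$. For the normal part, write $\mathbf y_i=\mathbf x_i+\boldsymbol\epsilon_i$. By the reach condition, $\|\Pi^\perp(\mathbf x_i-\mathbf x^\star)\|\le\|\mathbf x_i-\mathbf x^\star\|^2/(2\tau)\lesssim h^2/\tau$, and $\|\boldsymbol\epsilon_i\|\le\sigma$. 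Since the $\alpha_i$ form a convex combination, $\|\Pi^\perp(\bar{\mathbf b}_{\mathbf x}-\mathbf x^\star)\|\lesssim h^2/\tau+\sigma$; no concentration is needed here beyond ensuring the neighborhood is nonempty, which holds with probability $1-n^{-c}$ because $h\gtrsim(\log n/n)^{1/d}$. The Gaussian term satisfies $\|\boldsymbol\xi_{\mathrm m}\|\lesssim\sqrt D\varsigma_{\mathrm m}$ with probability $1-e^{-cD}$, which also gives $\|\mathbf x-\widetilde{\mathbf b}_{\mathbf x}\|\lesssim h+\sqrt D\varsigma_{\mathrm m}$.

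\emph{Step 2 (projector), the main obstacle.} I need $\|\widetilde{\mathbf P}^{\mathrm w}_{\mathbf x}-\Pi_{\mathbf x^\star}\|\lesssim h/\tau+\sigma/h+\sqrt D\varsigma_{\mathrm P}$. First, each reference point $\mathbf y_i\in\mathcal M_\sigma$ falls under Theorem~\ref{thm:dp-tangent-error} in its non-private form (the argument with the privacy term dropped), so $\|\widehat{\mathbf P}_{\mathbf y_i}-\Pi_{\mathbf y_i^\star}\|\lesssim h/\tau+\sigma/h$ uniformly over $i$ after a union bound, which the $n^{-(1+c)}$ failure probability permits. Second, $\|\Pi_{\mathbf y_i^\star}-\Pi_{\mathbf x^\star}\|\lesssim\|\mathbf y_i^\star-\mathbf x^\star\|/\tau\lesssim h/\tau$ by the standard reach bound on tangent variation. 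Averaging with convex weights keeps the averaged projector $\widehat{\mathbf P}^{\mathrm w}_{\mathbf x}$ within this distance of $\Pi_{\mathbf x^\star}$, even though the average is not itself a projector. For the privatization step I will apply Davis--Kahan to $\widehat{\mathbf P}^{\mathrm w}_{\mathbf x}+\mathbf W$. The eigengap of $\Pi_{\mathbf x^\star}$ equals $1$, and the perturbation $\|\widehat{\mathbf P}^{\mathrm w}_{\mathbf x}-\Pi_{\mathbf x^\star}\|+\|\mathbf W\|$ is at most $1/4$ in the regime $h\ll\tau$, $\sigma\lesssim h^2$, moderate privacy, using $\|\mathbf W\|\lesssim\sqrt D\varsigma_{\mathrm P}$ with probability $1-e^{-cD}$ by symmetric Gaussian matrix concentration. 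The top-$d$ projector of the perturbed matrix is then within a constant multiple of this perturbation of $\Pi_{\mathbf x^\star}$. The delicate point is that $\mathbf x$ ranges over a random zero set, so the bounds must be uniform in $\mathbf x$. I would handle this first by noting that the only $\mathbf x$-dependence is through smooth weights, and then using a net over $B_D(\mathbf z,c\sqrt\sigma)$ intersected with the relevant $d$-dimensional structure plus Lipschitz continuity in $\mathbf x$. Failing that, I would treat the privacy noise as drawn once per release point reached by Algorithm~\ref{alg:dp-denoise}.

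\emph{Step 3 (combine).} Multiplying the two bounds gives
\[
(h/\tau+\sigma/h+\sqrt D\varsigma_{\mathrm P})(h+\sqrt D\varsigma_{\mathrm m})\lesssim h^2/\tau+\sigma+\sqrt D\varsigma_{\mathrm P}h+\sqrt D\varsigma_{\mathrm m},
\]
where the cross terms are absorbed because $h/\tau+\sigma/h+\sqrt D\varsigma_{\mathrm P}\lesssim 1$. Adding Step 1 yields the first inequality. The second follows by plugging in the calibration $\varsigma_{\mathrm P}\asymp\sqrt{2\ln(c_1/\delta)}/(\varepsilon nh^d)$ and $\varsigma_{\mathrm m}\asymp\sqrt{2\ln(c_1/\delta)}/(\varepsilon nh^{d-1})$ from Lemma~\ref{lem:sense-weighted}. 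Both privacy terms are then of order $\sqrt D h^{1-d}\sqrt{2\ln(c_1/\delta)}/(n\varepsilon)$.
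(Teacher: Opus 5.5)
Your proposal is correct and follows essentially the paper's argument. It uses the same three-term decomposition at a zero: the normal part of $\bar{\mathbf b}_{\mathbf x}-\mathbf x^\star$ via the reach lemma; the projector error times $\|\mathbf x-\bar{\mathbf b}_{\mathbf x}\|\le h$, controlled by local PCA at each $\mathbf y_i$, tangent variation, and Gaussian-matrix concentration; and $\|\boldsymbol\xi_{\mathrm m}\|\lesssim\sqrt D\,\varsigma_{\mathrm m}$. It then applies the same calibration from Lemma~\ref{lem:sense-weighted}.

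Your small variations are, if anything, more careful than the paper. Applying Davis--Kahan against $\Pi_{\mathbf x^\star}$, whose gap is exactly $1$, avoids the paper's tacit treatment of the averaged matrix $\widehat{\mathbf P}^{\mathrm w}_{\mathbf x}$ as a projector. The cross term with $\boldsymbol\xi_{\mathrm m}$ is most simply absorbed using $\|\widetilde{\mathbf P}^{\mathrm w}_{\mathbf x}-\Pi_{\mathbf x^\star}\|\le 1$. The uniformity issue you raise, which the paper ignores, only concerns the privacy noise, because the non-private bounds hold for every $\mathbf x$ once the events at the $n$ reference points hold. So your per-release fallback with a union bound over the $T$ releases is the right fix, not a net argument.
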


The proof is in \textcolor{blue}{Appendix~\ref{app:proofs}}. The bound separates geometric error from privacy cost. The baseline comprises curvature bias ($h^2/\tau$), and measurement noise ($\sigma$). Privacy adds projector noise ($\sqrt{D}\,\varsigma_{\mathrm P}\,h$) and mean noise ($\sqrt{D}\,\varsigma_{\mathrm m}$), yielding $O(\sqrt{D}\cdot h^{1-d}/(n\varepsilon)\cdot\sqrt{\ln(1/\delta)})$. In the limit of ample privacy budget ($\varepsilon\to\infty$ or equivalently $\varsigma_{\mathrm{P}},\varsigma_{\mathrm{m}}\to0$), the bound recovers the non-private rate.

\paragraph{Algorithm.} The denoising procedure implements the fixed-point update
\(
\mathbf{x}\leftarrow\mathbf{x}-\widetilde{\mathbf f}(\mathbf{x})
\) via three steps at each iterate $\mathbf{x}^{(q,t)}$. Kernel weights $\{\alpha_i(\mathbf{x}^{(q,t)})\}$ assign soft responsibilities to reference points. Using these weights, we construct a DP kernel-weighted mean and tangent projector (by aggregating precomputed $\mathbf{P}_{\mathbf{y}_i}$ before noise injection). The update
\[
\mathbf{x}^{(q,t+1)}=\mathbf{x}^{(q,t)}-\big(\mathbf{I}-\widetilde{\mathbf{P}}_{\mathbf{x}^{(q,t)}}\big)\big(\mathbf{x}^{(q,t)}-\widetilde{\mathbf{b}}_{\mathbf{x}^{(q,t)}}\big)
\]
subtracts the estimated normal component, driving toward the manifold.

\begin{algorithm}[htbp]
\caption{\textsc{DP-ManifoldDenoiser} for queries $\{\mathbf{z}^{(q)}\}_{q=1}^m$}
\label{alg:dp-denoise}
\begin{algorithmic}[1]
\Require Data $\mathcal{Y}=\{\mathbf{y}_i\}_{i=1}^n$, queries $\{\mathbf{z}^{(q)}\}_{q=1}^m$, bandwidth $h$, steps $T$, privacy $(\varepsilon,\delta)$ or zCDP budget $\rho_{\mathrm{tot}}$
\State {Budget across queries:} choose $\{\rho^{(q)}\}_{q=1}^m$ with $\sum_{q=1}^m \rho^{(q)}=\rho_{\mathrm{tot}}$
\For{$q=1,\dots,m$} 
  \State {Per-iteration split:} pick $\{\rho_{\mathrm{P}}^{(q,t)},\rho_{\mathrm{m}}^{(q,t)}\}_{t=0}^{T-1}$ with $\sum_{t}(\rho_{\mathrm{P}}^{(q,t)}+\rho_{\mathrm{m}}^{(q,t)})=\rho^{(q)}$
  \State Initialize $\mathbf{x}^{(q,0)}=\mathbf{z}^{(q)}$
  \For{$t=0,1,\dots,T-1$}
    \State {DP projector:} $\widehat{\mathbf{P}}^{\mathrm{w}}_{\mathbf{x}^{(q,t)}}=\sum_i \alpha_i(\mathbf{x}^{(q,t)})\,\widehat{\mathbf{P}}_{\mathbf{y}_i}$;\quad$\widetilde{\mathbf{P}}^{\mathrm{w}}_{\mathbf{x}^{(q,t)}}\leftarrow \textsc{DP-Projector}\big(\widehat{\mathbf{P}}^{\mathrm{w}}_{\mathbf{x}^{(q,t)}};\,\varsigma_{\mathrm P}^{(q,t)}\big)$
    \State {DP mean:} $\widetilde{\mathbf{b}}_{\mathbf{x}^{(q,t)}}=\sum_i \alpha_i(\mathbf{x}^{(q,t)})\,\mathbf{y}_i+\boldsymbol{\xi}_{\mathrm{m}}^{(q,t)}$, with $\boldsymbol{\xi}_{\mathrm{m}}^{(q,t)}\sim\mathcal{N}\big(0,(\varsigma_{\mathrm{m}}^{(q,t)})^2\mathbf{I}_D\big)$
    \State {Update:} $\mathbf{x}^{(q,t+1)}=\mathbf{x}^{(q,t)}-\big(\mathbf{I}-\widetilde{\mathbf{P}}^{\mathrm{w}}_{\mathbf{x}^{(q,t)}}\big)\big(\mathbf{x}^{(q,t)}-\widetilde{\mathbf{b}}_{\mathbf{x}^{(q,t)}}\big)$
  \EndFor
  \State $\widetilde{\mathbf{x}}^{(q)}=\mathbf{x}^{(q,T)}$
\EndFor
\State \textbf{Return} $\{\widetilde{\mathbf{x}}^{(q)}\}_{q=1}^m$
\end{algorithmic}
\end{algorithm}
The complete procedure is given in Algorithm~\ref{alg:dp-denoise}. From an algorithmic viewpoint, this procedure resembles a generalized EM scheme for latent projection recovery \citep{carreira2007gaussian}: weights act as an E-step, privatized summaries serve as local geometry surrogates, and the update performs an M-step decreasing normal misalignment.

This normal correction offers advantages over gradient descent on the residual norm $\|\widetilde{\mathbf f}(\mathbf{x})\|_2^2$, particularly in the private-reference regime. First, it avoids repeated privatization of high-dimensional gradients, releasing only DP local summaries with controlled sensitivities. Second, by reusing precomputed $\widehat{\mathbf{P}}_{\mathbf{y}_i}$, each iteration requires only $O(D^2 d)$ eigendecomposition rather than $O(D^3)$ for full normal bases.

\paragraph{Privacy accounting.}
The modular structure enables exact privacy accounting via zCDP. Given a target $(\varepsilon,\delta)$-DP guarantee, we determine the total zCDP budget $\rho_{\mathrm{tot}}$ using the conversion from Section~\ref{sec:prelim}:
\[
\varepsilon\;=\;\rho_{\mathrm{tot}}\;+\;2\sqrt{\rho_{\mathrm{tot}}\,\ln(1/\delta)}.
\]
Solving for $\rho_{\mathrm{tot}}$ gives the total zCDP budget to be allocated.

By the additive composition property, the total zCDP parameter for query $q$ accumulates as
$$\rho^{(q)}\;=\;\sum_{t=0}^{T-1}\big(\rho_{\mathrm{P}}^{(q,t)}+\rho_{\mathrm{m}}^{(q,t)}\big),$$
which aggregates to $\rho_{\mathrm{tot}} = \sum_{q=1}^m \rho^{(q)}$.

In practice, a uniform allocation strategy often suffices: set $\rho^{(q)} = \rho_{\mathrm{tot}}/m$ and introduce $\theta \in [0,1]$ to split per-step budgets as
$$\rho_{\mathrm{P}}^{(q,t)} = \theta\,\frac{\rho^{(q)}}{T},
\qquad
\rho_{\mathrm{m}}^{(q,t)} = (1-\theta)\,\frac{\rho^{(q)}}{T}.$$
These budgets directly determine the noise scales $\varsigma_{\mathrm{P}}^{(q,t)}$ and $\varsigma_{\mathrm{m}}^{(q,t)}$ via the Gaussian mechanism, ensuring the privacy guarantee is structurally enforced.

\paragraph{One-step iteration.}
Finally, we state a corollary based on one-step specialization of Algorithm~\ref{alg:dp-denoise}.
\begin{corollary}
\label{cor:one-step}
Under the assumptions of Theorem~\ref{thm:ker-root-distance}, consider a single query $\mathbf{z}\in\mathcal{M}_{\sqrt{\sigma}}$ and one iteration of the update in Algorithm~\ref{alg:dp-denoise}:
\[
\mathbf{x}^{(1)} = \mathbf{z} - \big(\mathbf{I}-\widetilde{\mathbf{P}}^{\mathrm{w}}_{\mathbf{z}}\big)\big(\mathbf{z}-\widetilde{\mathbf{b}}_{\mathbf{z}}\big).
\]
Then with probability at least $1-n^{-c}$,
\[
\begin{aligned}
\|\mathbf{x}^{(1)}-\pi_{\mathcal M}(z)\|
&\ \lesssim\ \frac{h^2}{\tau}\;+\;\sigma\;+\;\sqrt{D}\,\varsigma_{\mathrm{P}}\,h\;+\;\sqrt{D}\,\varsigma_{\mathrm{m}}\\
&\ \lesssim\ \frac{h^2}{\tau}\;+\;\sigma\;+\;\frac{\sqrt{D}}{n\,\varepsilon}\,h^{1-d}\,\sqrt{2\ln\frac{c_1}{\delta}}.
\end{aligned}
\]
\end{corollary}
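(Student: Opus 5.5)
Write $\mathbf z^\star:=\pi_{\mathcal M}(\mathbf z)$, $\Pi:=\Pi_{\mathbf z}$ (the projector onto $T_{\mathbf z^\star}\mathcal M$), and $\Pi^\perp=\mathbf I-\Pi$. The plan is to split the one-step error into a tangential part and a normal part relative to $T_{\mathbf z^\star}\mathcal M$. Set $\widetilde\Psi=\mathbf I-\widetilde{\mathbf P}^{\mathrm w}_{\mathbf z}$ and $\mathbf u=\mathbf z-\widetilde{\mathbf b}_{\mathbf z}$. Then
\[
\mathbf x^{(1)}-\mathbf z^\star=(\mathbf z-\mathbf z^\star)-\widetilde\Psi\mathbf u
=\big(\widetilde{\mathbf b}_{\mathbf z}-\mathbf z^\star\big)+\widetilde{\mathbf P}^{\mathrm w}_{\mathbf z}\mathbf u ,
\]
and I will further decompose this as
\[
\Pi^\perp\big(\widetilde{\mathbf b}_{\mathbf z}-\mathbf z^\star\big)+\Pi\big(\widetilde{\mathbf b}_{\mathbf z}-\mathbf z^\star\big)+\Pi\mathbf u+\big(\widetilde{\mathbf P}^{\mathrm w}_{\mathbf z}-\Pi\big)\mathbf u
=\Pi^\perp\big(\widetilde{\mathbf b}_{\mathbf z}-\mathbf z^\star\big)+\Pi(\mathbf z-\mathbf z^\star)+\big(\widetilde{\mathbf P}^{\mathrm w}_{\mathbf z}-\Pi\big)\mathbf u .
\]
Since $\mathbf z-\mathbf z^\star$ is normal, $\Pi(\mathbf z-\mathbf z^\star)=0$. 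The proof therefore reduces to bounding two quantities:
\[
\text{(A)}\;\|\Pi^\perp(\widetilde{\mathbf b}_{\mathbf z}-\mathbf z^\star)\|,\qquad
\text{(B)}\;\|\widetilde{\mathbf P}^{\mathrm w}_{\mathbf z}-\Pi\|\cdot\|\mathbf u\|.
\]

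For (A), write $\widetilde{\mathbf b}_{\mathbf z}=\bar{\mathbf b}_{\mathbf z}+\boldsymbol\xi_{\mathrm m}$. Gaussian norm concentration gives $\|\boldsymbol\xi_{\mathrm m}\|\lesssim\sqrt D\varsigma_{\mathrm m}$ with probability $1-e^{-cD}$. For the non-private part, each $\mathbf y_i$ with nonzero weight satisfies $\|\mathbf y_i-\mathbf z^\star\|\le h+\sqrt\sigma+\sigma\lesssim h$. Locally $\mathcal M$ is a graph over $T_{\mathbf z^\star}\mathcal M$ with quadratic deviation, so $\|\Pi^\perp(\mathbf x_i-\mathbf z^\star)\|\lesssim h^2/\tau$, and adding the noise gives $\|\Pi^\perp(\mathbf y_i-\mathbf z^\star)\|\lesssim h^2/\tau+\sigma$. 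A convex combination preserves this bound, so (A) $\lesssim h^2/\tau+\sigma+\sqrt D\varsigma_{\mathrm m}$. The only probabilistic input needed here is that the weights are well-defined, i.e.\ $\sum_j\widetilde\alpha_j(\mathbf z)\gtrsim nh^d$. This holds with probability $1-n^{-c}$ under $h\gtrsim(\log n/n)^{1/d}$, which is the same event used in Lemma~\ref{lem:sense-weighted}.

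For (B), I bound the two factors separately.
\begin{itemize}
\item $\|\mathbf u\|\le\|\mathbf z-\bar{\mathbf b}_{\mathbf z}\|+\|\boldsymbol\xi_{\mathrm m}\|\lesssim h+\sqrt D\varsigma_{\mathrm m}$.
\item $\|\widetilde{\mathbf P}^{\mathrm w}_{\mathbf z}-\Pi\|$ is handled like Theorem~\ref{thm:dp-tangent-error}. Each $\widehat{\mathbf P}_{\mathbf y_i}$ with $\|\mathbf y_i-\mathbf z\|\le h$ satisfies $\|\widehat{\mathbf P}_{\mathbf y_i}-\Pi_{\mathbf y_i}\|\lesssim h/\tau+\sigma/h$, using Theorem~\ref{thm:dp-tangent-error} with zero privacy noise and a union bound over $i$. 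The tangent spaces vary slowly: $\|\Pi_{\mathbf y_i}-\Pi\|\lesssim h/\tau$. Averaging gives $\|\widehat{\mathbf P}^{\mathrm w}_{\mathbf z}-\Pi\|\lesssim h/\tau+\sigma/h$.
\item For the privacy noise, a symmetric Gaussian matrix satisfies $\|\mathbf W\|\lesssim\sqrt D\varsigma_{\mathrm P}$ with probability $1-e^{-cD}$. Since $\widehat{\mathbf P}^{\mathrm w}_{\mathbf z}$ has an eigengap of order $1$ once $h/\tau+\sigma/h$ is small, Davis--Kahan bounds the effect of eigentruncation by a constant multiple, giving $\|\widetilde{\mathbf P}^{\mathrm w}_{\mathbf z}-\Pi\|\lesssim h/\tau+\sigma/h+\sqrt D\varsigma_{\mathrm P}$.
\end{itemize}
Multiplying the factors gives (B) $\lesssim h^2/\tau+\sigma+\sqrt D\varsigma_{\mathrm P}h$, plus a cross term $\sqrt D\varsigma_{\mathrm m}\cdot O(1)$, which is absorbed because the projector error is $O(1)$.

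\textbf{Main obstacle.} The delicate point is the noise contribution to $\|\widehat{\mathbf P}^{\mathrm w}_{\mathbf z}-\Pi\|\cdot h$, which is $(\sigma/h)\cdot h=\sigma$ and hence of the right order. This requires the local PCA bound at the reference points $\mathbf y_i$, which lie in $\mathcal M_\sigma$, so Theorem~\ref{thm:dp-tangent-error} applies there. If the union bound over $i$ costs a factor of $n$ in the failure probability, I will absorb it using the $n^{-(1+c)}$ rate of that theorem. Finally, substitute the calibrated scales $\varsigma_{\mathrm P}\asymp\sqrt{2\ln(c_1/\delta)}/(\varepsilon nh^d)$ and $\varsigma_{\mathrm m}\asymp\sqrt{2\ln(c_1/\delta)}/(\varepsilon nh^{d-1})$ from Lemma~\ref{lem:sense-weighted}. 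Both terms then become $\frac{\sqrt D}{n\varepsilon}h^{1-d}\sqrt{2\ln(c_1/\delta)}$, which gives the second line of the corollary.
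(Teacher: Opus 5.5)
Your proposal is correct and is essentially the paper's argument. Your identity $\mathbf x^{(1)}-\mathbf z^\star=\Pi^\perp(\widetilde{\mathbf b}_{\mathbf z}-\mathbf z^\star)+(\widetilde{\mathbf P}^{\mathrm w}_{\mathbf z}-\Pi)(\mathbf z-\widetilde{\mathbf b}_{\mathbf z})$ is the paper's three-term decomposition with the $\boldsymbol\xi_{\mathrm m}$ pieces regrouped (the paper collects them as $(\mathbf I-\widetilde{\mathbf P}^{\mathrm w}_{\mathbf z})\boldsymbol\xi_{\mathrm m}$), and you bound each piece with the same ingredients: the reach inequality, local PCA accuracy at the reference points plus slow tangent variation, and Gaussian concentration with Davis--Kahan. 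Note that, as in the paper's own proof, your Gaussian steps deliver probability $1-n^{-c}-e^{-cD}$, not the $1-n^{-c}$ written in the statement.
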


Corollary~\ref{cor:one-step} shows that one DP normal-correction step yields the same error decomposition as Theorem~\ref{thm:ker-root-distance}, reminiscent of the one-step principle in statistical estimation \citep{van2000asymptotic}. This suggests a small fixed $T$ often suffices, as further iterations may offer diminishing gains while consuming budget.

\section{Simulations}
\label{sec:sim}
\begin{figure*}[!t] \centering \includegraphics[width=0.98\textwidth]{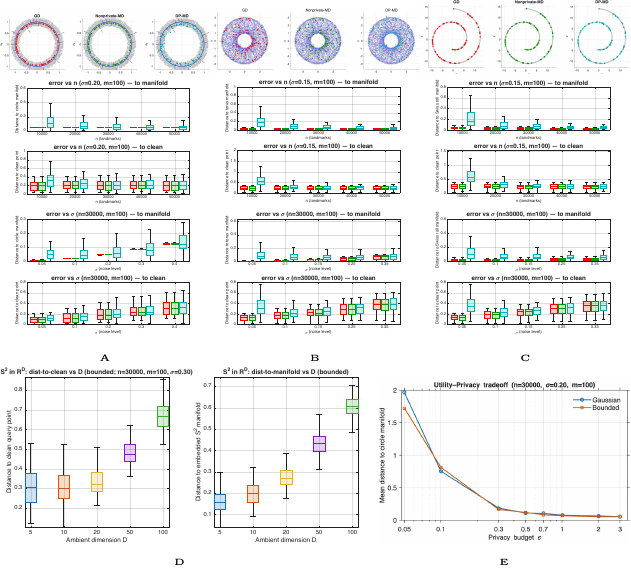} \caption{ {Synthetic manifold denoising results.} (A--C) Circle, torus, and Swiss roll: geometric visualization and error trends. (D--E) Sphere error vs ambient dimension $D$ and privacy-utility tradeoff. } \label{fig:sim_main} \end{figure*}
We evaluate our differentially private manifold denoising algorithm through synthetic experiments under diverse geometric settings.
Four canonical manifolds of increasing complexity---{circle}, {torus}, {Swiss roll}, and {sphere} embedded in high-dimensional ambient space---were used to examine how curvature, topology, density heterogeneity, and ambient dimension affect performance.

Each manifold was embedded in $\mathbb{R}^D$ with intrinsic dimension $d=1$ or $2$ and perturbed by additive noise. For all experiments, we generated $n$ reference points and $m$ query points to be denoised using the proposed {DP-MD} algorithm, with noise magnitudes $\sigma$ and $\sqrt{\sigma}$ for references and queries, respectively.
Unless stated otherwise, noise was bounded in $\ell_2$ norm. Unbounded Gaussian noise results (qualitatively similar) are in \textcolor{blue}{Figs.~\ref{fig:S1_gaussian_all} and \ref{fig:S3_highdim_sphere} of the Appendix}.

Non-private baselines, gradient descent  ({GD}) and Nonprivate Manifold Denoiser ({Nonprivate MD}), were included for reference. Default parameters $\sigma \in [0.05,0.35]$, $n\in[1\times10^4,5\times10^4]$, and total privacy budget $\varepsilon = 1.0, \delta = 0.1$. Detailed parameter configurations for each experiment are summarized in \textcolor{blue}{Table~\ref{tab:sim_params} of the Appendix}. The neighborhood radius was chosen as
\[
h = \max\!\left\{ 5\!\left(\frac{\log n}{n}\right)^{\!\frac{1}{d+1}},\; 2\sqrt{\sigma} \right\},
\]
in accordance with Theorem \ref{thm:ker-root-distance}.
\paragraph{Experimental design.}
The four geometries progressively test different conditions: 
circle ($S^1$) for curvature, torus ($T^2$) for nonconvex topology, Swiss roll for varying density, and sphere ($S^2\subset\mathbb{R}^D$) for high-dimensional scalability. Performance is measured by (i) distance to the clean point and (ii) distance to the true manifold.

\paragraph{Low-dimensional manifolds.}
Figs~\ref{fig:sim_main}A--C illustrate the denoising behavior on curved manifolds.  
For the {circle} ($S^1\subset\mathbb{R}^2$), all three methods successfully projected noisy points back onto the true manifold, 
while DP-MD retained accuracy comparable to Nonprivate-MD even for strong noise ($\sigma=0.3$).  
Error distributions decreased systematically with sample size $n$.  
On the {torus} ($T^2\subset\mathbb{R}^3$), which features coupled principal curvatures and nonconvex topology,  
DP-MD accurately restored the ring structure without over-smoothing.  
The {Swiss roll} (Fig.~\ref{fig:sim_main}C) further demonstrated robustness to heterogeneous sampling density:  
denoised points formed a smooth unrolled surface even where the data were sparse.

\paragraph{High-dimensional scalability.}
To evaluate scalability, we embedded sphere $S^2$ in dimensions $D=5,10,20,50,100$ with fixed $n=30{,}000$, $\sigma=0.3$.  
Only DP-MD was applied.  
Both distance metrics remained nearly constant as $D$ increased (Fig.~\ref{fig:sim_main}D), 
demonstrating insensitivity to ambient dimension.
Runtime scaled approximately linearly with $D$, dominated by the local PCA stage, while the average neighborhood size decreased moderately (\textcolor{blue}{Fig.~\ref{fig:S3_highdim_sphere} in the Appendix}), consistent with our theoretical analysis.
\paragraph{Privacy-utility tradeoff.}
We further examined how the privacy budget influences accuracy on the circle manifold by varying $\varepsilon$ from $0.05$ to $3$ (Fig.~\ref{fig:sim_main}E).  
The mean denoising error decreased monotonically with larger $\varepsilon$ and plateaued beyond $\varepsilon\approx1$, 
indicating that the DP-MD algorithm preserves nearly optimal utility even under strong privacy constraints. Corresponding results on the torus and swiss roll manifolds are provided in \textcolor{blue}{Fig.~\ref{fig:S2_privacy_tradeoff} of the Appendix}.

\paragraph{Computational Complexity.}
The dominant cost is local PCA at each reference 
point: $O(nD^{2}d)$ for computing $n$ neighborhood projectors $P_{y_i}$. For $m$ queries and 
$T$ iterations, the per-query cost is $O(TnD)$ for averaging plus 
$O(TD^{2}d)$ for DP-Projector, yielding total complexity $O(nD^{2}d + mT(nD + D^{2}d))$. 

\paragraph{Summary.}
Across all manifolds, the proposed DP-MD achieves comparable accuracy to non-private baselines while ensuring differential privacy.  
It remains robust under high curvature, nonuniform density, and high-dimensional embeddings, validating both the theoretical guarantees and the practical scalability of our approach.

\section{Application}
\label{sec:application}

\subsection{Applications to UK Biobank data}

\paragraph{Motivation.}
We illustrate our framework on the UK Biobank blood and
urine biomarker panel (\textcolor{blue}{Appendix~\ref{app:ukb}}), comprising 60 quantitative laboratory measurements
spanning metabolic, hepatic, renal, and hematologic systems \citep{sinnott2021genetics, chan2021biomarker}.
Although these biomarkers reflect diverse physiological processes, prior work
has shown that they exhibit coordinated low-dimensional organization driven by
shared regulatory, inflammatory, and homeostatic mechanisms.
Measurement noise, assay variability, and finite-sample effects can perturb
this structure, distorting local neighborhoods and attenuating biologically
meaningful gradients that are relevant for downstream risk modeling.
This setting therefore provides a realistic and high-dimensional setting for
assessing whether manifold-aware denoising can restore geometric coherence
under privacy constraints.

\begin{figure*}[!t]
\centering
\includegraphics[width=\textwidth]{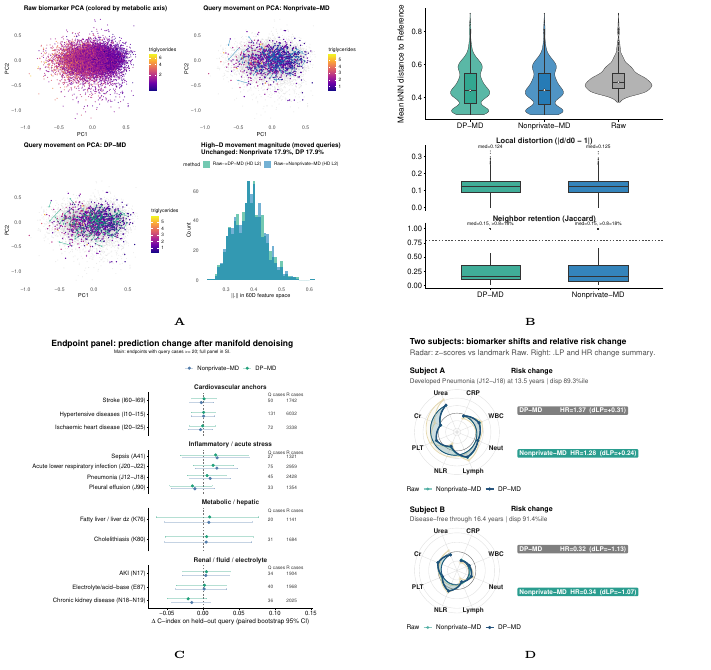}

\caption{
{Manifold denoising preserves local geometric stability while remaining compatible with downstream biomedical risk modeling.}
{(A)} PCA embedding of the raw biomarker space, with subject-level displacement vectors induced by denoising.
{(B)} Local geometry diagnostics showing bounded distortion and stable neighborhood structure relative to raw references.
{(C)} Changes in out-of-sample risk discrimination across a pre-specified panel of clinically interpretable cardio-metabolic endpoints. Detailed results are provided in \textcolor{blue}{Fig.~\ref{fig:S4_icd_fullpanel} and Table~\ref{tab:si_endpointset_cindex_ci} of the Appendix}.
{(D)} Subject-level illustration linking coordinated biomarker shifts to coherent movement along a clinically meaningful risk axis.
}
\label{fig:main_combined}
\end{figure*}
\paragraph{Geometric effects of manifold denoising.}
We applied Non-private MD and DP-MD to 50{,}000 reference participants, with 1{,}000 held-out queries under privacy budget $\varepsilon = 1, \delta = 0.1$. We adopted a local-geometry–driven strategy to determine neighborhood scale $h$ and intrinsic dimension $d$, using local k-nearest-neighbor distances and principal component analysis.

Denoising induces structured subject-level movements in the biomarker
representation: most query points undergo minimal displacement, while a subset
exhibits larger, directionally coherent adjustments toward locally consistent
regions of the biomarker space (Fig.~\ref{fig:main_combined}A).
These movements preserve intrinsic variability and are accompanied by stable
local geometric behavior, including bounded neighborhood distortion and
consistent neighbor retention relative to the raw representation
(Fig.~\ref{fig:main_combined}B).
These analyses indicate that denoising modifies local geometry in a controlled manner rather than introducing random perturbations.

\paragraph{Implications for downstream risk stratification.}
We next examined whether restoring geometric coherence in biomarker space is
compatible with downstream biomedical analysis.

We began with a screen of ICD-10 endpoints derived from first-occurrence records and then focused on a pre-specified panel of clinically interpretable cardio-metabolic and vascular conditions. These conditions were chosen a priori based on two criteria: their established links to systemic biomarker profiles and sufficient, stable event incidence to support reliable Cox estimation (\textcolor{blue}{Table~\ref{tab:ukb_full_panel} of the Appendix}).

For each endpoint, Cox models were trained on references using age, sex, ethnicity, and either Raw, Nonprivate-MD, or DP-MD biomarker representations; queries were reserved for out-of-sample evaluation. All pipeline aspects were held fixed so that differences reflect representation changes rather than model specification.

Manifold denoising is associated with modest but directionally consistent changes in risk discrimination for disease families tied to systemic metabolic or inflammatory signatures, including coronary artery disease and ischemic stroke (Fig.~\ref{fig:main_combined}C). Effect sizes vary across endpoints, reflecting heterogeneity in event prevalence and signal strength. Importantly, DP denoising closely tracks Non-private MD, indicating that privacy constraints do not materially alter risk-relevant geometric structure. 

\paragraph{Subject-level illustration.}
Finally, we illustrate how representation correction operates at the individual scale (Fig.~\ref{fig:main_combined}D). For two representative subjects, denoising induces coordinated biomarker adjustments and corresponding movement along a clinically interpretable risk axis.

\begin{table}[t]
\centering
\caption{{Clustering performance (ARI) on single-cell RNA-seq datasets (mean (SE)).}}
\label{tab:scrna_ari}
\begin{tabular}{lccc}
\toprule
{Dataset} & {Original (ARI)} & {Nonprivate-MD (ARI)} & {DP-MD (ARI)}\\
\midrule
Goolam & 0.734 (0.063) & 0.807 (0.059) & 0.805 (0.057) \\
Schaum2 & 0.681 (0.032) & 0.723 (0.034) & 0.737 (0.027) \\
Yan & 0.825 (0.025) & 0.862 (0.023) & 0.844 (0.020) \\
He & 0.711 (0.025) & 0.744 (0.026) & 0.734 (0.031) \\
Pollen & 0.925 (0.021) & 0.951 (0.014) & 0.943 (0.016) \\
Wang & 0.801 (0.023) & 0.826 (0.030) & 0.827 (0.029) \\
Muraro & 0.789 (0.031) & 0.808 (0.026) & 0.808 (0.025) \\
Zeisel & 0.700 (0.027) & 0.718 (0.028) & 0.717 (0.021) \\
Enge & 0.725 (0.013) & 0.743 (0.013) & 0.743 (0.013) \\
Nowicki & 0.663 (0.021) & 0.674 (0.020) & 0.670 (0.019) \\
\midrule
{Mean $\pm$ SE} & {0.755 $\pm$ 0.025} & {0.786 $\pm$ 0.026} & {0.783 $\pm$ 0.025} \\
\bottomrule
\end{tabular}
\end{table}
\subsection{Applications to Single-cell omics data}
\paragraph{Motivation.}
Single-cell RNA sequencing (scRNA-seq) data are well known to reside near nonlinear manifolds reflecting major biological factors such as cell type or developmental stage, but are corrupted by technical noise and dropout effects \citep{zhu2024uncovering, zhang2021inference}.  
Building on manifold fitting applications such as SCAMF \citep{yao2024single}, we evaluate whether our DP manifold denoiser can improve downstream clustering performance in real biological data (\textcolor{blue}{Appendix~\ref{app:scrna}}).

\paragraph{Experimental setup.}
We evaluated our method on 10 public scRNA-seq datasets spanning multiple tissues and protocols (\textcolor{blue}{Table~\ref{tab:scrna_datasets_summary} of the Appendix}). The same privacy budget and local-geometry–driven strategy for choosing neighborhood scale $h$ and intrinsic dimension $d$ was applied to single-cell datasets.

\paragraph{Clustering Performance.}
Clustering performance was evaluated by accuracy (ACC), normalized mutual information (NMI), and adjusted Rand index (ARI) across repeated runs. We also evaluated local neighborhood preservation using neighborhood purity. We report ARI as the most representative metric; detailed results for all metrics are in \textcolor{blue}{Table~\ref{tab:scRNA_clustering_all} of the Appendix}. Across almost all 10 datasets, both Nonprivate-MD and DP-MD improved ARI relative to original data, confirming that manifold denoising enhances biological signal recovery (Table~\ref{tab:scrna_ari}).

\section{Discussion}
\label{sec:conclusion}
In this work, we have established a rigorous framework that reconciles the conflicting demands of high-dimensional geometric learning and individual privacy. By privatizing local geometric summaries, specifically tangent projectors and means, rather than raw data, we enable the recovery of latent manifold structures while strictly bounding the influence of any single participant in the reference data. Practically, this framework provides a viable pathway for utilizing sensitive high-dimensional data, such as genomic or medical records, in downstream analysis. By allowing the geometric information inherent in private cohorts to guide the denoising of new samples, our method maintains high statistical utility while strictly adhering to regulatory privacy bounds.

Several theoretical and practical challenges remain. First, our theory denoises queries to the same order as reference noise. A natural next step is to obtain higher-order noise cancellation under additional structure (e.g., zero-mean noise with higher-moment conditions) so that the leading noise term vanishes. In particular, such a refinement may remove the first-order $\sigma$ term from the utility bound and replace it with a higher-order remainder, analogous to noise cancellation improvements in non-private manifold denoising \citep{yao2023manifold}. 

Second, our current model assumes bounded noise to control sensitivity; extending this framework to handle unbounded Gaussian noise or heavy-tailed distributions is an important next step. This addresses a realistic concern but would require substantive theoretical developments, potentially integrating differentially private robust statistics to manage outliers and heavy tails while maintaining tractable sensitivity control and non-asymptotic guarantees.

Third, while our method denoises discrete query points, a fundamental open problem is to define and release the entire manifold as a differentially private object. Constructing such a global release, whether as an implicit function or a geometric mesh, poses significant difficulties in quantifying global sensitivity and requires developing new mechanisms. Finally, our current setting assumes public queries; a natural extension is the fully private regime where both the reference dataset and the incoming queries contain sensitive information, requiring two-sided privacy guarantees.

\subsection*{Data, Materials, and Software Availability} 
Matlab and R implementation of the algorithm, including all experiments and visualizations, is available at {\begingroup\urlstyle{same}\url{https://github.com/zhigang-yao/DP-Manifold-Denoising}\endgroup} under the MIT license. UK Biobank data are available to approved  researchers through application to the UK Biobank Access Management System ({\begingroup\urlstyle{same}\url{https://www.ukbiobank.ac.uk}\endgroup}). This study was conducted under application 146760. Single-cell RNA-seq datasets are publicly available from GEO and ArrayExpress. Synthetic manifold experiments can be reproduced using provided simulation code.

\bibliography{pnas-sample}
\bibliographystyle{apalike}

\newpage
\appendix
\setcounter{figure}{0}
\setcounter{table}{0}
\renewcommand{\thefigure}{S.\arabic{figure}}
\renewcommand{\thetable}{S.\arabic{table}}

\begin{center}
    \textbf{\Large Appendix}
\end{center}

The Appendix contains five sections. Appendix~\ref{app:prelim} introduces preliminaries, while Appendix~\ref{app:proofs} contains the detailed technical proofs. Appendix~\ref{app:sim} presents supplementary simulation results. Appendices~\ref{app:ukb}--\ref{app:scrna} provide additional methodological, preprocessing, and evaluation details, along with supplementary empirical results for the UK Biobank and single-cell RNA sequencing applications, respectively.

\section{Preliminaries}
\label{app:prelim}
\paragraph{Notation.}
We use $C,c$ for absolute positive constants whose values may change from line to line. For nonnegative functions $f,g$, write $f\lesssim g$ if $f\le Cg$ for some universal $C>0$, $f\gtrsim g$ if $f\ge cg$ for some $c>0$, and $f\asymp g$ if both hold. For integers $n\ge1$, let $[n]:=\{1,\dots,n\}$, and for $\mathcal{G} \subseteq [n]$, denote its cardinality by $|\mathcal{G}|$.  The indicator function is $\mathbb{I}(\cdot)$. For real numbers $a,b$, set $a\vee b:=\max\{a,b\}$ and $a\wedge b:=\min\{a,b\}$. Vectors are boldface (e.g., $\mathbf{v}\in\mathbb{R}^D$) with Euclidean norm $\|\mathbf{v}\|$; for matrices, $\|\cdot\|$ denotes spectral norm and $\|\cdot\|_{\mathrm F}$ the Frobenius norm. The Stiefel manifold is $\mathbb{O}_{p,r}:=\{\mathbf{U}\in\mathbb{R}^{p\times r}:\mathbf{U}^\top\mathbf{U}=\mathbf{I}_r\}$, and $\mathbb{O}_r:=\mathbb{O}_{r,r}$ is the orthogonal group. For a matrix  $\mathbf{A}\in\mathbb{R}^{p\times q}$ of rank $r$, write its SVD as $\mathbf{A}=\mathbf{U}\mathbf{\Sigma}\mathbf{V}^\top$ with $\mathbf{U}\in\mathbb{O}_{p,r}$, $\mathbf{V}\in\mathbb{O}_{q,r}$, and $\mathbf{\Sigma}=\mathrm{diag}(\sigma_1(\mathbf{A}),\dots,\sigma_r(\mathbf{A}))$, where $\sigma_1\ge\dots\ge\sigma_r>0$ are singular values. 

For $r>0$ and $\mathbf{x}\in\mathbb{R}^D$,  let $B_D(\mathbf{x},r)$ denote the closed Euclidean ball of radius $r$ centered at $\mathbf{x}$. For a set $A\subset\mathbb{R}^D$, its $r$-tubular neighborhood is $A\oplus B_D(0,r):=\{\,\mathbf{x}+\mathbf{y}:\ \mathbf{x}\in A,\,\|\mathbf{y}\|\le r\,\}$, where $\oplus$ denotes the Minkowski sum.

\paragraph{Geometric definitions.}
Let $\mathcal{M}\subset\mathbb{R}^D$ be a compact, connected $d$-dimensional $C^2$ submanifold without boundary ($d\ll D$). For $\mathbf{x}\in\mathcal{M}$, denote the tangent space by $T_{\mathbf{x}}\mathcal{M}$ and the normal space by $N_{\mathbf{x}}\mathcal{M}=(T_{\mathbf{x}}\mathcal{M})^\perp$. Let $\Pi_{\mathbf{x}}:\mathbb{R}^D\to T_{\mathbf{x}}\mathcal{M}$ and $\Pi_{\mathbf{x}}^\perp:\mathbb{R}^D\to N_{\mathbf{x}}\mathcal{M}$ be the corresponding orthogonal projectors.

The \emph{reach} of $\mathcal{M}$ is defined as
\[
\tau\ :=\ \mathrm{reach}(\mathcal{M})\ =\ \sup\big\{\,r>0:\ \text{every } \mathbf{y}\in \mathcal{M}\oplus B_D(0,r)\ \text{admits a unique nearest point in }\mathcal{M}\,\big\}.
\]
Intuitively, $\tau$ lower-bounds the radius of curvature: for $C^2$ manifolds with reach $\tau$, the second fundamental form satisfies $\|\mathrm{II}_{\mathbf{x}}\|\le \tau^{-1}$ for all $\mathbf{x}\in\mathcal{M}$. Within the $\tau$-tubular neighborhood $\mathcal{M}\oplus B_D(0,\tau)$, the nearest-point projection $\pi_{\mathcal{M}}:\mathcal{M}\oplus B_D(0,\tau)\to\mathcal{M}$ is well-defined and $C^1$ \citep{federer1959curvature, niyogi2008finding}.

For any $\mathbf{x}\in\mathbb{R}^D$, the distance to $\mathcal{M}$ is
\[
d(\mathbf{x},\mathcal{M})\ :=\ \inf_{\mathbf{u}\in\mathcal{M}}\ \|\mathbf{x}-\mathbf{u}\|.
\]
When $\mathbf{x}\in\mathcal{M}\oplus B_D(0,\tau)$, we have
$d(\mathbf{x},\mathcal{M})=\|\mathbf{x}-\pi_{\mathcal{M}}(\mathbf{x})\|$
and
\[
\mathbf{x}\ =\ \pi_{\mathcal{M}}(\mathbf{x})\;+\;\Pi^\perp_{\pi_{\mathcal{M}}(\mathbf{x})}\big(\mathbf{x}-\pi_{\mathcal{M}}(\mathbf{x})\big),
\]
expressing $\mathbf{x}$ as its projection onto $\mathcal{M}$ plus a normal component.

\section{Technical Proofs}
\label{app:proofs}
To maintain concise expressions in the subsequent proofs, we treat the parameters $d$, $f_{\min}$, and $f_{\max}$ as constants, suppressing their explicit dependence in our asymptotic analysis.
\subsection{Proof of Lemma~1}
Write the private reference dataset as 
\(
\mathcal Y=\{\mathbf y_1,\dots,\mathbf y_n\}
\)
and let \(\mathcal Y^{(i)}\) denote a replace-one neighboring dataset obtained by replacing $\mathbf{y}_i$ with $\mathbf{y}_i'$.
For any dataset \(\mathcal D\in\{\mathcal Y,\mathcal Y^{(i)}\}\), define
\[
I_{\mathbf z}(\mathcal D):=\{j\in[n]:\mathbf y_j(\mathcal D)\in B_D(\mathbf z,h)\},\qquad
n_{\mathbf z}(\mathcal D):=|I_{\mathbf z}(\mathcal D)|,
\]
where $\mathbf{y}_j(\mathcal{D})$ denotes the $j$-th data point in $\mathcal{D}$. Let $\bar{\mathbf{y}}_{\mathbf{z}}(\mathcal{D})$ be the local mean over $I_{\mathbf{z}}(\mathcal{D})$. To simplify the sensitivity analysis, we work with the scaled covariance
\[
\widehat{\boldsymbol\Sigma}^{(s)}_{\mathbf z}(\mathcal D)
:=\frac{1}{n}\sum_{j\in I_{\mathbf z}(\mathcal D)}
\big(\mathbf y_j(\mathcal D)-\bar{\mathbf y}_{\mathbf z}(\mathcal D)\big)
\big(\mathbf y_j(\mathcal D)-\bar{\mathbf y}_{\mathbf z}(\mathcal D)\big)^\top,
\]
which has the same eigenvectors as $\widehat{\boldsymbol{\Sigma}}_{\mathbf{z}}(\mathcal{D}) = \frac{n}{n_{\mathbf{z}}(\mathcal{D})-1}\widehat{\boldsymbol{\Sigma}}^{(s)}_{\mathbf{z}}(\mathcal{D})$.
Let \(\widehat{\mathbf P}_{\mathbf z}(\mathcal D)\) be the rank-\(d\) projector onto the top-\(d\) eigenspace of
\(\widehat{\boldsymbol\Sigma}^{(s)}_{\mathbf z}(\mathcal D)\).
For brevity, write
\[
\widehat{\mathbf P}_{\mathbf z}:=\widehat{\mathbf P}_{\mathbf z}(\mathcal Y),
\qquad
\widehat{\mathbf P}^{(i)}_{\mathbf z}:=\widehat{\mathbf P}_{\mathbf z}(\mathcal Y^{(i)}),
\]
and similarly for \(I_{\mathbf z},n_{\mathbf z},\bar{\mathbf y}_{\mathbf z},\widehat{\boldsymbol\Sigma}^{(s)}_{\mathbf z}\). For $\mathcal{D} \in \{\mathcal{Y}, \mathcal{Y}^{(i)}\}$, define the indicator
\[
U_j(\mathcal{D}) := \mathbb{I}\big(\mathbf{y}_j(\mathcal{D}) \in B_D(\mathbf{z}, h)\big),
\]
and the centered data
\[
\tilde{\mathbf{y}}_j(\mathcal{D}) := \mathbf{y}_j(\mathcal{D}) - \mathbf{z},
\qquad
\bar{\tilde{\mathbf{y}}}_{\mathbf{z}}(\mathcal{D}) := \bar{\mathbf{y}}_{\mathbf{z}}(\mathcal{D}) - \mathbf{z}.
\]

\medskip
\noindent\textbf{Step 1: Covariance decomposition.}
A direct expansion yields
\[\sum_{j=1}^n U_j(\mathcal D)\big(\mathbf y_j(\mathcal D)-\bar{\mathbf y}_{\mathbf z}(\mathcal D)\big)\big(\mathbf y_j(\mathcal D)-\bar{\mathbf y}_{\mathbf z}(\mathcal D)\big)^\top =
\sum_{j=1}^n U_j(\mathcal D)\tilde{\mathbf y}_j(\mathcal D)\tilde{\mathbf y}_j(\mathcal D)^\top
-
n_{\mathbf z}(\mathcal D)\,\bar{\tilde{\mathbf y}}_{\mathbf z}(\mathcal D)\bar{\tilde{\mathbf y}}_{\mathbf z}(\mathcal D)^\top.\]
Therefore,
\[
\widehat{\boldsymbol\Sigma}^{(s)}_{\mathbf z}(\mathcal D)
=\frac{1}{n}\sum_{j=1}^n U_j(\mathcal D)\tilde{\mathbf y}_j(\mathcal D)\tilde{\mathbf y}_j(\mathcal D)^\top
-\frac{n_{\mathbf z}(\mathcal D)}{n}\,\bar{\tilde{\mathbf y}}_{\mathbf z}(\mathcal D)\bar{\tilde{\mathbf y}}_{\mathbf z}(\mathcal D)^\top:=\widehat{\boldsymbol\Sigma}^{(s)}_{\mathbf z,1}(\mathcal D)+\widehat{\boldsymbol\Sigma}^{(s)}_{\mathbf z,2}(\mathcal D).
\]

\medskip
\noindent\textbf{Step 2: Sensitivity of $\widehat{\boldsymbol{\Sigma}}^{(s)}_{\mathbf{z},1}$.}
Since \(\mathcal Y\) and \(\mathcal Y^{(i)}\) differ only in the \(i\)-th record, all summands except the \(i\)-th remain identical. Thus
\[
\widehat{\boldsymbol{\Sigma}}^{(s)}_{\mathbf{z},1}(\mathcal{Y}) - \widehat{\boldsymbol{\Sigma}}^{(s)}_{\mathbf{z},1}(\mathcal{Y}^{(i)})
= \frac{1}{n}\Big[U_i(\mathcal{Y}) \tilde{\mathbf{y}}_i(\mathcal{Y}) \tilde{\mathbf{y}}_i(\mathcal{Y})^\top
- U_i(\mathcal{Y}^{(i)}) \tilde{\mathbf{y}}_i(\mathcal{Y}^{(i)}) \tilde{\mathbf{y}}_i(\mathcal{Y}^{(i)})^\top\Big].
\]
Since $\|\tilde{\mathbf{y}}_j(\mathcal{D})\| \le h$, we have $\|\tilde{\mathbf{y}}_j(\mathcal{D}) \tilde{\mathbf{y}}_j(\mathcal{D})^\top\|_{\mathrm{F}} \le h^2$. Therefore,
\[
\big\|\widehat{\boldsymbol{\Sigma}}^{(s)}_{\mathbf{z},1}(\mathcal{Y}) - \widehat{\boldsymbol{\Sigma}}^{(s)}_{\mathbf{z},1}(\mathcal{Y}^{(i)})\big\|_{\mathrm{F}}
\le \frac{2h^2}{n}.
\]
\medskip
\noindent\textbf{Step 3: Sensitivity of $\widehat{\boldsymbol{\Sigma}}^{(s)}_{\mathbf{z},2}$.}
Write $n_{\mathbf{z}} := n_{\mathbf{z}}(\mathcal{Y})$, $n'_{\mathbf{z}} := n_{\mathbf{z}}(\mathcal{Y}^{(i)})$, $\mathbf{a} := \bar{\tilde{\mathbf{y}}}_{\mathbf{z}}(\mathcal{Y})$, and $\mathbf{a}' := \bar{\tilde{\mathbf{y}}}_{\mathbf{z}}(\mathcal{Y}^{(i)})$, where we interpret the mean as $\mathbf{0}$ if the neighborhood is empty. Since all points in the neighborhood lie in $B_D(\mathbf{z}, h)$, we have $\|\mathbf{a}\| \le h$ and $\|\mathbf{a}'\| \le h$. Define the neighborhood sums
\[
\mathbf{b} := \sum_{j=1}^n U_j(\mathcal{Y}) \tilde{\mathbf{y}}_j(\mathcal{Y}) = n_{\mathbf{z}} \mathbf{a},
\qquad
\mathbf{b}' := \sum_{j=1}^n U_j(\mathcal{Y}^{(i)}) \tilde{\mathbf{y}}_j(\mathcal{Y}^{(i)}) = n'_{\mathbf{z}} \mathbf{a}'.
\]
Then
\[
\widehat{\boldsymbol{\Sigma}}^{(s)}_{\mathbf{z},2}(\mathcal{Y}) - \widehat{\boldsymbol{\Sigma}}^{(s)}_{\mathbf{z},2}(\mathcal{Y}^{(i)})
= -\frac{1}{n}(n_{\mathbf{z}} \mathbf{a} \mathbf{a}^\top - n'_{\mathbf{z}} \mathbf{a}' (\mathbf{a}')^\top)
= -\frac{1}{n}(\mathbf{b} \mathbf{a}^\top - \mathbf{b}' (\mathbf{a}')^\top).
\]
Adding and subtracting $\mathbf{b}' \mathbf{a}^\top$,
\[
\mathbf{b} \mathbf{a}^\top - \mathbf{b}' (\mathbf{a}')^\top
= (\mathbf{b} - \mathbf{b}') \mathbf{a}^\top + \mathbf{b}' (\mathbf{a} - \mathbf{a}')^\top,
\]
so
\[
\|\mathbf{b} \mathbf{a}^\top - \mathbf{b}' (\mathbf{a}')^\top\|_{\mathrm{F}}
\le \|\mathbf{b} - \mathbf{b}'\| \|\mathbf{a}\| + \|\mathbf{b}'\| \|\mathbf{a} - \mathbf{a}'\|=\|\mathbf b-\mathbf b'\|\,\|\mathbf a\|+\|\mathbf a'\|\,\|n_{\mathbf z}'(\mathbf a-\mathbf a')\|.
\]
Since only the $i$-th record changes,
\[
\mathbf{b} - \mathbf{b}' = U_i(\mathcal{Y}) \tilde{\mathbf{y}}_i(\mathcal{Y}) - U_i(\mathcal{Y}^{(i)}) \tilde{\mathbf{y}}_i(\mathcal{Y}^{(i)}),
\qquad
n'_{\mathbf{z}} - n_{\mathbf{z}} = U_i(\mathcal{Y}^{(i)}) - U_i(\mathcal{Y}),
\]
implying $|n'_{\mathbf{z}} - n_{\mathbf{z}}| \le 1$ and $\|\mathbf{b} - \mathbf{b}'\| \le 2h$. Moreover,
\[
\|n_{\mathbf z}'(\mathbf a-\mathbf a')\|
=\|n_{\mathbf z}'\mathbf a-\mathbf b'\|
=\|(n_{\mathbf z}'-n_{\mathbf z})\mathbf a+(\mathbf b-\mathbf b')\|
\le |n_{\mathbf z}'-n_{\mathbf z}|\,\|\mathbf a\|+\|\mathbf b-\mathbf b'\|
\le 2h,
\]
where the final inequality uses \(\|\mathbf a\|\le h\) and the observation that: when \(|n_{\mathbf z}'-n_{\mathbf z}|=1\), at most one indicator flips, implying
\(\|\mathbf b-\mathbf b'\|\le h\); when \(h\le\|\mathbf b-\mathbf b'\|\le2h\), we have \(n_{\mathbf z}'=n_{\mathbf z}\).
Combining the above bounds gives
\[
\|\mathbf{b} \mathbf{a}^\top - \mathbf{b}' (\mathbf{a}')^\top\|_{\mathrm F}
\le (2h)\cdot h + h\cdot (2h)=4h^2,
\]
and therefore
\[
\big\|\widehat{\boldsymbol\Sigma}^{(s)}_{\mathbf z,2}(\mathcal Y)-\widehat{\boldsymbol\Sigma}^{(s)}_{\mathbf z,2}(\mathcal Y^{(i)})\big\|_{\mathrm F}
\le \frac{4h^2}{n}.
\]

\medskip
\noindent\textbf{Step 4: Projector perturbation bound.}
The bounds in Steps~1--3 are deterministic and rely only on the fact that \(\mathcal Y\) and \(\mathcal Y^{(i)}\) differ by a single record. Hence the same reasoning applies for every \(i\in[n]\). By the triangle inequality,
\[
\max_{i\in[n]}\ 
\big\|\widehat{\boldsymbol\Sigma}^{(s)}_{\mathbf z}(\mathcal Y)-\widehat{\boldsymbol\Sigma}^{(s)}_{\mathbf z}(\mathcal Y^{(i)})\big\|_{\mathrm F}
\le \frac{2h^2}{n}+\frac{4h^2}{n}
=\frac{6h^2}{n}.
\]
Define the eigengap
\[
\mathrm{gap}_{\mathbf{z}}
:= \lambda_d(\widehat{\boldsymbol{\Sigma}}^{(s)}_{\mathbf{z}}) - \lambda_{d+1}(\widehat{\boldsymbol{\Sigma}}^{(s)}_{\mathbf{z}}).
\]
By Lemma~\ref{lemma:local_PCA}(ii), with probability at least $1-n^{-(1+c)}$, we have $\mathrm{gap}_{\mathbf{z}} \gtrsim h^{d+2}$.
Applying a variant of the Davis--Kahan theorem (Theorem 2 in \citealp{yu2015useful}),
\[
\max_{i \in [n]}
\big\|\widehat{\mathbf{P}}_{\mathbf{z}} - \widehat{\mathbf{P}}^{(i)}_{\mathbf{z}}\big\|_{\mathrm{F}}
\le \frac{2\sqrt{2}}{\mathrm{gap}_{\mathbf{z}}}
\max_{i \in [n]}
\big\|\widehat{\boldsymbol{\Sigma}}^{(s)}_{\mathbf{z}} - \widehat{\boldsymbol{\Sigma}}^{(s)}_{\mathbf{z}}(\mathcal{Y}^{(i)})\big\|_{\mathrm{F}}
\lesssim \frac{h^2/n}{h^{d+2}}
= \frac{1}{nh^d}.
\]
This completes the proof.

\subsection{Proof of Theorem~1}
We begin by decomposing the error term into two parts
\[
\|\Pi_{\mathbf z^\star}-\widetilde{\mathbf P}_{\mathbf z}\|
\ \le\
\|\Pi_{\mathbf z^\star}-\widehat{\mathbf P}_{\mathbf z}\|
\ +\
\|\widehat{\mathbf P}_{\mathbf z}-\widetilde{\mathbf P}_{\mathbf z}\|.
\]
We bound the two terms separately.

\medskip
\noindent\textbf{Step 1: Non-private estimation error.}
By Lemma~\ref{lemma:local_PCA}(i), the non-private projector $\widehat{\mathbf{P}}_{\mathbf{z}}$ satisfies
\[
\|\Pi_{\mathbf z^\star}-\widehat{\mathbf P}_{\mathbf z}\|
\ \lesssim\
\frac{h}{\tau}+\frac{\sigma}{h},
\]
with probability at least $1-n^{-(1+c)}$.

\medskip
\noindent\textbf{Step 2: Privacy-induced perturbation error.}
The DP mechanism adds symmetric Gaussian noise $\mathbf{W}$ to $\widehat{\mathbf{P}}_{\mathbf{z}}$, where $W_{jk} = W_{kj} \sim \mathcal{N}(0, \varsigma^2)$ for $j \le k$, and then extracts the top-$d$ spectral projector $\widetilde{\mathbf{P}}_{\mathbf{z}}$ of the noisy matrix $\widehat{\mathbf{P}}_{\mathbf{z}} + \mathbf{W}$.

Since $\widehat{\mathbf P}_{\mathbf z}$ is a rank-$d$ projector with eigenvalues $1$ (with multiplicity $d$) and $0$ (with multiplicity $D-d$), it has spectral gap $1$ between the $d$-th and $(d+1)$-th eigenvalues. By the Davis--Kahan theorem
\[
\|\widetilde{\mathbf P}_{\mathbf z}-\widehat{\mathbf P}_{\mathbf z}\|
\ \lesssim\ \|\mathbf W\|.
\]
A standard concentration result for Gaussian random matrices (Theorem 4.4.5 in \citealp{vershynin2018high}) yields
\[
\|\mathbf W\|\ \lesssim\ \sqrt D\,\varsigma,
\]
with probability at least $1 - e^{-cD}$. 

By Lemma~1 with the calibration \[
\varsigma = \frac{\sqrt{2\ln(c_1/\delta)}}{\varepsilon} \cdot \frac{C}{nh^d},
\]
we obtain
\[
\|\widetilde{\mathbf P}_{\mathbf z}-\widehat{\mathbf P}_{\mathbf z}\|
\ \lesssim\
\frac{\sqrt D}{n\varepsilon}\,h^{-d}\,\sqrt{2\ln\frac{c_1}{\delta}}.
\]
with probability at least $1 - e^{-cD}$.

\medskip
\noindent\textbf{Conclusion.}
Taking a union bound over the failure probabilities in Steps 1--2 and Lemma~1, we obtain
\[
\|\Pi_{\mathbf z^\star}-\widetilde{\mathbf P}_{\mathbf z}\|
\ \lesssim\
\frac{h}{\tau}+\frac{\sigma}{h}
+\frac{\sqrt D}{n\varepsilon}\,h^{-d}\,\sqrt{2\ln\frac{c_1}{\delta}},
\]
with probability at least $1 - n^{-(1+c)} - e^{-cD}$. This completes the proof.

\subsection{Proof of Lemma~2}
Consider a replace-one neighboring pair $(\mathcal{Y}, \mathcal{Y}^{(i)})$ that differs only in the $i$-th record, where $\mathbf{y}_i$ is replaced by $\mathbf{y}_i'$. We adopt the same notation introduced in the proof of Lemma~1. For $\mathbf{x}\in\mathcal{M}_{\sqrt{\sigma}}$ and  dataset $\mathcal D\in\{\mathcal Y,\mathcal Y^{(i)}\}$, define
\[
\widetilde{\alpha}_j(\mathbf{x}; \mathcal{D})
:= \Big(1 - \frac{\|\mathbf{x} - \mathbf{y}_j(\mathcal{D})\|^2}{h^2}\Big)_+^\beta,
\qquad
S_{\mathbf{x}}(\mathcal{D}) := \sum_{j=1}^n \widetilde{\alpha}_j(\mathbf{x}; \mathcal{D}),
\qquad
\alpha_j(\mathbf{x}; \mathcal{D})
:= \frac{\widetilde{\alpha}_j(\mathbf{x}; \mathcal{D})}{S_{\mathbf{x}}(\mathcal{D})}.
\]
Then 
\[
\bar{\mathbf{b}}_{\mathbf{x}} = \sum_{j=1}^n \alpha_j(\mathbf{x}; \mathcal{Y}) \mathbf{y}_j,
\qquad
\widehat{\mathbf{P}}^{\mathrm{w}}_{\mathbf{x}}
= \sum_{j=1}^n \alpha_j(\mathbf{x}; \mathcal{Y}) \widehat{\mathbf{P}}_{\mathbf{y}_j},
\]
where $\widehat{\mathbf{P}}_{\mathbf{y}_j}$ is the rank-$d$ local PCA projector computed at reference point $\mathbf{y}_j$ using the bandwidth $h$ and dataset $\mathcal{Y}$.

\medskip
\noindent\textbf{Step 1: Lower bound on the weight normalizer.}
Let
\[
I_{\mathbf{x}}(h/2) := \{j \in [n]: \|\mathbf{y}_j - \mathbf{x}\| \le h/2\},
\qquad
n_{\mathbf{x}}(h/2) := |I_{\mathbf{x}}(h/2)|.
\]
For $j \in I_{\mathbf{x}}(h/2)$,
\[
\widetilde{\alpha}_j(\mathbf{x}; \mathcal{Y})
\ge \big(1 - (1/2)^2\big)^\beta = (3/4)^\beta,
\]
so $S_{\mathbf{x}}(\mathcal{Y}) \ge (3/4)^\beta n_{\mathbf{x}}(h/2)$.
Since $\mathbf{x} \in \mathcal{M}_{\sqrt{\sigma}}$ and $h \gtrsim \sqrt{\sigma}$, we have $d(\mathbf{x}, \mathcal{M}) \le h/(2\sqrt{2})$ and (adjusting implicit constants if needed) $\sigma \le h/8$. Applying Lemma~\ref{lemma:density} with bandwidth $h/2$ gives
\[
n_{\mathbf{x}}(h/2) \gtrsim n(h/2)^d \asymp nh^d
\]
with probability at least $1-n^{-c}$. Therefore,
\begin{equation}
\label{eq:Sx-lb}
S_{\mathbf{x}}(\mathcal{Y}) \gtrsim nh^d.
\end{equation}
Moreover, $|S_{\mathbf{x}}(\mathcal{Y}) - S_{\mathbf{x}}(\mathcal{Y}^{(i)})| \le 1$ (since at most one weight changes by at most $1$), so
\begin{equation}
\label{eq:Sx-lb-nei}
S_{\mathbf{x}}(\mathcal{Y}^{(i)}) \gtrsim nh^d
\quad\text{and}\quad
\max_{j \in [n]} \big[\alpha_j(\mathbf{x}; \mathcal{Y}) \vee \alpha_j(\mathbf{x}; \mathcal{Y}^{(i)})\big]
\lesssim \frac{1}{nh^d}.
\end{equation}

\medskip
\noindent\textbf{Step 2: Perturbation of normalized weights.}
Write $S := S_{\mathbf{x}}(\mathcal{Y})$ and $S' := S_{\mathbf{x}}(\mathcal{Y}^{(i)})$. For $j \neq i$, the unnormalized weights are identical, so
\[
\big|\alpha_j(\mathbf{x}; \mathcal{Y}) - \alpha_j(\mathbf{x}; \mathcal{Y}^{(i)})\big|
= \widetilde{\alpha}_j(\mathbf{x}; \mathcal{Y}) \Big|\frac{1}{S} - \frac{1}{S'}\Big|
= \widetilde{\alpha}_j(\mathbf{x}; \mathcal{Y}) \frac{|S - S'|}{SS'}.
\]
Summing over $j \neq i$,
\[
\sum_{j \neq i} \big|\alpha_j(\mathbf{x}; \mathcal{Y}) - \alpha_j(\mathbf{x}; \mathcal{Y}^{(i)})\big|
\le \frac{|S - S'|}{SS'} \sum_{j \neq i} \widetilde{\alpha}_j(\mathbf{x}; \mathcal{Y})
\le \frac{|S - S'|}{S'} \le \frac{1}{S'}.
\]
For $j = i$, by the triangle inequality,
\[
\big|\alpha_i(\mathbf{x}; \mathcal{Y}) - \alpha_i(\mathbf{x}; \mathcal{Y}^{(i)})\big|
\le \frac{|\widetilde{\alpha}_i(\mathbf{x}; \mathcal{Y}) - \widetilde{\alpha}_i(\mathbf{x}; \mathcal{Y}^{(i)})|}{S}
+ \widetilde{\alpha}_i(\mathbf{x}; \mathcal{Y}^{(i)}) \Big|\frac{1}{S} - \frac{1}{S'}\Big|
\le \frac{1}{S} + \frac{1}{S'}.
\]
Combining and using \eqref{eq:Sx-lb}--\eqref{eq:Sx-lb-nei}, on the event of Step~1, 
\begin{equation}
\label{eq:weight-TV}
\sum_{j=1}^n \big|\alpha_j(\mathbf{x}; \mathcal{Y}) - \alpha_j(\mathbf{x}; \mathcal{Y}^{(i)})\big|
\lesssim \frac{1}{nh^d}.
\end{equation}

\medskip
\noindent\textbf{Step 3: Sensitivity of the weighted mean.}
Using $\sum_j \alpha_j(\mathbf{x}; \mathcal{D}) = 1$, we write
\[
\bar{\mathbf{b}}_{\mathbf{x}}(\mathcal{D}) - \mathbf{x}
= \sum_{j=1}^n \alpha_j(\mathbf{x}; \mathcal{D}) (\mathbf{y}_j(\mathcal{D}) - \mathbf{x}).
\]
Since $\alpha_j(\mathbf{x}; \mathcal{D}) > 0$ implies $\|\mathbf{y}_j(\mathcal{D}) - \mathbf{x}\| \le h$,
\[
\begin{aligned}
\big\|\bar{\mathbf{b}}_{\mathbf{x}}(\mathcal{Y}) - \bar{\mathbf{b}}_{\mathbf{x}}(\mathcal{Y}^{(i)})\big\|
&\le \sum_{j \neq i} \big|\alpha_j(\mathbf{x}; \mathcal{Y}) - \alpha_j(\mathbf{x}; \mathcal{Y}^{(i)})\big| \|\mathbf{y}_j - \mathbf{x}\| \\
&\quad + \alpha_i(\mathbf{x}; \mathcal{Y}) \|\mathbf{y}_i - \mathbf{x}\|
+ \alpha_i(\mathbf{x}; \mathcal{Y}^{(i)}) \|\mathbf{y}_i' - \mathbf{x}\| \\
&\le h \sum_{j=1}^n \big|\alpha_j(\mathbf{x}; \mathcal{Y}) - \alpha_j(\mathbf{x}; \mathcal{Y}^{(i)})\big|
+ h\big(\alpha_i(\mathbf{x}; \mathcal{Y}) + \alpha_i(\mathbf{x}; \mathcal{Y}^{(i)})\big) \\
&\lesssim \frac{h}{nh^d} = \frac{1}{nh^{d-1}},
\end{aligned}
\]
where we used \eqref{eq:weight-TV} and \eqref{eq:Sx-lb-nei}.

\medskip
\noindent\textbf{Step 4: Sensitivity of the weighted projector.}
Recall that $\widehat{\mathbf{P}}_{\mathbf{y}_j}$ (resp.\ $\widehat{\mathbf{P}}_{\mathbf{y}_j}^{(i)}$) denotes the local PCA projector at reference point $\mathbf{y}_j$ computed from dataset $\mathcal{Y}$ (resp.\ $\mathcal{Y}^{(i)}$). Write
\[
\widehat{\mathbf{P}}^{\mathrm{w}}_{\mathbf{x}}(\mathcal{D})
= \sum_{j=1}^n \alpha_j(\mathbf{x}; \mathcal{D}) \widehat{\mathbf{P}}_{\mathbf{y}_j(\mathcal{D})}(\mathcal{D}),
\qquad \mathcal{D} \in \{\mathcal{Y}, \mathcal{Y}^{(i)}\}.
\]
Decompose
\[
\widehat{\mathbf{P}}^{\mathrm{w}}_{\mathbf{x}}(\mathcal{Y}) - \widehat{\mathbf{P}}^{\mathrm{w}}_{\mathbf{x}}(\mathcal{Y}^{(i)})
= \sum_{j=1}^n \big[\alpha_j(\mathbf{x}; \mathcal{Y}) - \alpha_j(\mathbf{x}; \mathcal{Y}^{(i)})\big] \widehat{\mathbf{P}}_{\mathbf{y}_j}
+ \sum_{j=1}^n \alpha_j(\mathbf{x}; \mathcal{Y}^{(i)}) \big[\widehat{\mathbf{P}}_{\mathbf{y}_j} - \widehat{\mathbf{P}}_{\mathbf{y}_j}^{(i)}\big]:=\Delta_1+\Delta_2.
\]
\noindent\emph{Bounding $\Delta_1$:} Since $\|\widehat{\mathbf{P}}_{\mathbf{y}_j}\|_{\mathrm{F}} = \sqrt{d}$,
\[
\|\Delta_1\|_{\mathrm{F}}
\le \sqrt{d} \sum_{j=1}^n \big|\alpha_j(\mathbf{x}; \mathcal{Y}) - \alpha_j(\mathbf{x}; \mathcal{Y}^{(i)})\big|
\lesssim \frac{1}{nh^d}
\]
by \eqref{eq:weight-TV}.

\medskip
\noindent\emph{Bounding $\Delta_2$:} We split this into the contribution from $j = i$ and $j \neq i$.

\smallskip
\noindent\emph{Case $j = i$:} By \eqref{eq:Sx-lb-nei},
\[
\alpha_i(\mathbf{x}; \mathcal{Y}^{(i)}) \big\|\widehat{\mathbf{P}}_{\mathbf{y}_i} - \widehat{\mathbf{P}}_{\mathbf{y}_i'}^{(i)}\big\|_{\mathrm{F}}
\le \alpha_i(\mathbf{x}; \mathcal{Y}^{(i)}) \big(\|\widehat{\mathbf{P}}_{\mathbf{y}_i}\|_{\mathrm{F}} + \|\widehat{\mathbf{P}}_{\mathbf{y}_i'}^{(i)}\|_{\mathrm{F}}\big)
\le 2\sqrt{d} \alpha_i(\mathbf{x}; \mathcal{Y}^{(i)})
\lesssim \frac{1}{nh^d}.
\]

\smallskip
\noindent\emph{Case $j \neq i$:} Here the \emph{center} $\mathbf{y}_j$ is unchanged across $\mathcal{Y}$ and $\mathcal{Y}^{(i)}$, but the neighborhood $I_{\mathbf{y}_j}(\mathcal{D}) := \{k: \|\mathbf{y}_k(\mathcal{D}) - \mathbf{y}_j\| \le h\}$ may differ. The projector $\widehat{\mathbf{P}}_{\mathbf{y}_j}(\mathcal{D})$ corresponds to the scaled local covariance
\[
\widehat{\boldsymbol{\Sigma}}^{(s)}_{\mathbf{y}_j}(\mathcal{D})
= \frac{1}{n} \sum_{k \in I_{\mathbf{y}_j}(\mathcal{D})}
(\mathbf{y}_k(\mathcal{D}) - \bar{\mathbf{y}}_{\mathbf{y}_j}(\mathcal{D}))
(\mathbf{y}_k(\mathcal{D}) - \bar{\mathbf{y}}_{\mathbf{y}_j}(\mathcal{D}))^\top.
\]
By the same deterministic covariance sensitivity analysis as in Steps 1--3 of Lemma~1 (applied with center $\mathbf{y}_j$ instead of $\mathbf{z}$),
\[
\big\|\widehat{\boldsymbol{\Sigma}}^{(s)}_{\mathbf{y}_j}(\mathcal{Y}) - \widehat{\boldsymbol{\Sigma}}^{(s)}_{\mathbf{y}_j}(\mathcal{Y}^{(i)})\big\|_{\mathrm{F}}
\le \frac{6h^2}{n}.
\]
Moreover, since $\mathbf{y}_j \in \mathcal{M}_\sigma$ and the bandwidth satisfies $(\log n/n)^{1/d} \vee \sigma \lesssim h \ll 1 \wedge \tau$, Lemma~\ref{lemma:local_PCA}(ii) gives the eigengap bound
\[
\lambda_d\big(\widehat{\boldsymbol{\Sigma}}^{(s)}_{\mathbf{y}_j}(\mathcal{Y})\big)
- \lambda_{d+1}\big(\widehat{\boldsymbol{\Sigma}}^{(s)}_{\mathbf{y}_j}(\mathcal{Y})\big)
\gtrsim h^{d+2}
\]
with probability at least $1-n^{-(1+c)}$ for each fixed $j$. Taking a union bound over $j \in [n]$, this holds simultaneously for all $j$ with probability at least $1-n^{-c}$. On this event, the variant of the Davis--Kahan theorem (Theorem 2 in \citealp{yu2015useful}) yields
\[
\big\|\widehat{\mathbf{P}}_{\mathbf{y}_j} - \widehat{\mathbf{P}}_{\mathbf{y}_j}^{(i)}\big\|_{\mathrm{F}}
\lesssim \frac{h^2/n}{h^{d+2}} = \frac{1}{nh^d}
\]
uniformly for all $j \neq i$. Therefore,
\[
\sum_{j \neq i} \alpha_j(\mathbf{x}; \mathcal{Y}^{(i)}) \big\|\widehat{\mathbf{P}}_{\mathbf{y}_j} - \widehat{\mathbf{P}}_{\mathbf{y}_j}^{(i)}\big\|_{\mathrm{F}}
\lesssim \sum_{j \neq i} \alpha_j(\mathbf{x}; \mathcal{Y}^{(i)}) \frac{1}{nh^d}
\le \frac{1}{nh^d}.
\]
Combining the cases $j = i$ and $j \neq i$, we obtain $\|\Delta_2\|_{\mathrm{F}} \lesssim (nh^d)^{-1}$.

\medskip
\noindent\textbf{Conclusion.}
The bounds in Steps 1--4 hold on the intersection of:
(i) the event in Lemma~\ref{lemma:density} for the neighborhood $I_{\mathbf{x}}(h/2)$, with probability $\ge 1-n^{-c}$; and
(ii) the simultaneous eigengap event for all reference points $\{\mathbf{y}_j\}_{j=1}^n$ from Step 4, with probability $\ge 1-n^{-c}$.

On the intersection of events (i) and (ii), the sensitivity bounds hold uniformly for all $i \in [n]$. Taking a union bound over (i) and (ii) yields the stated probability bound. This completes the proof.

\subsection{Proof of Theorem~2}
For $\mathbf x\in\mathcal{Z}(\mathbf{z}),$ we have
$\|\mathbf x-\mathbf z\|\le c\sqrt{\sigma}.$ Since $d(\mathbf z,\mathcal M)\le \sqrt{\sigma}$, the triangle inequality yields $d(\mathbf x,\mathcal M)\le (c+1)\sqrt{\sigma}$. Provided that $\tau/\sqrt{\sigma}$ is sufficiently large, $\mathbf x^\star:=\pi_{\mathcal M}(\mathbf x)$ is well-defined and satisfies
$\|\mathbf x-\mathbf x^\star\|=d(\mathbf x,\mathcal M)\lesssim \sqrt{\sigma}$.

\medskip
\noindent\textbf{Step 1: Decomposition of the projection error.}
By the construction of $\widetilde{\mathbf{f}}(\mathbf{x})$, we have
\[
\widetilde{\mathbf{f}}(\mathbf{x})
= \widetilde{\Psi}^{\mathrm{w}}_{\mathbf{x}} (\mathbf{x} - \widetilde{\mathbf{b}}_{\mathbf{x}})
= (\mathbf{I} - \widetilde{\mathbf{P}}^{\mathrm{w}}_{\mathbf{x}})(\mathbf{x} - \widetilde{\mathbf{b}}_{\mathbf{x}})
= \mathbf{0},
\]
where $\widetilde{\mathbf{b}}_{\mathbf{x}} = \bar{\mathbf{b}}_{\mathbf{x}} + \boldsymbol{\xi}_{\mathrm{m}}$ with $\boldsymbol{\xi}_{\mathrm{m}} \sim \mathcal{N}(0, \varsigma_{\mathrm{m}}^2 \mathbf{I}_D)$. As a result,
\begin{equation}
\label{eq:pf-basic}
(\mathbf{I} - \widetilde{\mathbf{P}}^{\mathrm{w}}_{\mathbf{x}})(\mathbf{x} - \bar{\mathbf{b}}_{\mathbf{x}})
= (\mathbf{I} - \widetilde{\mathbf{P}}^{\mathrm{w}}_{\mathbf{x}}) \boldsymbol{\xi}_{\mathrm{m}}.
\end{equation}

Recall that $\Pi_{\mathbf{x}^\star}$ denotes the orthogonal projector onto the tangent space $T_{\mathbf{x}^\star}\mathcal{M}$. Since $\mathbf{x} - \mathbf{x}^\star$ is normal to $T_{\mathbf{x}^\star}\mathcal{M}$, we have
\[
\Pi_{\mathbf{x}^\star}(\mathbf{x} - \mathbf{x}^\star) = \mathbf{0}.
\]
Therefore,
\[
\mathbf{x} - \mathbf{x}^\star
= (\mathbf{I} - \Pi_{\mathbf{x}^\star})(\mathbf{x} - \mathbf{x}^\star)
= (\mathbf{I} - \Pi_{\mathbf{x}^\star})(\mathbf{x} - \bar{\mathbf{b}}_{\mathbf{x}})
+ (\mathbf{I} - \Pi_{\mathbf{x}^\star})(\bar{\mathbf{b}}_{\mathbf{x}} - \mathbf{x}^\star).
\]
Using the identity
\[
\mathbf{I} - \Pi_{\mathbf{x}^\star}
= (\mathbf{I} - \widetilde{\mathbf{P}}^{\mathrm{w}}_{\mathbf{x}})
+ (\widetilde{\mathbf{P}}^{\mathrm{w}}_{\mathbf{x}} - \Pi_{\mathbf{x}^\star}),
\]
we have
\[
(\mathbf{I} - \Pi_{\mathbf{x}^\star})(\mathbf{x} - \bar{\mathbf{b}}_{\mathbf{x}})
= (\mathbf{I} - \widetilde{\mathbf{P}}^{\mathrm{w}}_{\mathbf{x}})(\mathbf{x} - \bar{\mathbf{b}}_{\mathbf{x}})
+ (\widetilde{\mathbf{P}}^{\mathrm{w}}_{\mathbf{x}} - \Pi_{\mathbf{x}^\star})(\mathbf{x} - \bar{\mathbf{b}}_{\mathbf{x}}).
\]
Substituting \eqref{eq:pf-basic} yields the decomposition
\begin{equation}
\label{eq:pf-decomp}
\mathbf{x} - \mathbf{x}^\star
= (\mathbf{I} - \Pi_{\mathbf{x}^\star})(\bar{\mathbf{b}}_{\mathbf{x}} - \mathbf{x}^\star)
+ (\widetilde{\mathbf{P}}^{\mathrm{w}}_{\mathbf{x}} - \Pi_{\mathbf{x}^\star})(\mathbf{x} - \bar{\mathbf{b}}_{\mathbf{x}})
+ (\mathbf{I} - \widetilde{\mathbf{P}}^{\mathrm{w}}_{\mathbf{x}}) \boldsymbol{\xi}_{\mathrm{m}}.
\end{equation}
Taking $\ell_2$ norms gives
\[
d(\mathbf x,\mathcal M)=\|\mathbf x-\mathbf x^\star\|
\le T_1+T_2+T_3,
\]
where $T_1,T_2,T_3$ correspond to the three terms on the right-hand side of \eqref{eq:pf-decomp}.
We bound them in turn.

\medskip
\noindent\textbf{Step 2: Bounding $T_1 = \|(\mathbf{I} - \Pi_{\mathbf{x}^\star})(\bar{\mathbf{b}}_{\mathbf{x}} - \mathbf{x}^\star)\|$.}
By definition, $\bar{\mathbf{b}}_{\mathbf{x}} = \sum_{i=1}^n \alpha_i(\mathbf{x}) \mathbf{y}_i$, where $\alpha_i(\mathbf{x}) > 0$ only if $\|\mathbf{y}_i - \mathbf{x}\| \le h$. For any such $i$,
\[
\|\mathbf{y}_i - \mathbf{x}^\star\|
\le \|\mathbf{y}_i - \mathbf{x}\| + \|\mathbf{x} - \mathbf{x}^\star\|
\le h + C\sqrt{\sigma}.
\]
Then
\[
\|\mathbf{x}_i - \mathbf{x}^\star\|
\le \|\mathbf{y}_i - \mathbf{x}^\star\| + \|\boldsymbol{\epsilon}_i\|
\le h + C\sqrt{\sigma} + \sigma
\lesssim h,
\]
using $h \gtrsim \sqrt{\sigma}$. By Lemma~\ref{lemma:reach},
\[
\|(\mathbf{I} - \Pi_{\mathbf{x}^\star})(\mathbf{x}_i - \mathbf{x}^\star)\|
\le \frac{\|\mathbf{x}_i - \mathbf{x}^\star\|^2}{2\tau}
\lesssim \frac{h^2}{\tau}.
\]
Therefore,
\[
\|(\mathbf{I} - \Pi_{\mathbf{x}^\star})(\mathbf{y}_i - \mathbf{x}^\star)\|
\le \|(\mathbf{I} - \Pi_{\mathbf{x}^\star})(\mathbf{x}_i - \mathbf{x}^\star)\|
+ \|(\mathbf{I} - \Pi_{\mathbf{x}^\star}) \boldsymbol{\epsilon}_i\|
\lesssim \frac{h^2}{\tau} + \sigma.
\]
Taking the convex combination,
\[
T_1
= \Big\|(\mathbf{I} - \Pi_{\mathbf{x}^\star})(\bar{\mathbf{b}}_{\mathbf{x}} - \mathbf{x}^\star)\Big\|
= \Big\|\sum_{i=1}^n \alpha_i(\mathbf{x}) (\mathbf{I} - \Pi_{\mathbf{x}^\star})(\mathbf{y}_i - \mathbf{x}^\star)\Big\|
\le \sum_{i=1}^n \alpha_i(\mathbf{x}) \|(\mathbf{I} - \Pi_{\mathbf{x}^\star})(\mathbf{y}_i - \mathbf{x}^\star)\|
\lesssim \frac{h^2}{\tau} + \sigma.
\]

\medskip
\noindent\textbf{Step 3: Bounding $T_2 = \|(\widetilde{\mathbf{P}}^{\mathrm{w}}_{\mathbf{x}} - \Pi_{\mathbf{x}^\star})(\mathbf{x} - \bar{\mathbf{b}}_{\mathbf{x}})\|$.}
Since $\bar{\mathbf{b}}_{\mathbf{x}}$ is a convex combination of $\{\mathbf{y}_i: \|\mathbf{y}_i - \mathbf{x}\| \le h\}$,
\[
\|\mathbf{x} - \bar{\mathbf{b}}_{\mathbf{x}}\|
= \Big\|\sum_{i=1}^n \alpha_i(\mathbf{x}) (\mathbf{x} - \mathbf{y}_i)\Big\|
\le \sum_{i=1}^n \alpha_i(\mathbf{x}) \|\mathbf{x} - \mathbf{y}_i\|
\le h.
\]
Thus
\[
T_2
\le \|\widetilde{\mathbf{P}}^{\mathrm{w}}_{\mathbf{x}} - \Pi_{\mathbf{x}^\star}\| \|\mathbf{x} - \bar{\mathbf{b}}_{\mathbf{x}}\|
\le h \|\widetilde{\mathbf{P}}^{\mathrm{w}}_{\mathbf{x}} - \Pi_{\mathbf{x}^\star}\|.
\]
By the triangle inequality,
\[
\|\widetilde{\mathbf{P}}^{\mathrm{w}}_{\mathbf{x}} - \Pi_{\mathbf{x}^\star}\|
\le \|\widehat{\mathbf{P}}^{\mathrm{w}}_{\mathbf{x}} - \Pi_{\mathbf{x}^\star}\|
+ \|\widetilde{\mathbf{P}}^{\mathrm{w}}_{\mathbf{x}} - \widehat{\mathbf{P}}^{\mathrm{w}}_{\mathbf{x}}\|.
\]

\noindent\emph{Bounding the first term:}
For any $i$ with $\alpha_i(\mathbf{x}) > 0$, we have $\|\mathbf{y}_i - \mathbf{x}\| \le h$, which by the same argument as in Step 2 implies $\|\mathbf{y}_i^{\star} - \mathbf{x}^\star\| \lesssim h$, where $\mathbf{y}_i^{\star} := \pi_{\mathcal{M}}(\mathbf{y}_i)$. By Lemma~\ref{lemma:local_PCA}(i), with probability at least $1-n^{-(1+c)}$,
\[
\|\widehat{\mathbf{P}}_{\mathbf{y}_i} - \Pi_{\mathbf{y}_i^{\star}}\|
\lesssim \frac{h}{\tau} + \frac{\sigma}{h}.
\]
Moreover, by Lemma 14 in \cite{yao2025Manifold},
\[
\|\Pi_{\mathbf{y}_i^{\star}} - \Pi_{\mathbf{x}^\star}\|
\lesssim \frac{h}{\tau}.
\]
Thus,
\[
\|\widehat{\mathbf{P}}_{\mathbf{y}_i} - \Pi_{\mathbf{x}^\star}\|
\le \|\widehat{\mathbf{P}}_{\mathbf{y}_i} - \Pi_{\mathbf{y}_i^{\star}}\|
+ \|\Pi_{\mathbf{y}_i^{\star}} - \Pi_{\mathbf{x}^\star}\|
\lesssim \frac{h}{\tau} + \frac{\sigma}{h}.
\]
As a result, with probability at least $1-n^{-c}$,
\[
\|\widehat{\mathbf{P}}^{\mathrm{w}}_{\mathbf{x}} - \Pi_{\mathbf{x}^\star}\|
\le \sum_{i=1}^n \alpha_i(\mathbf{x}) \|\widehat{\mathbf{P}}_{\mathbf{y}_i} - \Pi_{\mathbf{x}^\star}\|
\lesssim \frac{h}{\tau} + \frac{\sigma}{h}.
\]

\noindent\emph{Bounding the second term:}
Recall that $\widetilde{\mathbf{P}}^{\mathrm{w}}_{\mathbf{x}}$ is obtained by adding Gaussian noise $\mathbf{W}_{\mathrm{P}}$ to $\widehat{\mathbf{P}}^{\mathrm{w}}_{\mathbf{x}}$ and then projecting onto the top-$d$ eigenspace. By a standard concentration result (Theorem 4.4.5 in \citealp{vershynin2018high}), with probability at least $1-e^{-cD}$,
\[
\|\mathbf{W}_{\mathrm{P}}\| \lesssim \sqrt{D} \varsigma_{\mathrm{P}}.
\]
By the variant of the Davis--Kahan theorem (Theorem 2 in \citealp{yu2015useful}),
\[
\|\widetilde{\mathbf{P}}^{\mathrm{w}}_{\mathbf{x}} - \widehat{\mathbf{P}}^{\mathrm{w}}_{\mathbf{x}}\|
\lesssim \|\mathbf{W}_{\mathrm{P}}\|
\lesssim \sqrt{D} \varsigma_{\mathrm{P}}.
\]

Combining the two bounds,
\[
\|\widetilde{\mathbf{P}}^{\mathrm{w}}_{\mathbf{x}} - \Pi_{\mathbf{x}^\star}\|
\lesssim \frac{h}{\tau} + \frac{\sigma}{h} + \sqrt{D} \varsigma_{\mathrm{P}},
\]
and therefore
\[
T_2 \lesssim \frac{h^2}{\tau} + \sigma + \sqrt{D} \varsigma_{\mathrm{P}} h.
\]

\medskip
\noindent\textbf{Step 4: Bounding $T_3 = \|(\mathbf{I} - \widetilde{\mathbf{P}}^{\mathrm{w}}_{\mathbf{x}}) \boldsymbol{\xi}_{\mathrm{m}}\|$.}
Since $\|\mathbf{I} - \widetilde{\mathbf{P}}^{\mathrm{w}}_{\mathbf{x}}\| \le 1$, $
T_3 \le \|\boldsymbol{\xi}_{\mathrm{m}}\|.$
With $\boldsymbol{\xi}_{\mathrm{m}} \sim \mathcal{N}(0, \varsigma_{\mathrm{m}}^2 \mathbf{I}_D)$, a standard Gaussian tail bound gives
\[
\|\boldsymbol{\xi}_{\mathrm{m}}\| \lesssim \sqrt{D} \varsigma_{\mathrm{m}}
\]
with probability at least $1-e^{-cD}$.

\medskip
\noindent\textbf{Conclusion.}
Combining Steps 2--4, we obtain
\[
d(\mathbf{x}, \mathcal{M}) = \|\mathbf{x} - \mathbf{x}^\star\|
\lesssim \frac{h^2}{\tau} + \sigma + \sqrt{D} \varsigma_{\mathrm{P}} h + \sqrt{D} \varsigma_{\mathrm{m}}
\]
with probability at least $1-n^{-c} - e^{-cD}$. By Lemma~2, the noise scales are calibrated as
\[
\varsigma_{\mathrm{P}} \asymp \frac{\sqrt{2\ln(c_1/\delta)}}{\varepsilon n h^d},
\qquad
\varsigma_{\mathrm{m}} \asymp \frac{\sqrt{2\ln(c_1/\delta)}}{\varepsilon n h^{d-1}}.
\]
This completes the proof. 

\subsection{Proof of Corollary~1}
Recall that, for $\mathbf z\in\mathcal M_{\sqrt\sigma}$,
\[
\mathbf x^{(1)}
=\mathbf z-\big(\mathbf I-\widetilde{\mathbf P}^{\mathrm w}_{\mathbf z}\big)\big(\mathbf z-\widetilde{\mathbf b}_{\mathbf z}\big),
\qquad
\widetilde{\mathbf b}_{\mathbf z}=\bar{\mathbf b}_{\mathbf z}+\boldsymbol\xi_{\mathrm m},
\]
where $\boldsymbol\xi_{\mathrm m}\sim\mathcal N(0,\varsigma_{\mathrm m}^2\mathbf I_D)$ and
$\widetilde{\mathbf P}^{\mathrm w}_{\mathbf z}$ is the DP rank-$d$ projector returned by \textsc{DP-Projector} applied to
$\widehat{\mathbf P}^{\mathrm w}_{\mathbf z}=\sum_i\alpha_i(\mathbf z)\widehat{\mathbf P}_{\mathbf y_i}$.
Let $\mathbf z^\star:=\pi_{\mathcal M}(\mathbf z)$.

Expanding 
\[
\mathbf x^{(1)}-\mathbf z^\star
=
\big(\mathbf{I} - \Pi_{\mathbf{z}^\star}\big)\big(\mathbf z-\mathbf z^\star\big)
-\big(\mathbf I-\widetilde{\mathbf P}^{\mathrm w}_{\mathbf z}\big)\big(\mathbf z-\bar{\mathbf b}_{\mathbf z}\big)
+\big(\mathbf I-\widetilde{\mathbf P}^{\mathrm w}_{\mathbf z}\big)\boldsymbol\xi_{\mathrm m},
\]
and following similar algebraic manipulations as in Step 1 of the proof of Theorem~2, we obtain the decomposition
\begin{equation}
\label{eq:cor1-decomp}
\mathbf{x}^{(1)} - \mathbf{z}^\star
= (\mathbf{I} - \Pi_{\mathbf{z}^\star})(\bar{\mathbf{b}}_{\mathbf{z}} - \mathbf{z}^\star)
+ (\widetilde{\mathbf{P}}^{\mathrm{w}}_{\mathbf{z}} - \Pi_{\mathbf{z}^\star})(\mathbf{z} - \bar{\mathbf{b}}_{\mathbf{z}})
+ (\mathbf{I} - \widetilde{\mathbf{P}}^{\mathrm{w}}_{\mathbf{z}}) \boldsymbol{\xi}_{\mathrm{m}}.
\end{equation}
This has the same structure as the decomposition \eqref{eq:pf-decomp} in Theorem~2, with $\mathbf{x}$ replaced by $\mathbf{z}$ throughout.

Since $d(\mathbf{z}, \mathcal{M}) = \|\mathbf{z} - \mathbf{z}^\star\| \le \sqrt{\sigma}$ satisfies the tubular neighborhood condition used in Theorem~2, the estimates in Steps 2--4 of Theorem~2 apply directly to the three terms in \eqref{eq:cor1-decomp}, yielding
\[
\begin{aligned}
\|(\mathbf{I} - \Pi_{\mathbf{z}^\star})(\bar{\mathbf{b}}_{\mathbf{z}} - \mathbf{z}^\star)\|
&\lesssim \frac{h^2}{\tau} + \sigma, \\
\|(\widetilde{\mathbf{P}}^{\mathrm{w}}_{\mathbf{z}} - \Pi_{\mathbf{z}^\star})(\mathbf{z} - \bar{\mathbf{b}}_{\mathbf{z}})\|
&\lesssim \frac{h^2}{\tau} + \sigma + \sqrt{D} \varsigma_{\mathrm{P}} h, \\
\|(\mathbf{I} - \widetilde{\mathbf{P}}^{\mathrm{w}}_{\mathbf{z}}) \boldsymbol{\xi}_{\mathrm{m}}\|
&\lesssim \sqrt{D} \varsigma_{\mathrm{m}},
\end{aligned}
\]
with probability at least $1-n^{-c} - e^{-cD}$. Combining these bounds gives
\[
\|\mathbf{x}^{(1)} - \pi_{\mathcal{M}}(\mathbf{z})\|
= \|\mathbf{x}^{(1)} - \mathbf{z}^\star\|
\lesssim \frac{h^2}{\tau} + \sigma + \sqrt{D} \varsigma_{\mathrm{P}} h + \sqrt{D} \varsigma_{\mathrm{m}},
\]
which proves Corollary~1.

\subsection{Auxiliary lemmas}
\begin{lemma}
\label{lemma:density}
There exist constants $h_+\lesssim \tau$ and $h_-\gtrsim (\log n/n)^{1/d}$ such that if $h_- \le h \le h_+$ and $\sigma \le h/4$, then for any $\mathbf{z}$ with $d(\mathbf{z},\mathcal{M}) \le h/\sqrt{2}$, we have
\[
n_{\mathbf{z}} = |I_{\mathbf{z}}| \gtrsim n h^{d},
\]
with probability at least $1-n^{-c}$.
\end{lemma}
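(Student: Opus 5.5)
The plan is to reduce the count to a lower bound on the probability mass of $B_D(\mathbf{z},h)$ under the law of $\mathbf{y}$, and then apply a multiplicative Chernoff bound. Since $n_{\mathbf{z}}=\sum_{i=1}^n \mathbb{I}(\mathbf{y}_i\in B_D(\mathbf{z},h))$ is a sum of i.i.d.\ Bernoulli variables with mean $p_h:=\Pr(\mathbf{y}\in B_D(\mathbf{z},h))$, it suffices to show $p_h\gtrsim h^d$. Chernoff's bound $\Pr(n_{\mathbf{z}}\le np_h/2)\le \exp(-np_h/8)$ then gives $n_{\mathbf{z}}\gtrsim nh^d$ with probability at least $1-\exp(-cnh^d)$. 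This is at least $1-n^{-c}$ once $nh^d\ge C\log n$, which is exactly the role of $h_-\gtrsim(\log n/n)^{1/d}$ with a large enough constant.

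\textbf{Step 1: reduce the noisy event to a clean one.} Let $\mathbf{z}^\star=\pi_{\mathcal M}(\mathbf{z})$; it is well defined because $d(\mathbf{z},\mathcal M)\le h/\sqrt2<\tau$ when $h\le h_+\lesssim\tau$. Write $\eta:=\|\mathbf{z}-\mathbf{z}^\star\|\le h/\sqrt2$. Since $\|\boldsymbol\epsilon\|\le\sigma\le h/4$, it is enough to have $\|\mathbf{x}-\mathbf{z}\|\le 3h/4$. Because $\mathbf{z}-\mathbf{z}^\star$ is normal at $\mathbf{z}^\star$, for $\mathbf{x}\in\mathcal M$ near $\mathbf{z}^\star$ we expand
\[
\|\mathbf{x}-\mathbf{z}\|^2=\|\mathbf{x}-\mathbf{z}^\star\|^2+\eta^2-2\langle \mathbf{x}-\mathbf{z}^\star,\mathbf{z}-\mathbf{z}^\star\rangle .
\]
The reach property (the normal displacement satisfies $\|\Pi^\perp_{\mathbf{z}^\star}(\mathbf{x}-\mathbf{z}^\star)\|\le\|\mathbf{x}-\mathbf{z}^\star\|^2/(2\tau)$) bounds the cross term by $\eta\, r^2/\tau$, where $r=\|\mathbf{x}-\mathbf{z}^\star\|$.

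\textbf{Step 2: obstacle at the boundary case.} At the extreme $\eta=h/\sqrt2$, the available radius is $\sqrt{(3h/4)^2-\eta^2}=\sqrt{9/16-1/2}\,h=h/4>0$, so the budget is positive but tight. Taking $r\le c_0h$ with $c_0$ small (say $c_0=1/8$) and $h/\tau$ small gives $\|\mathbf{x}-\mathbf{z}\|^2\le \eta^2+r^2(1+2\eta/\tau)\le (3h/4)^2$. Hence
\[
\mathcal M\cap B_D(\mathbf{z}^\star,c_0h)\subset\{\mathbf{x}:\mathbf{y}\in B_D(\mathbf{z},h)\}.
\]
Making this numerical bookkeeping work is the main delicacy: the constants $h/\sqrt2$ and $\sigma\le h/4$ leave only a little slack. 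If the constants fail, I would shrink $c_0$ or use $\sigma\le h/8$, as the proof of Lemma~2 does.

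\textbf{Step 3: volume lower bound and conclusion.} Using $p\ge f_{\min}$, we get $p_h\ge f_{\min}\,\mathrm{vol}_d(\mathcal M\cap B_D(\mathbf{z}^\star,c_0h))$. By the standard reach-based volume estimate (e.g.\ Niyogi--Smale--Weinberger), for $c_0h\le\tau/2$ this volume is at least $(1-(c_0h)^2/(4\tau^2))^{d/2}\omega_d(c_0h)^d\gtrsim h^d$, treating $d$ and $f_{\min}$ as constants. Combining with the Chernoff step yields $n_{\mathbf{z}}\gtrsim nh^d$ with probability $1-n^{-c}$. If $\mathbf{z}$ is data-dependent (e.g.\ a reference point $\mathbf{y}_j$, as used in Lemma~2), I would condition on that point and apply the bound to the remaining $n-1$ points. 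Uniformity over all $j$, where needed, then follows by a union bound at the cost of adjusting $c$.
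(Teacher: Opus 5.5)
Your proposal is correct and follows the same route as the paper. The paper simply cites Lemma~9.3 of \citet{Aamari2018Stability} for the ball-mass lower bound $\Pr(\mathbf{y}\in B_D(\mathbf{z},h))\gtrsim h^d$ and then applies a Chernoff bound, whereas you derive that mass bound yourself; your containment $\mathcal{M}\cap B_D(\mathbf{z}^\star,h/8)\subset\{\mathbf{y}\in B_D(\mathbf{z},h)\}$ checks out (for $h\le\tau$, $\eta^2+r^2(1+2\eta/\tau)\le h^2/2+(h^2/64)(1+\sqrt{2})<9h^2/16$), so no shrinking of $c_0$ or tightening of $\sigma$ is needed.
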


\medskip
\noindent\textbf{Proof of Lemma \ref{lemma:density}.} The result follows directly from Lemma 9.3 in \cite{Aamari2018Stability} combined with a standard Chernoff bound.

\begin{lemma}
\label{lemma:local_PCA}
Suppose $\mathbf{z}\in\mathcal{M}_{{\sigma}}$  is sampled independently from the same distribution as $\{\mathbf{y}_i\}_{i=1}^n$
and bandwidth $h$ satisfies
\[
(\frac{\log n}{n})^{\frac 1d}\vee {\sigma} \ \lesssim\ h\ \ll\ 1\wedge \tau.
\]
Let $\mathbf{z}^\star := \pi_{\mathcal{M}}(\mathbf{z})$, $I_{\mathbf{z}} = \{i \in [n]: \mathbf{y}_i \in B_D(\mathbf{z}, h)\}$, and $n_{\mathbf{z}} = |I_{\mathbf{z}}|$. Define the scaled local covariance
\[
\widehat{\boldsymbol\Sigma}^{(s)}_{\mathbf z}
=\frac{1}{n}\sum_{j\in I_{\mathbf z}}(\mathbf y_j-\bar{\mathbf y}_{\mathbf z})
(\mathbf y_j-\bar{\mathbf y}_{\mathbf z})^\top.
\]
Let $\widehat{\mathbf P}_{\mathbf z}$ be the rank-$d$ projector onto the top-$d$ eigenvectors of $\widehat{\boldsymbol\Sigma}_{\mathbf z}^{(s)}$.
Then with probability at least $1-n^{-(1+c)}$, the following hold:

\smallskip
\noindent (i) \textbf{Tangent accuracy:} 
\[
\|\widehat{\mathbf P}_{\mathbf z}-\Pi_{\mathbf z^\star}\|
\ \lesssim\
\frac{h}{\tau}+\frac{\sigma}{h}.
\]

\smallskip
\noindent (ii) \textbf{Eigengap:} if $\sigma \le ch$ for a sufficiently small constant $c>0$, then
\[
\lambda_d(\widehat{\boldsymbol\Sigma}^{(s)}_{\mathbf z})-\lambda_{d+1}(\widehat{\boldsymbol\Sigma}^{(s)}_{\mathbf z})
\ \gtrsim\ h^{d+2}.
\]
\end{lemma}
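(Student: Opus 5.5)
The plan is to prove the two claims of Lemma~\ref{lemma:local_PCA} by comparing the scaled local covariance $\widehat{\boldsymbol\Sigma}^{(s)}_{\mathbf z}$ with a population-level reference matrix that has an explicit block structure along $T_{\mathbf z^\star}\mathcal M\oplus N_{\mathbf z^\star}\mathcal M$. Write each neighbor as
\[
\mathbf y_j-\mathbf z^\star=\Pi_{\mathbf z^\star}(\mathbf x_j-\mathbf z^\star)+\Pi^\perp_{\mathbf z^\star}(\mathbf x_j-\mathbf z^\star)+\boldsymbol\epsilon_j .
\]
The first piece is tangential of size at most $h$. By the reach inequality (the quadratic-graph bound $\|\Pi^\perp_{\mathbf x^\star}(\mathbf x'-\mathbf x^\star)\|\le\|\mathbf x'-\mathbf x^\star\|^2/(2\tau)$), the second is normal of size $\lesssim h^2/\tau$. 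The third has size $\le\sigma$. We use $\|\mathbf z-\mathbf z^\star\|\le\sigma\lesssim h$, so every neighbor satisfies $\|\mathbf x_j-\mathbf z^\star\|\lesssim h$. Substituting into the centered covariance splits $\widehat{\boldsymbol\Sigma}^{(s)}_{\mathbf z}=\mathbf A+\mathbf E$. Here $\mathbf A=\Pi_{\mathbf z^\star}\widehat{\boldsymbol\Sigma}^{(s)}_{\mathbf z}\Pi_{\mathbf z^\star}$ is the tangent block, and $\mathbf E$ collects the cross and normal blocks. Since the centered vectors are bounded by $h$ and their normal parts by $h^2/\tau+\sigma$, we get $\|\mathbf E\|\lesssim \frac{n_{\mathbf z}}{n}\,h\,(h^2/\tau+\sigma)$.

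The key step is a lower bound on the tangent block: $\lambda_d(\mathbf A)\gtrsim h^{d+2}$. I would get it from three ingredients.
\begin{enumerate}
\item By Lemma~\ref{lemma:density}, $n_{\mathbf z}\gtrsim nh^d$ with probability $1-n^{-(1+c)}$; the exponent is adjusted by enlarging constants in the Chernoff step.
\item Because $\sigma\le ch$, the ball $B_D(\mathbf z,h)$ contains all $\mathbf y_j$ whose latent $\mathbf x_j$ lies in a geodesic ball of radius $\asymp h$ around $\mathbf z^\star$. The density bounds $f_{\min},f_{\max}$ and the near-flatness of $\mathcal M$ at scale $h\ll\tau$ then show that the tangential coordinates $\Pi_{\mathbf z^\star}(\mathbf x_j-\mathbf z^\star)$ of the neighbors have a population covariance whose smallest eigenvalue on $T_{\mathbf z^\star}\mathcal M$ is $\gtrsim h^2$. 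Each direction spreads over width $\asymp h$, and the noise perturbs the membership region only by a relative amount $O(\sigma/h)$.
\item A matrix Bernstein inequality for bounded summands of size $h^2$ transfers this to the empirical $\frac1{n}\sum U_j\tilde{\mathbf y}_j\tilde{\mathbf y}_j^\top$, with deviation $\lesssim h^{2}\sqrt{h^d\log n/n}$. This is $o(h^{d+2})$ once $h\gtrsim(\log n/n)^{1/d}$ with a large constant. The centering term only removes the mean, which is handled identically.
\end{enumerate}
Multiplying by $n_{\mathbf z}/n\asymp h^d$ gives $\lambda_d(\mathbf A)\gtrsim h^{d+2}$, while $\lambda_{d+1}(\widehat{\boldsymbol\Sigma}^{(s)}_{\mathbf z})\le\|\mathbf E\|+(\text{normal block})\lesssim h^d(h^3/\tau+h\sigma)$. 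By Weyl's inequality the gap is $\gtrsim h^{d+2}$ provided $h\ll\tau$ and $\sigma\le ch$ with $c$ small, which is claim (ii).

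For (i), I would apply Davis--Kahan (the variant in \citealp{yu2015useful}) with the reference matrix $\mathbf A$, whose top-$d$ eigenspace is exactly $T_{\mathbf z^\star}\mathcal M$ and whose $d$-th eigengap is $\gtrsim h^{d+2}$. Only the off-diagonal block matters for the $\sin\Theta$ bound, and its size is $\lesssim h^d\cdot h(h^2/\tau+\sigma)$. Dividing by $h^{d+2}$ gives $h/\tau+\sigma/h$.

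The main obstacle is the tangent-block lower bound in step 2. It requires controlling how the noisy ball $B_D(\mathbf z,h)$ selects latent points, which is a slightly distorted, possibly non-symmetric region of $\mathcal M$. I would handle this by sandwiching it between geodesic balls of radii $h(1\pm C\sigma/h\pm Ch/\tau)$ around $\mathbf z^\star$ and invoking the volume comparison from \citet{Aamari2018Stability}. The assumption that $\mathbf z$ is independent of the sample is used so that the concentration in step 3 is conditional on $\mathbf z$.
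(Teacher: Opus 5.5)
Your proposal is correct, and it is considerably more self-contained than the paper's proof, which is essentially a citation.

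\textbf{What the paper does.} It takes (i) directly from an intermediate step in the proof of Proposition~5.1 of \citet{Aamari2018Stability}. For (ii) it applies Weyl's inequality with the population tangent-projected covariance $\boldsymbol\Sigma^\star_{\mathbf z}$ as reference. From the same source it imports two facts. The first is a bound $\|\widehat{\boldsymbol\Sigma}^{(s)}_{\mathbf z}-\boldsymbol\Sigma^\star_{\mathbf z}\|\le\lambda_d(\boldsymbol\Sigma^\star_{\mathbf z})\bigl[\tfrac14+C(h/\tau+\sigma/h)\bigr]$, which mixes sampling fluctuation with curvature and noise bias. The second is the lower bound $\lambda_d(\boldsymbol\Sigma^\star_{\mathbf z})\gtrsim h^{d+2}$ (their Lemma~9.3).

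\textbf{How your route differs.} $\boldsymbol\Sigma^\star_{\mathbf z}$ is exactly the population counterpart of your $\mathbf A=\Pi_{\mathbf z^\star}\widehat{\boldsymbol\Sigma}^{(s)}_{\mathbf z}\Pi_{\mathbf z^\star}$, so you insert $\mathbf A$ as an intermediate. This buys three things.
\begin{itemize}
\item The cross and normal blocks $\mathbf E$ are bounded deterministically, given $n_{\mathbf z}$, by the reach inequality and $\|\boldsymbol\epsilon_j\|\le\sigma$.
\item Randomness enters only through $\mathbf A-\boldsymbol\Sigma^\star_{\mathbf z}$. This is a $d\times d$ Bernstein bound with no $\log D$, provided you apply it to $\Pi_{\mathbf z^\star}\tilde{\mathbf y}_j=\Pi_{\mathbf z^\star}(\mathbf y_j-\mathbf z^\star)$, which is all you need.
\item One decomposition yields both claims: (i) via Davis--Kahan with gap $\lambda_d(\mathbf A)$, and (ii) via Weyl with $\lambda_{d+1}(\mathbf A)=0$.
\end{itemize}

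\textbf{The price.} You must prove $\lambda_d(\boldsymbol\Sigma^\star_{\mathbf z})\gtrsim h^{d+2}$ yourself, which the paper outsources. Your sandwich handles this once you use that a variance is a minimum over centers. Write $U=\mathbb I(\mathbf y\in B_D(\mathbf z,h))$ and $\tilde{\mathbf y}=\mathbf y-\mathbf z$. For unit $\mathbf v\in T_{\mathbf z^\star}\mathcal M$,
\[
\mathbf v^\top\boldsymbol\Sigma^\star_{\mathbf z}\mathbf v\ \ge\ \min_m\mathbb E\bigl[U(\mathbf v^\top\Pi_{\mathbf z^\star}\tilde{\mathbf y}-m)^2\bigr].
\]
Latent points with $\|\mathbf x-\mathbf z^\star\|\le h-2\sigma$ are always selected. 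Restricting the expectation to them only decreases it, and what remains is $\gtrsim h^{d+2}$ uniformly in $m$ when $\sigma\le ch$. So the noisy, non-symmetric selection region never has to be described precisely. You do not even need the outer ball unless you normalize by the selection probability, as in your step~2.

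\textbf{A small remark.} Claim (i) is stated without the assumption $\sigma\le ch$, but outside that regime it is trivial, because $\|\widehat{\mathbf P}_{\mathbf z}-\Pi_{\mathbf z^\star}\|\le1\lesssim\sigma/h$.
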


\medskip
\noindent\textbf{Proof of Lemma \ref{lemma:local_PCA}.} 

\medskip
\noindent\emph{Part (i):} This is an intermediate result in the proof of Proposition 5.1 in \cite{Aamari2018Stability}.

\medskip
\noindent\emph{Part (ii).}
For the analysis, we treat $\mathbf{z}$ as fixed and work with the conditional distribution of $\{\mathbf{y}_i\}$ given $\mathbf{z}$. Denote the population (truncated) local mean
\[
\mathbf{m}_{\mathbf{z}}
:=\frac{\mathbb{E}\!\left[\mathbf{y}\,\mathbb{I}\!\left(\mathbf{y}\in B_D(\mathbf{z},h)\right)\right]}
{\mathbb{E}\!\left[\mathbb{I}\!\left(\mathbf{y}\in B_D(\mathbf{z},h)\right)\right]},
\]
and the projected population covariance
\[
{\boldsymbol\Sigma}_{\mathbf{z}}^{\star}
:=\mathbb{E}\!\left[\Pi_{\mathbf z^\star}(\mathbf{y}-\mathbf{m}_{\mathbf{z}})
\big(\Pi_{\mathbf z^\star}(\mathbf{y}-\mathbf{m}_{\mathbf{z}})\big)^{\top}
\,\mathbb{I}\!\left(\mathbf{y}\in B_D(\mathbf{z},h)\right)\right].
\]

Since $\boldsymbol{\Sigma}_{\mathbf{z}}^{\star}$ is supported on the $d$-dimensional tangent space $T_{\mathbf{z}^\star}\mathcal{M}$, it has rank at most $d$; hence $\lambda_{d+1}(\boldsymbol{\Sigma}_{\mathbf{z}}^{\star}) = 0$.
By Weyl's inequality,
\[
\big|\lambda_k(\widehat{\boldsymbol{\Sigma}}^{(s)}_{\mathbf{z}}) - \lambda_k(\boldsymbol{\Sigma}_{\mathbf{z}}^{\star})\big|
\le \big\|\widehat{\boldsymbol{\Sigma}}^{(s)}_{\mathbf{z}} - \boldsymbol{\Sigma}_{\mathbf{z}}^{\star}\big\|
\]
for any $k$. Therefore,
\begin{equation}
\begin{aligned}
\label{eq:weyl}
\lambda_d(\widehat{\boldsymbol\Sigma}^{(s)}_{\mathbf z})-\lambda_{d+1}(\widehat{\boldsymbol\Sigma}^{(s)}_{\mathbf z})
&\ge \lambda_d({\boldsymbol\Sigma}_{\mathbf{z}}^{\star})-\lambda_{d+1}({\boldsymbol\Sigma}_{\mathbf{z}}^{\star})
- \big|\lambda_d(\widehat{\boldsymbol\Sigma}^{(s)}_{\mathbf z})-\lambda_d({\boldsymbol\Sigma}_{\mathbf{z}}^{\star})\big|
- \big|\lambda_{d+1}(\widehat{\boldsymbol\Sigma}^{(s)}_{\mathbf z})-\lambda_{d+1}({\boldsymbol\Sigma}_{\mathbf{z}}^{\star})\big| \\
&\ge \lambda_d({\boldsymbol\Sigma}_{\mathbf{z}}^{\star})
-2\big\|\widehat{\boldsymbol\Sigma}^{(s)}_{\mathbf z}-{\boldsymbol\Sigma}_{\mathbf{z}}^{\star}\big\|.
\end{aligned}
\end{equation}

From the proof of Proposition 5.1 in \cite{Aamari2018Stability}, with probability at least $1-n^{-(1+c)}$,
\[
\big\|\widehat{\boldsymbol\Sigma}^{(s)}_{\mathbf z}-{\boldsymbol\Sigma}_{\mathbf{z}}^{\star}\big\|
\le \lambda_d({\boldsymbol\Sigma}_{\mathbf{z}}^{\star})\Big[\frac{1}{4}+C\Big(\frac{h}{\tau}+\sigma\Big)\Big],
\]
for a universal constant $C>0$. Substituting into \eqref{eq:weyl},
\[
\lambda_d(\widehat{\boldsymbol\Sigma}^{(s)}_{\mathbf z})-\lambda_{d+1}(\widehat{\boldsymbol\Sigma}^{(s)}_{\mathbf z})
\ge \lambda_d({\boldsymbol\Sigma}_{\mathbf{z}}^{\star})
\Bigg(1-2\Big[\frac{1}{4}+C\Big(\frac{h}{\tau}+\sigma\Big)\Big]\Bigg)
= \lambda_d({\boldsymbol\Sigma}_{\mathbf{z}}^{\star})\Big(\frac{1}{2}-2C\Big(\frac{h}{\tau}+\frac{\sigma}{h}\Big)\Big).
\]
If $\frac{h}{\tau} + \frac{\sigma}{h} \le \frac{1}{8C}$ (which holds under our bandwidth assumptions), then
\[
\lambda_d(\widehat{\boldsymbol\Sigma}^{(s)}_{\mathbf z})-\lambda_{d+1}(\widehat{\boldsymbol\Sigma}^{(s)}_{\mathbf z})
\ge \frac{1}{4}\,\lambda_d({\boldsymbol\Sigma}_{\mathbf{z}}^{\star}).
\]
By Lemma 9.3 in \cite{Aamari2018Stability}, there exists $c>0$ such that $\lambda_d(\boldsymbol{\Sigma}_{\mathbf{z}}^{\star}) \ge c h^{d+2}$. Combining these bounds yields
\[
\lambda_d(\widehat{\boldsymbol\Sigma}^{(s)}_{\mathbf z})-\lambda_{d+1}(\widehat{\boldsymbol\Sigma}^{(s)}_{\mathbf z})
\ \ge\ \frac{c}{4}\,h^{d+2}.
\]
This completes the proof.

\begin{lemma}
\label{lemma:reach}
$\mathrm{reach}(\mathcal{M})\ge a$ if and only if
\[
\|\Pi_{\mathbf{x}}^{\perp}(\mathbf{y}-\mathbf{x})\|
\ \le\ \frac{1}{2a}\,\|\mathbf{y}-\mathbf{x}\|^2,
\qquad \forall\,\mathbf{x},\mathbf{y}\in\mathcal{M}.
\]
\end{lemma}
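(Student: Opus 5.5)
This is the classical characterization of reach due to Federer (Theorem 4.18 in \citealp{federer1959curvature}), in the form used in \citet{niyogi2008finding}. The plan is to prove the two implications separately, using only the definition of reach and elementary Euclidean geometry.

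\emph{($\Rightarrow$).} Assume $\mathrm{reach}(\mathcal M)\ge a$ and fix $\mathbf x,\mathbf y\in\mathcal M$. If $\Pi_{\mathbf x}^\perp(\mathbf y-\mathbf x)=0$ there is nothing to prove. Otherwise set $\mathbf n:=\Pi_{\mathbf x}^\perp(\mathbf y-\mathbf x)/\|\Pi_{\mathbf x}^\perp(\mathbf y-\mathbf x)\|$, a unit normal at $\mathbf x$. For every $0<r<a$, the point $\mathbf c_r=\mathbf x+r\mathbf n$ lies in the $a$-tubular neighborhood. The key claim is $\pi_{\mathcal M}(\mathbf c_r)=\mathbf x$. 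By uniqueness of the projection, the map $r\mapsto\pi_{\mathcal M}(\mathbf x+r\mathbf n)$ is continuous on $[0,a)$. The set of $r$ where it equals $\mathbf x$ is closed, contains a neighborhood of $0$ (by the $C^2$ structure), and is open by a standard continuity argument along normal rays. This is the step I expect to be the main obstacle, and I would cite \citet{federer1959curvature} for it rather than reprove it. Given the claim, the open ball $B(\mathbf c_r,r)$ contains no point of $\mathcal M$, so $\|\mathbf y-\mathbf c_r\|^2\ge r^2$. Expanding gives $\|\mathbf y-\mathbf x\|^2-2r\langle \mathbf y-\mathbf x,\mathbf n\rangle\ge0$. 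Since $\langle\mathbf y-\mathbf x,\mathbf n\rangle=\|\Pi^\perp_{\mathbf x}(\mathbf y-\mathbf x)\|$, this yields $\|\Pi^\perp_{\mathbf x}(\mathbf y-\mathbf x)\|\le\|\mathbf y-\mathbf x\|^2/(2r)$. Letting $r\uparrow a$ gives the inequality.

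\emph{($\Leftarrow$).} Assume the inequality holds and suppose, for contradiction, that some $\mathbf p$ with $d(\mathbf p,\mathcal M)=s<a$ has two nearest points $\mathbf x\ne\mathbf y$ in $\mathcal M$. By first-order optimality, $\mathbf p-\mathbf x\in N_{\mathbf x}\mathcal M$ and $\|\mathbf p-\mathbf x\|=\|\mathbf p-\mathbf y\|=s$. Write $\mathbf p=\mathbf x+s\mathbf n$ with $\mathbf n\in N_{\mathbf x}\mathcal M$ a unit vector. Then $\|\mathbf y-\mathbf p\|^2=s^2$ gives $\|\mathbf y-\mathbf x\|^2=2s\langle\mathbf y-\mathbf x,\mathbf n\rangle\le 2s\|\Pi^\perp_{\mathbf x}(\mathbf y-\mathbf x)\|\le (s/a)\|\mathbf y-\mathbf x\|^2$. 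Since $s<a$, this forces $\mathbf y=\mathbf x$, a contradiction. Hence every point within distance $a$ of $\mathcal M$ has a unique nearest point, so $\mathrm{reach}(\mathcal M)\ge a$.
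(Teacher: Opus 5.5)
Your proof is correct, and it is more explicit than the paper's, which proves the lemma only by citing Federer's Theorem 4.18. That theorem covers arbitrary closed sets, with $\mathrm{Tan}(A,\mathbf x)$ in place of $T_{\mathbf x}\mathcal M$, and needs no smoothness.

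Your ($\Leftarrow$) direction is fully self-contained. It uses only the existence of nearest points (from compactness) and the first-order condition $\mathbf p-\mathbf x\in N_{\mathbf x}\mathcal M$, which is where the $C^1$ structure of $\mathcal M$ enters.

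Your ($\Rightarrow$) direction isolates the one genuinely nontrivial ingredient: the normal-ray property $\pi_{\mathcal M}(\mathbf x+r\mathbf n)=\mathbf x$ for unit $\mathbf n\in N_{\mathbf x}\mathcal M$ and $r<a$. Once you have it, the empty-ball expansion gives the constant $1/(2a)$ transparently. This makes visible the geometric mechanism behind the inequality used to bound $T_1$ in the proof of Theorem 2, which a bare citation hides.

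One correction to how you describe that ingredient: it is not a ``standard continuity argument.'' Continuity of $\pi_{\mathcal M}$ on the open $a$-tube only shows that the set of good $r$ is closed. Nesting of internally tangent balls then shows it is an interval $[0,T]$. Pushing past $T<a$ is the real content. Federer obtains it in Theorem 4.8 (the normal-ray statement is part (12)) through a Brouwer fixed-point argument; in the $C^2$ setting you would otherwise have to work through the normal endpoint map and focal points.

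So for that step, cite Federer's Theorem 4.8 rather than 4.18. Citing 4.18 would already give the entire lemma and make the rest of your argument redundant.
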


\medskip
\noindent\textbf{Proof of Lemma \ref{lemma:reach}.} See for example Theorem 4.18 in \cite{federer1959curvature}.


\section{Extended simulation studies}
\label{app:sim}

In addition to the simulation results reported in the main text, we conducted a series of supplementary experiments to further evaluate the robustness of the proposed differentially private manifold denoising (DP-MD) algorithm under alternative noise conditions.

We considered the same set of canonical manifolds as in the main simulations, including the circle ($\mathbb{S}^1$), torus ($\mathbb{T}^2$), Swiss roll, and the two-dimensional sphere ($\mathbb{S}^2$), but explored additional regimes not shown in the main figures.
These supplementary experiments are designed to (i) assess sensitivity to ambient dimensionality, (ii) characterize privacy--utility tradeoffs on more complex manifolds, and (iii) verify robustness beyond the bounded-noise setting assumed in the theory.

\paragraph{Noise models and scale calibration.}

Unless otherwise stated, all supplementary experiments follow the same noise construction as in the main text.
Reference points are perturbed by additive ambient noise of order $O(\sigma)$, while query points are subjected to stronger perturbations of order $O(\sqrt{\sigma})$, reflecting their distinct roles in geometric estimation and denoising.

The primary simulations adopt bounded $\ell_2$ noise, where each perturbation vector is uniformly constrained in norm.
To evaluate robustness beyond this bounded regime, we additionally performed experiments under unbounded Gaussian noise.
In these experiments, Gaussian perturbations were sampled from isotropic normal distributions with coordinate-wise variance calibrated to match the bounded-noise model scale.
This calibration ensures that differences in performance are attributable to tail behavior rather than overall noise magnitude.

\begin{figure}[p]
  \centering

  \includegraphics[width=0.45\textwidth]{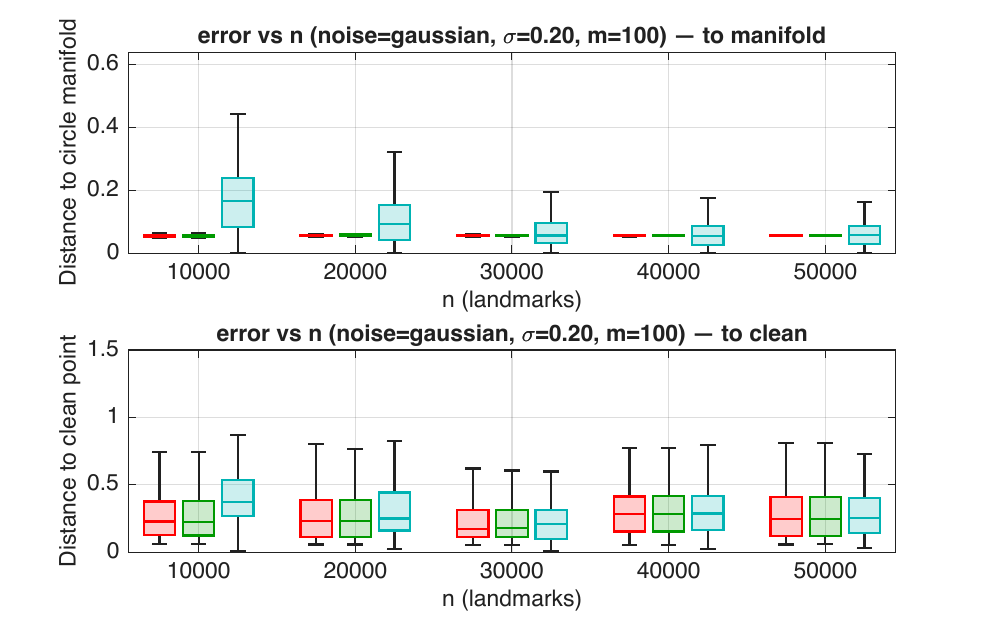}\hfill
  \includegraphics[width=0.45\textwidth]{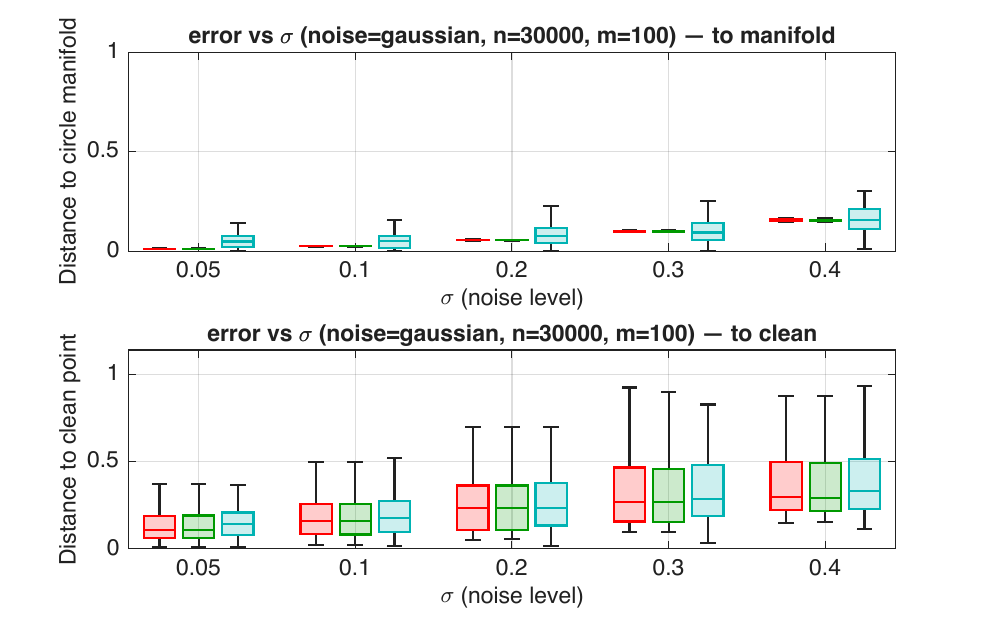}

  \vspace{1.0em}

  \includegraphics[width=0.45\textwidth]{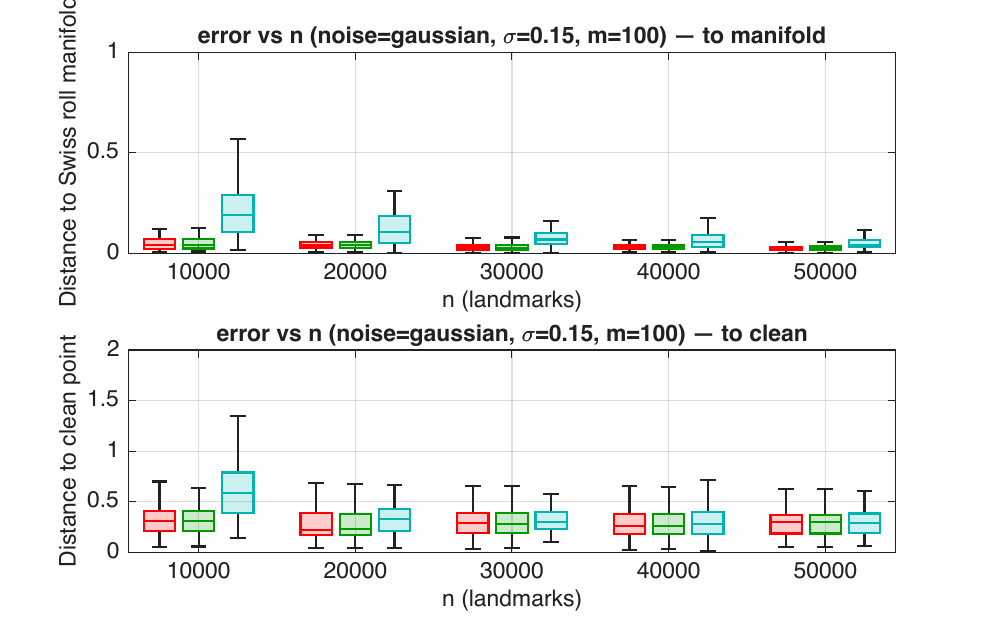}\hfill
  \includegraphics[width=0.45\textwidth]{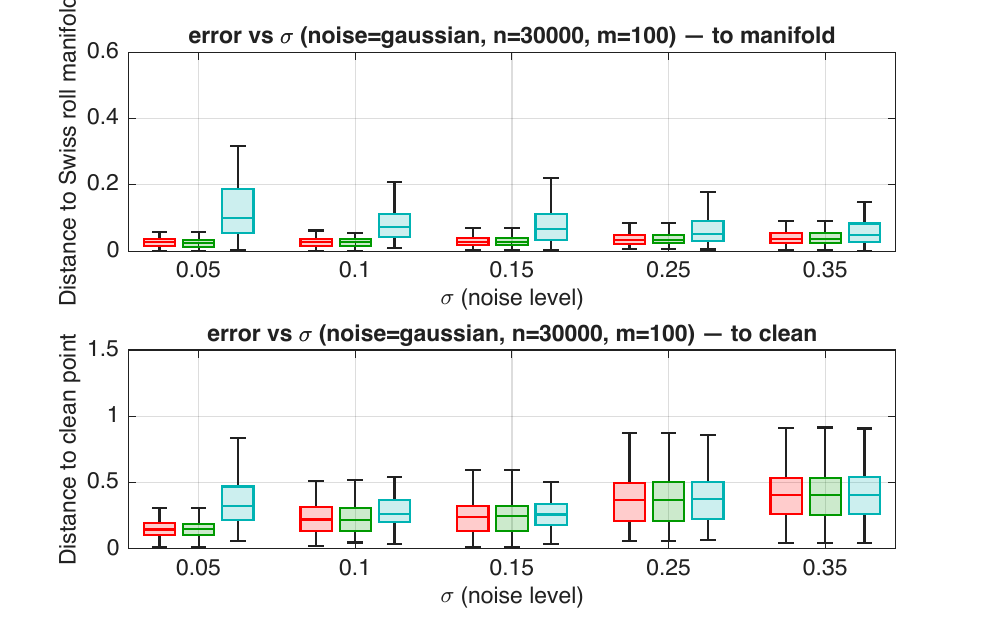}

  \vspace{1.0em}

  \includegraphics[width=0.45\textwidth]{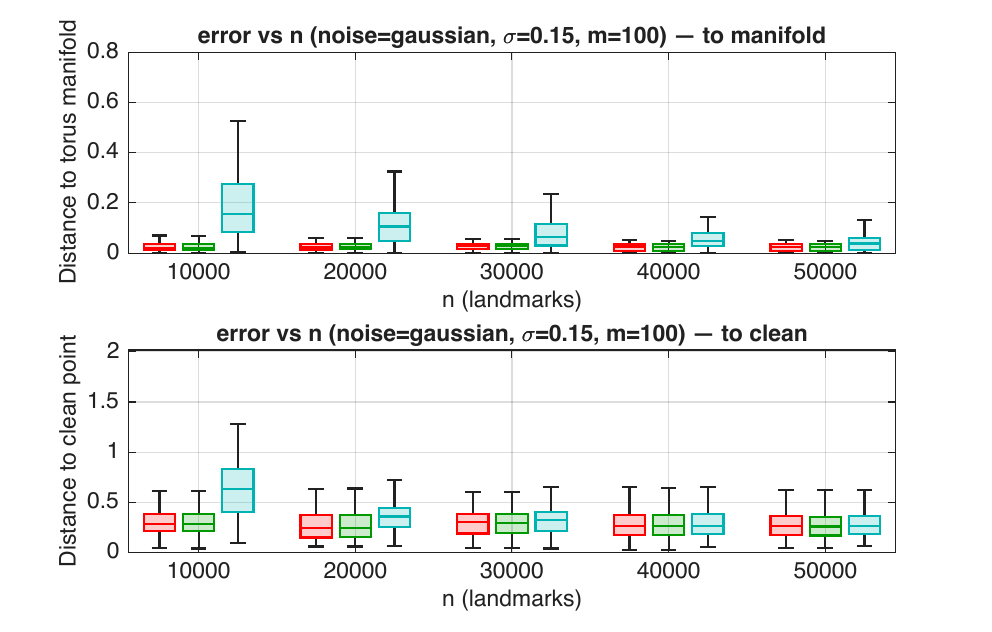}\hfill
  \includegraphics[width=0.45\textwidth]{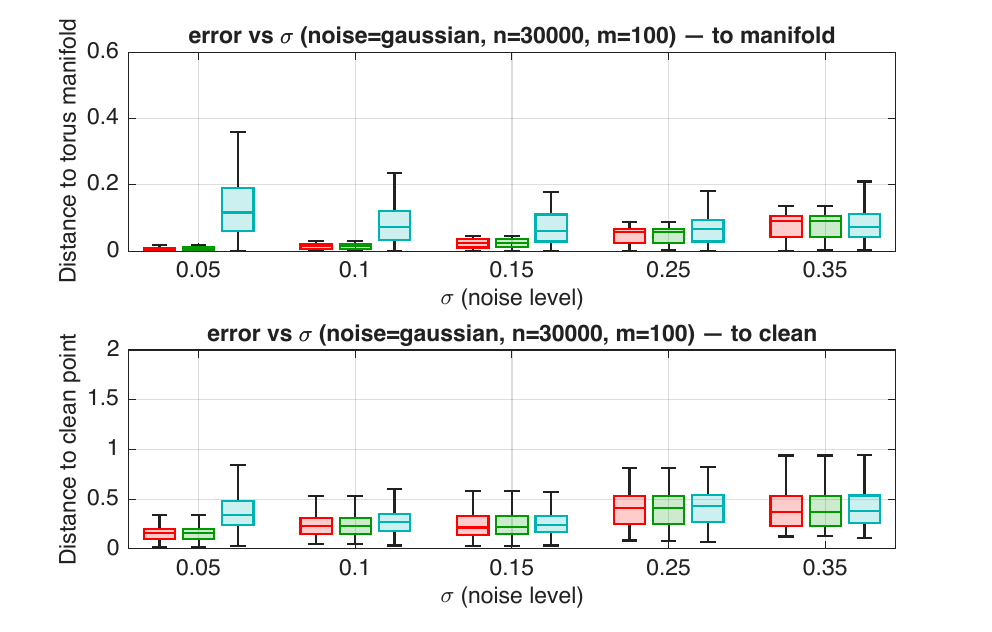}

  \caption{
  \textbf{Robustness under Gaussian ambient noise across different manifolds.}
  Rows correspond to circle, Swiss roll, and torus manifolds.
  Left panels show reconstruction error as a function of sample size $n$ under fixed noise scale,
  and right panels show reconstruction error as a function of noise scale $\sigma$ under fixed sample size.
  Results are averaged over repeated trials; shaded regions indicate 95\% confidence intervals.
  }
  \label{fig:S1_gaussian_all}
\end{figure}

\paragraph{High-dimensional ambient embeddings.}

To complement the scalability experiment in the main text, we further examined the effect of increasing ambient dimension on both geometric denoising performance and computational complexity under bounded and Gaussian noise.
\begin{figure}[p]
  \centering

  \includegraphics[width=1.0\textwidth]{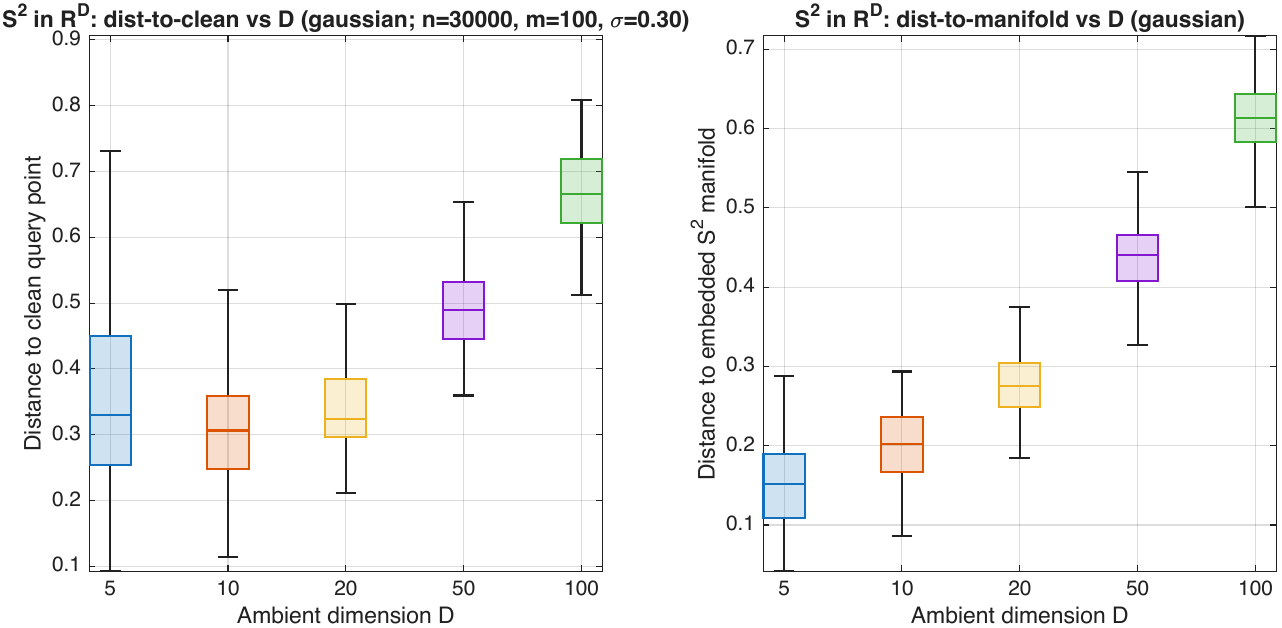}

  \vspace{1.0em}

  \includegraphics[width=0.48\textwidth]{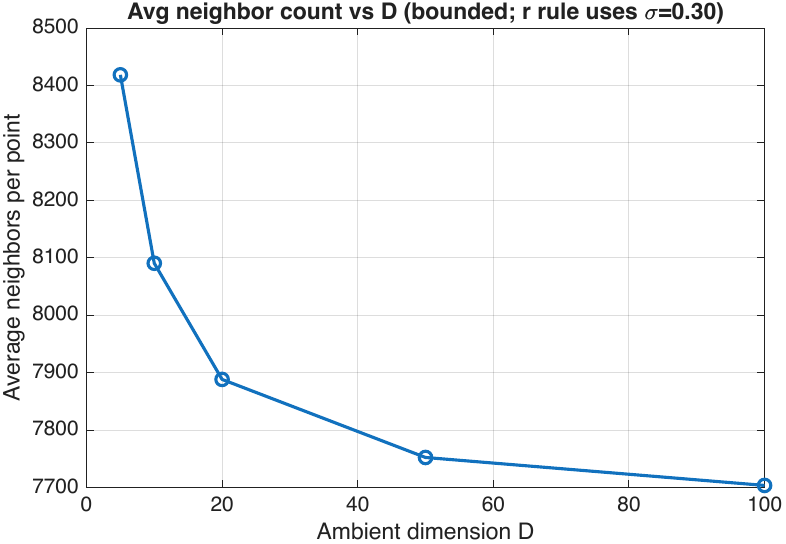}\hfill
  \includegraphics[width=0.45\textwidth]{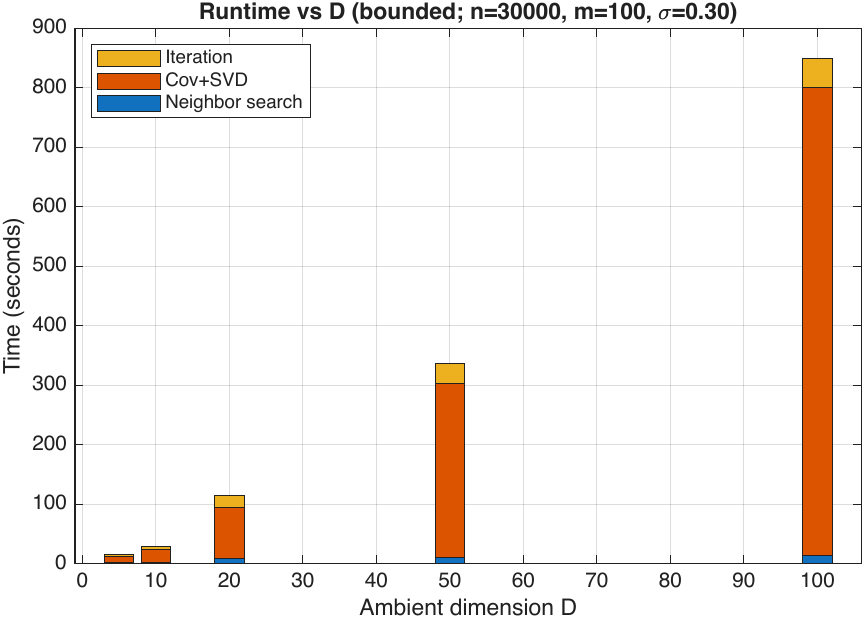}

  \caption{
  \textbf{Scalability with ambient dimension on high-dimensional spheres.}
  (Top) Reconstruction error as a function of ambient dimension $D$ under Gaussian noise.
  (Bottom left) Average neighborhood size used for local geometry estimation as $D$ increases.
  (Bottom right) Computational runtime as a function of $D$ under bounded noise.
  Sample size is fixed across experiments.
  }
  \label{fig:S3_highdim_sphere}
\end{figure}

\paragraph{Privacy--utility tradeoff on complex manifolds.}

While the main text illustrates the privacy--utility tradeoff on the circle manifold, we additionally investigated this behavior on more geometrically complex manifolds, including the torus and Swiss roll.

For each geometry, we fixed the sample size and noise level and varied the total privacy budget $\varepsilon$ over a wide range.
Across both manifolds, we observed a consistent monotonic decrease in denoising error as $\varepsilon$ increased, with performance saturating at moderate privacy levels.
Notably, the qualitative shape of the privacy--utility curves closely matched that observed on the circle, indicating that the tradeoff behavior is largely geometry-independent.

\begin{figure}[p]
  \centering
  \includegraphics[width=0.9\textwidth]{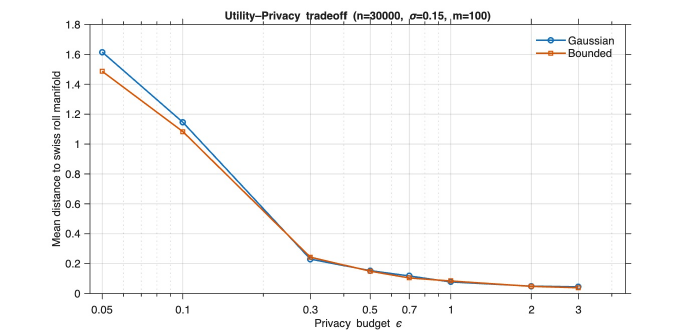}

  \vspace{1.0em}

  \includegraphics[width=0.9\textwidth]{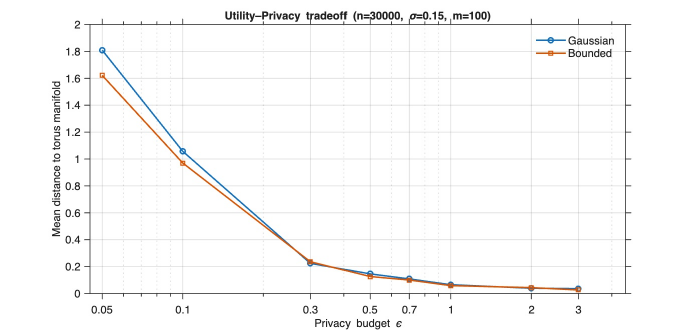}

  \caption{
  \textbf{Privacy--utility tradeoff on nontrivial manifolds.}
  Reconstruction error as a function of the privacy budget for (top) Swiss roll and (bottom) torus manifolds.
  Curves compare the non-private manifold denoising method with its differentially private counterpart
  calibrated under the $\rho$-zCDP mechanism.
  }
  \label{fig:S2_privacy_tradeoff}
\end{figure}

\section{UK Biobank analyses}
\label{app:ukb}
\paragraph{Dataset description and cohort construction.}

We applied the proposed method to blood and urine biomarker data from the UK Biobank, a large prospective cohort study comprising approximately 500,000 participants recruited from across the United Kingdom.
Data access was granted under UK Biobank application ID \texttt{[146760]}, and all analyses were conducted in accordance with the relevant ethical approvals and data use agreements.

The initial cohort was restricted to participants with available baseline measurements for the selected biomarker panel and complete follow-up information for the outcomes of interest. After quality control, the remaining participants were randomly downsampled to 50{,}000 and then partitioned into a reference set of sample size 49{,}000 for geometric estimation and a query set of sample size 1000 reserved for denoising and downstream evaluation.

\paragraph{Biomarker preprocessing.}

We analyzed 60 quantitative blood and urine measurements from UK Biobank, covering hematological, metabolic, renal, hepatic, inflammatory, and electrolyte domains.
 Skewed biomarkers were log-transformed, standardized to zero median and unit median absolute deviation (MAD). Values were scaled  and clipped to the range [-8, 8] to ensure finite sensitivity for differential privacy. This preprocessing ensures suitability for both geometric analysis and differential privacy.
 
\paragraph{Local geometry metrics and interpretation.}

To quantify how well denoising methods preserve the local geometric structure of the biomarker manifold, we computed three complementary metrics:

\medskip
\noindent\emph{1. Mean k-nearest neighbor (kNN) distance to raw reference points:}
This metric measures how close a denoised query point remains to the raw reference set:
\[
d_{\text{mean}}(y) = \frac{1}{k}\sum_{i=1}^k \|y - x_i\|
\]
where $y$ is a denoised query point and $\{x_i\}_{i=1}^k$ are its $k$ nearest raw reference points. Smaller values indicate better preservation of the query's position relative to the original manifold. An increase suggests the point has moved to a less dense or different region.

\medskip
\noindent\emph{2. Jaccard overlap of retained neighbors:}
This metric assesses neighborhood preservation by comparing nearest neighbors before and after denoising:
\[
J(y_{\text{raw}}, y_{\text{denoised}}) = \frac{|N_{\text{raw}}(y) \cap N_{\text{denoised}}(y)|}{|N_{\text{raw}}(y) \cup N_{\text{denoised}}(y)|}
\]
where $N_{\text{raw}}(y)$ and $N_{\text{denoised}}(y)$ are the sets of $k$ nearest reference points for the raw and denoised query point. A Jaccard index of 1 indicates perfect neighborhood preservation, while 0 indicates complete neighborhood replacement. Values above 0.8 suggest strong topological consistency.

\medskip
\noindent\emph{3. Relative distortion of distances to fixed neighboring reference points:}
This metric quantifies local distance preservation by comparing distances to the query's original neighboring reference points:
\[
\text{Distortion}(y) = \frac{1}{k}\sum_{i=1}^k \left|\frac{\|y_{\text{denoised}} - x_i\|}{\|y_{\text{raw}} - x_i\|} - 1\right|
\]
where $\{x_i\}_{i=1}^k$ are the $k$ nearest reference points to the raw query point $y_{\text{raw}}$. This measures how much denoising alters local distance ratios. Distortion close to 0 indicates excellent preservation of local geometry, while larger values indicate greater deformation.

\paragraph{Extended endpoint panel and robustness analyses.}

In addition to the subset of clinically interpretable endpoints shown in the main text, we evaluated the model across a broader panel of ICD-10 outcomes derived from first-occurrence records.
Full results, including endpoints with null or inconsistent effects, are reported in Table~\ref{tab:ukb_full_panel}.

\begin{figure}[p]
  \centering
  \includegraphics[width=0.98\textwidth]{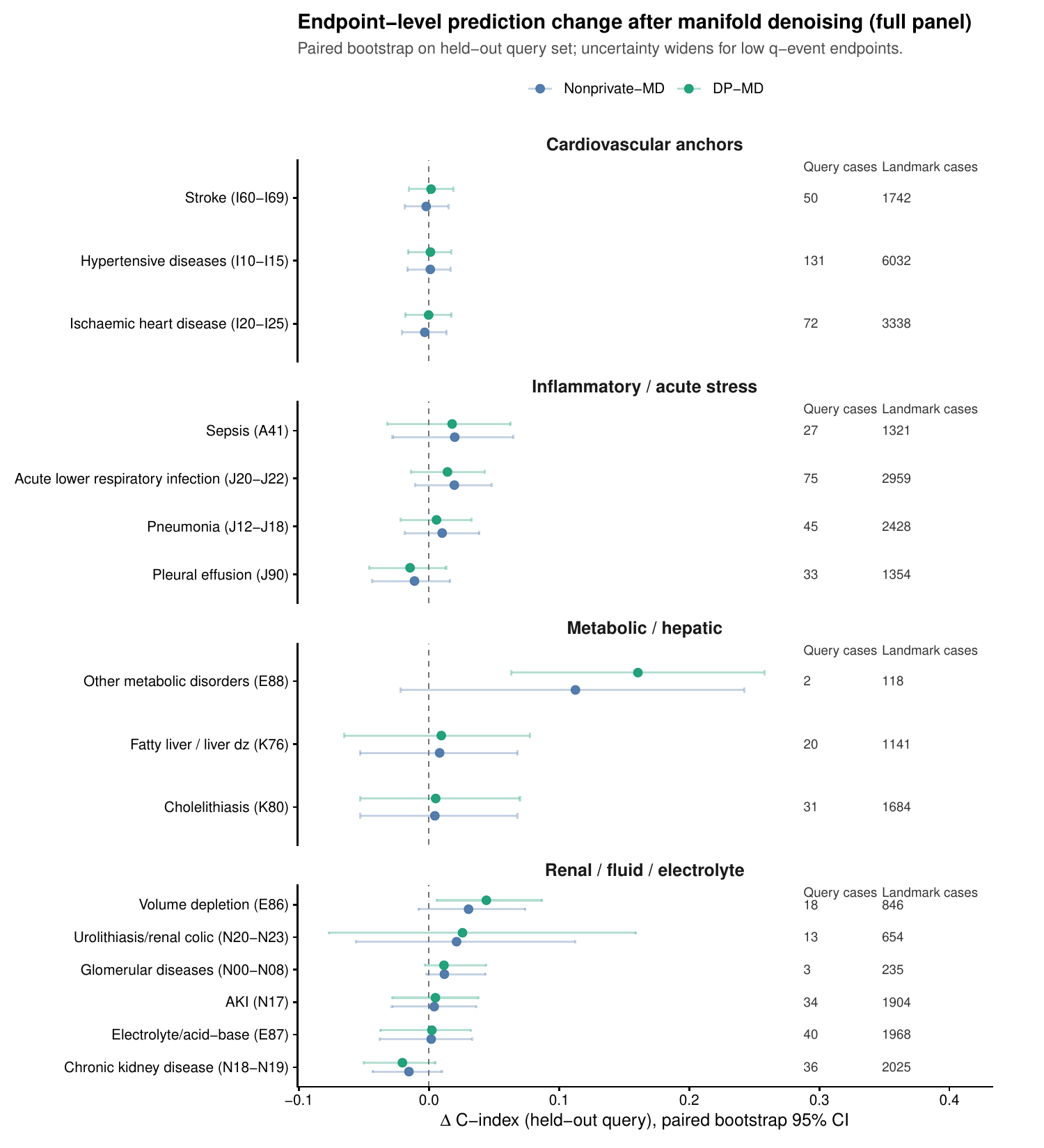}

  \caption{
  \textbf{Full ICD-coded endpoint panel.}
  Prediction changes after manifold denoising across all ICD-coded disease endpoints in the UK Biobank application.
  Each facet corresponds to one ICD category; full endpoint definitions are provided in Table \ref{tab:ukb_full_panel}.
  }
  \label{fig:S4_icd_fullpanel}
\end{figure}

\paragraph{C-index and paired bootstrap evaluation.}\emph{C-index (Harrell's concordance).}
To quantify risk-ranking performance under right censoring, we report Harrell's concordance index (C-index). 
Given a set of individuals with follow-up time $T_i$, event indicator $\delta_i\in\{0,1\}$, and a model-derived risk score $\eta_i$ (here the Cox linear predictor), the C-index estimates the probability that, among \emph{comparable} pairs, the individual who experiences the event earlier  has a higher predicted risk. 
A pair $(i,j)$ is comparable when the earlier observed time corresponds to an event (i.e., censoring does not prevent ordering). 
The C-index ranges from $0.5$ (no better than random ranking) to $1$ (perfect ranking), with ties handled by standard concordance conventions.

\medskip
\noindent\emph{Evaluation protocol.}
For each endpoint, we trained a Cox proportional hazards model on the reference set using the raw biomarker representation together with baseline covariates (age, sex, and ethnicity when available). 
Holding the fitted Cox model fixed, we computed the linear predictors on the disjoint query set under three representations: (i) Raw, (ii) {Nonprivate-MD} (non-private manifold denoising), and (iii) {DP-MD} (differentially private manifold denoising). 
We then computed Harrell's C-index on the query set for each representation.

\paragraph{Paired bootstrap for uncertainty and $\Delta$C-index.}
Uncertainty in C-index and in the performance change induced by denoising was quantified using a paired nonparametric bootstrap on the query set. 
Specifically, for each bootstrap replicate $b=1,\dots,B$, we resampled query individuals with replacement to obtain an index set $I_b$ and recomputed
\[
C^{(b)}_{\mathrm{Raw}},\quad C^{(b)}_{\mathrm{Nonprivate\text{-}MD}},\quad C^{(b)}_{\mathrm{DP\text{-}MD}}.
\]
We report percentile $95\%$ bootstrap intervals for each C-index and for the paired differences
\[
\Delta C_{\mathrm{Nonprivate\text{-}MD}} = C_{\mathrm{Nonprivate\text{-}MD}} - C_{\mathrm{Raw}}, 
\qquad
\Delta C_{\mathrm{DP\text{-}MD}} = C_{\mathrm{DP\text{-}MD}} - C_{\mathrm{Raw}}.
\]
The bootstrap is \emph{paired} in the sense that all three C-indices are computed on the same resampled individuals within each replicate, which reduces Monte Carlo noise in method-to-method differences. We bootstrap only the query set with the Cox model fixed, so the intervals reflect uncertainty in held-out performance rather than training variability.

\section{Single-cell RNA sequencing datasets analysis}
\label{app:scrna}
\paragraph{Dataset description and preprocessing.}
We evaluated the proposed method on 10 publicly available scRNA-seq datasets spanning multiple tissues and experimental protocols.
Datasets were obtained from the Gene Expression Omnibus (GEO) and ArrayExpress repositories; accession numbers are summarized in Table~\ref{tab:scrna_datasets_summary}.
Standard preprocessing was applied uniformly across datasets, including library-size normalization, log-transformation, and feature scaling.
No dataset-specific parameter tuning was performed.

\paragraph{Query/reference splitting and manifold projection.}
For each dataset, we randomly selected a query subset of size $n_q=\min(n_{\max},\lceil 2\sqrt{n}\rceil)$ and treated the remaining cells as reference points.
To control computational cost, we subsampled at most 5000 reference cells and constructed a KD-tree on the reference set for radius-based neighbor search.

Each query cell was projected using both the non-private manifold denoising procedure (Nonprivate-MD) and its differentially private counterpart (DP-MD).
Prior to projection, we checked whether a query had at least $d+1$ reference neighbors within radius $h$; if not, the query was marked as \texttt{no\_neighbor} and left unchanged.
For DP-MD, the privacy budget was split across queries, using per-query parameters $(\varepsilon/n_q,\delta/n_q)$ (same $\theta$ as main text).

\paragraph{Clustering metrics and neighborhood purity.}
All evaluations were performed on the {same} query subset for the three embeddings: \textit{Original} (preprocessed features), \textit{Nonprivate-MD} projections, and \textit{DP-MD} projections.
To avoid excluding cells due to projection failures, we adopted a conservative replacement rule: failed or non-finite projections were replaced by the corresponding original query points, ensuring identical sample sizes across conditions.

Clustering performance was evaluated using adjusted Rand index (ARI), accuracy (ACC), and
normalized mutual information (NMI), which quantify agreement between inferred clusters and
ground-truth cell-type labels from complementary perspectives.
ARI measures chance-corrected pairwise label agreement, ACC captures the fraction of correctly assigned labels under optimal matching, and NMI assesses shared information between clusters and true labels.
For fair comparison, we used the same random seed across all three methods.
We report ARI in the main text and ACC and NMI in the Supplement.

To quantify local structure preservation, we computed neighborhood purity (NP) as a $k$-nearest-neighbor label-consistency score in the embedding space.
For each query cell, we found its $k$ nearest neighbors (cosine distance; $k=\min(15,n_q-1)$, lower-bounded by 5) and computed the fraction sharing the same ground-truth label; NP is the average across all query cells.

All clustering and neighborhood metrics were averaged over 10 independent runs with different random seeds.
Complete ACC/NMI/NP results are provided in Table~\ref{tab:scRNA_clustering_all}.

\clearpage
\begin{table}[p]
\centering
\footnotesize
\caption{
\textbf{Simulation parameter settings across synthetic manifold experiments.}
Summary of sample sizes, noise levels, ambient dimensions, and privacy budgets used in the simulation studies.
For all experiments, $n$ denotes the number of reference points used for local geometry estimation and $m$ denotes the number of query points to be denoised.
Unless otherwise stated, reference points are perturbed with noise of magnitude $O(\sigma)$ and query points with noise of magnitude $O(\sqrt{\sigma})$.
Bounded $\ell_2$ noise is used by default; additional experiments under Gaussian noise with matched scale are reported in the Appendix.
}
\label{tab:sim_params}

\begin{tabularx}{\linewidth}{
>{\raggedright\arraybackslash}X
cc
>{\raggedright\arraybackslash}X
>{\raggedright\arraybackslash}X
>{\raggedright\arraybackslash}X
}
\toprule
\textbf{Manifold} & $\boldsymbol{d}$ & $\boldsymbol{D}$ & $\boldsymbol{n}$ & $\boldsymbol{\sigma}$ & $\boldsymbol{\varepsilon}$ \\
\midrule

Circle ($\mathbb{S}^1$)
& 1 & 2
& 10{,}000; 20{,}000; 30{,}000; 40{,}000; 50{,}000
& 0.05; 0.10; 0.20; 0.30; 0.40
& 0.05; 0.10; 0.30; 0.50; 0.70; 1.0; 2.0; 3.0 \\

Torus ($\mathbb{T}^2$)
& 2 & 3
& 10{,}000; 20{,}000; 30{,}000; 40{,}000; 50{,}000
& 0.05; 0.10; 0.15; 0.25; 0.35
& 0.05; 0.10; 0.30; 0.50; 0.70; 1.0; 2.0; 3.0 \\

Swiss roll
& 2 & 3
& 10{,}000; 20{,}000; 30{,}000; 40{,}000; 50{,}000
& 0.05; 0.10; 0.15; 0.25; 0.35
& 0.05; 0.10; 0.30; 0.50; 0.70; 1.0; 2.0; 3.0 \\

Sphere ($\mathbb{S}^2$)
& 2 & 
5; 10; 20; 50; 100
& 30{,}000
& 0.30
& 1.0 \\

\bottomrule
\end{tabularx}
\end{table}

\begin{table}[!t]
\centering
\footnotesize
\caption{ICD-10 endpoint panel used in the UK Biobank application. 
\textbf{This endpoint panel was pre-specified based on established 
biological relationships between systemic biomarkers and disease 
pathophysiology prior to any denoising analysis.} Endpoints are defined 
using first-occurrence ICD-10 codes; for grouped endpoints, the event 
date was taken as the earliest occurrence among listed codes. Prevalent 
cases (event date on or before baseline) were excluded from reference 
training.}
\label{tab:ukb_full_panel}

\begin{tabular}{
>{\raggedright\arraybackslash}p{0.40\linewidth}
>{\raggedright\arraybackslash}p{0.34\linewidth}
>{\raggedright\arraybackslash}p{0.18\linewidth}
}
\toprule
\textbf{Clinical label} & \textbf{ICD-10 codes} & \textbf{Axis} \\
\midrule

Other metabolic disorders (E88) & E88 & Metabolic / hepatic \\
Fatty liver / other liver disease (K76) & K76 & Metabolic / hepatic \\
Cholelithiasis (K80) & K80 & Metabolic / hepatic \\

Volume depletion (E86) & E86 & Renal / fluid / electrolyte \\
Electrolyte / acid--base disorders (E87) & E87 & Renal / fluid / electrolyte \\
Urolithiasis / renal colic (N20--N23) & N20, N21, N22, N23 & Renal / fluid / electrolyte \\
Glomerular diseases (N00--N08) & N00, N01, N02, N03, N04, N05, N06, N07, N08 & Renal / fluid / electrolyte \\
Chronic kidney disease (N18--N19) & N18, N19 & Renal / fluid / electrolyte \\
Acute kidney injury (N17) & N17 & Renal / fluid / electrolyte \\

Sepsis (A41) & A41 & Inflammatory / acute stress \\
Acute lower respiratory infection (J20--J22) & J20, J21, J22 & Inflammatory / acute stress \\
Pneumonia (J12--J18) & J12, J13, J14, J15, J16, J17, J18 & Inflammatory / acute stress \\
Pleural effusion (J90) & J90 & Inflammatory / acute stress \\

Ischaemic heart disease (I20--I25) & I20, I21, I22, I23, I24, I25 & Cardiovascular anchors \\
Stroke (I60--I69) & I60, I61, I62, I63, I64, I65, I66, I67, I68, I69 & Cardiovascular anchors \\
Hypertensive diseases (I10--I15) & I10, I11, I12, I13, I14, I15 & Cardiovascular anchors \\

\bottomrule
\end{tabular}
\end{table}

\begingroup
\fontsize{8}{10}\selectfont
\setlength{\tabcolsep}{4pt}
\renewcommand{\arraystretch}{1.10}

\begin{table*}[t]
\centering
\footnotesize
\caption{\label{tab:si_endpointset_cindex_ci}
\textbf{C-index with 95\% confidence intervals across endpoint sets.}
C-index values are reported for raw features, Nonprivate-MD, and DP-MD.
Confidence intervals were obtained via bootstrap.}

\begin{tabular}{>{\raggedright\arraybackslash}p{4.5cm}rrccc}
\toprule
Disease label 
& \makecell{Events in\\reference} 
& \makecell{Events in\\queries} 
&
\multicolumn{3}{c}{C-index (95\% CI)} \\
\cmidrule(lr){4-6}
 & & &
Raw & Nonprivate-MD & DP-MD \\
\midrule

Other metabolic disorders (E88) 
& 118 & 2 
& 0.590 [0.391,0.798] 
& 0.703 [0.611,0.795] 
& 0.751 [0.624,0.877] \\

Volume depletion (E86) 
& 846 & 18 
& 0.755 [0.658,0.847] 
& 0.786 [0.701,0.865] 
& 0.799 [0.718,0.873] \\

Urolithiasis/renal colic (N20--N23) 
& 654 & 13 
& 0.697 [0.545,0.844] 
& 0.718 [0.559,0.860] 
& 0.722 [0.571,0.866] \\

Sepsis (A41) 
& 1321 & 27 
& 0.659 [0.570,0.752] 
& 0.679 [0.578,0.778] 
& 0.677 [0.577,0.778] \\

Acute lower respiratory infection (J20--J22) 
& 2959 & 75 
& 0.630 [0.565,0.692] 
& 0.649 [0.586,0.709] 
& 0.644 [0.581,0.707] \\

Glomerular diseases (N00--N08) 
& 235 & 3 
& 0.948 [0.853,0.996] 
& 0.960 [0.893,0.995] 
& 0.960 [0.894,0.995] \\

Fatty liver/other liver disease (K76) 
& 1141 & 20 
& 0.675 [0.551,0.793] 
& 0.683 [0.562,0.808] 
& 0.685 [0.560,0.808] \\

Pneumonia (J12--J18) 
& 2428 & 45 
& 0.740 [0.667,0.809] 
& 0.750 [0.679,0.816] 
& 0.746 [0.673,0.812] \\

Cholelithiasis (K80) 
& 1684 & 31 
& 0.621 [0.527,0.715] 
& 0.625 [0.533,0.717] 
& 0.626 [0.531,0.723] \\

Acute kidney injury (N17) 
& 1904 & 34 
& 0.816 [0.732,0.891] 
& 0.820 [0.742,0.890] 
& 0.821 [0.744,0.893] \\

Electrolyte/acid-base disorders (E87) 
& 1968 & 40 
& 0.763 [0.692,0.826] 
& 0.765 [0.686,0.828] 
& 0.765 [0.689,0.826] \\

Stroke (I60--I69) 
& 1742 & 50 
& 0.787 [0.737,0.838] 
& 0.785 [0.734,0.835] 
& 0.789 [0.736,0.838] \\

Hypertensive diseases (I10--I15) 
& 6032 & 131 
& 0.715 [0.671,0.758] 
& 0.716 [0.671,0.760] 
& 0.716 [0.671,0.761] \\

Ischaemic heart disease (I20--I25) 
& 3338 & 72 
& 0.707 [0.655,0.758] 
& 0.704 [0.652,0.760] 
& 0.707 [0.654,0.762] \\

Pleural effusion (J90) 
& 1354 & 33 
& 0.766 [0.687,0.834] 
& 0.755 [0.672,0.828] 
& 0.752 [0.667,0.824] \\

Chronic kidney disease (N18--N19) 
& 2025 & 36 
& 0.861 [0.799,0.914] 
& 0.846 [0.788,0.898] 
& 0.841 [0.777,0.895] \\

\bottomrule
\end{tabular}
\end{table*}

\begin{table}[t]
\centering
\footnotesize
\caption{\textbf{Summary of scRNA-seq datasets used in clustering analysis.}}
\label{tab:scrna_datasets_summary}

\begin{tabular}{>{\raggedright\arraybackslash}p{2.8cm}l>{\raggedright\arraybackslash}p{3.2cm}ccc}
\toprule
\textbf{Data} & \textbf{Accession Code} & \textbf{Tissue} & \makecell{\# Cell\\Types} & \makecell{\# Cells} & \makecell{\# Genes}\\
\midrule

Goolam \citep{Goolam2016} 
& E-MTAB-3321 
& Embryos (\textit{M. musculus}) 
& 5 & 124 & 41428 \\

Schaum2 \citep{Schaum2018} 
& GSE132042 
& Intestine (\textit{M. musculus}) 
& 5 & 1887 & 17985 \\

Yan \citep{Yan2013} 
& GSE36552 
& Embryonic stem cells (\textit{H. sapiens}) 
& 6 & 90 & 20214 \\

He \citep{He2022} 
& E-MTAB-11265 
& Embryonic/fetal lungs (\textit{H. sapiens}) 
& 5 & 649 & 16122 \\

Pollen \citep{Pollen2014} 
& SRP041736 
& Cerebral cortex (\textit{H. sapiens}) 
& 11 & 249 & 8869 \\

Wang \citep{Wang2016} 
& GSE83139 
& Pancreatic endocrine (\textit{H. sapiens}) 
& 7 & 457 & 19950 \\

Muraro \citep{Muraro2016} 
& GSE85241 
& Pancreas (\textit{H. sapiens}) 
& 10 & 2126 & 19127 \\

Zeisel \citep{Zeisel2015} 
& GSE60361 
& Cerebral cortex (\textit{M. musculus}) 
& 7 & 3005 & 19972 \\

Enge \citep{Enge2017} 
& GSE81547 
& Pancreas (\textit{H. sapiens}) 
& 7 & 2476 & 22256 \\

Nowicki \citep{Nowicki2024} 
& EGAD00001010074 
& Esophagus/stomach (\textit{H. sapiens}) 
& 5 & 3282 & 33234 \\

\bottomrule
\end{tabular}
\end{table}

\begin{table*}[t]
\centering
\footnotesize
\caption{
\textbf{Clustering performance on single-cell RNA-seq datasets.}
Clustering accuracy (ACC), normalized mutual information (NMI), and neighborhood purity (NP)
are reported for \textit{Original}, \textit{Nonprivate-MD}, and \textit{DP-MD} as mean (SE)
across replicates.
}
\label{tab:scRNA_clustering_all}

\footnotesize
\setlength{\tabcolsep}{6pt}

\textbf{(A) Accuracy (ACC)}
\vspace{0.3em}

\begin{tabular}{lccc}
\toprule
\textbf{Dataset} & \textbf{Original} & \textbf{Nonprivate-MD} & \textbf{DP-MD} \\
\midrule
Goolam   & 0.861 (0.034) & 0.874 (0.030) & 0.865 (0.033) \\
Schaum2  & 0.806 (0.014) & 0.809 (0.016) & 0.820 (0.013) \\
Yan      & 0.847 (0.020) & 0.879 (0.021) & 0.863 (0.021) \\
He       & 0.837 (0.015) & 0.853 (0.014) & 0.849 (0.017) \\
Pollen   & 0.928 (0.013) & 0.944 (0.012) & 0.941 (0.013) \\
Wang     & 0.881 (0.015) & 0.895 (0.018) & 0.895 (0.018) \\
Muraro   & 0.855 (0.024) & 0.853 (0.023) & 0.859 (0.017) \\
Zeisel   & 0.767 (0.014) & 0.796 (0.021) & 0.787 (0.013) \\
Enge     & 0.827 (0.008) & 0.830 (0.007) & 0.830 (0.007) \\
Nowicki  & 0.798 (0.017) & 0.805 (0.016) & 0.800 (0.015) \\
\midrule
\textbf{Mean $\pm$ SE} & \textbf{0.841 $\pm$ 0.014} & \textbf{0.854 $\pm$ 0.015} & \textbf{0.851 $\pm$ 0.014} \\
\bottomrule
\end{tabular}

\vspace{1.2em}

\textbf{(B) Normalized Mutual Information (NMI)}
\vspace{0.3em}

\begin{tabular}{lccc}
\toprule
\textbf{Dataset} & \textbf{Original} & \textbf{Nonprivate-MD} & \textbf{DP-MD} \\
\midrule
Goolam   & 0.787 (0.041) & 0.833 (0.041) & 0.829 (0.043) \\
Schaum2  & 0.636 (0.021) & 0.657 (0.021) & 0.665 (0.020) \\
Yan      & 0.883 (0.010) & 0.907 (0.014) & 0.887 (0.012) \\
He       & 0.725 (0.019) & 0.748 (0.020) & 0.740 (0.025) \\
Pollen   & 0.959 (0.008) & 0.970 (0.007) & 0.968 (0.007) \\
Wang     & 0.826 (0.015) & 0.852 (0.021) & 0.852 (0.021) \\
Muraro   & 0.823 (0.020) & 0.824 (0.020) & 0.824 (0.017) \\
Zeisel   & 0.735 (0.017) & 0.751 (0.021) & 0.747 (0.015) \\
Enge     & 0.724 (0.009) & 0.734 (0.011) & 0.733 (0.011) \\
Nowicki  & 0.700 (0.014) & 0.713 (0.016) & 0.712 (0.015) \\
\midrule
\textbf{Mean $\pm$ SE} & \textbf{0.780 $\pm$ 0.030} & \textbf{0.799 $\pm$ 0.030} & \textbf{0.796 $\pm$ 0.029} \\
\bottomrule
\end{tabular}

\vspace{1.2em}

\textbf{(C) Neighborhood Purity (NP)}
\vspace{0.3em}

\begin{tabular}{lccc}
\toprule
\textbf{Dataset} & \textbf{Original} & \textbf{Nonprivate-MD} & \textbf{DP-MD} \\
\midrule
Goolam   & 0.564 (0.033) & 0.568 (0.035) & 0.568 (0.035) \\
Schaum2  & 0.755 (0.004) & 0.747 (0.006) & 0.747 (0.006) \\
Yan      & 0.281 (0.014) & 0.281 (0.014) & 0.281 (0.014) \\
He       & 0.778 (0.011) & 0.784 (0.012) & 0.784 (0.012) \\
Pollen   & 0.248 (0.006) & 0.248 (0.006) & 0.248 (0.006) \\
Wang     & 0.677 (0.016) & 0.674 (0.016) & 0.675 (0.016) \\
Muraro   & 0.749 (0.007) & 0.736 (0.008) & 0.736 (0.008) \\
Zeisel   & 0.734 (0.009) & 0.735 (0.008) & 0.735 (0.008) \\
Enge     & 0.758 (0.012) & 0.749 (0.012) & 0.749 (0.012) \\
Nowicki  & 0.744 (0.006) & 0.744 (0.007) & 0.744 (0.007) \\
\midrule
\textbf{Mean $\pm$ SE} & \textbf{0.629 $\pm$ 0.064} & \textbf{0.627 $\pm$ 0.063} & \textbf{0.627 $\pm$ 0.063} \\
\bottomrule
\end{tabular}

\end{table*}
\end{document}